\newcommand{\cmark}{\ding{51}}%
\newcommand{\xmark}{\ding{55}}%
\newcommand{\Acal}{\mathcal{A}}
\newcommand{\Real}{\mathbb{R}}
\newcommand{\norm}[1]{\|#1\|_2} 
\newcommand{\wprime}{\mathbf{w}'} 
\theoremstyle{plain}
\newtheorem{theorem}{Theorem}[section]
\newtheorem{lemma}[theorem]{Lemma}
\newtheorem{proposition}[theorem]{Proposition}
\theoremstyle{definition}
\theoremstyle{remark}
\newcommand{\I}{\mathbf{I}}
\newcommand{\g}{\mathbf{g}}
\newcommand{\x}{\mathbf{x}}
\newcommand{\y}{\mathbf{y}}
\newcommand{\w}{\mathbf{w}}
\newcommand{\thetab}{\boldsymbol{\theta}}
\newcommand{\epsilonb}{\boldsymbol{\epsilon}}
\definecolor{C0}{rgb}{0.121569, 0.466667, 0.705882}
\definecolor{C1}{rgb}{1.000000, 0.498039, 0.054902}
\definecolor{C2}{rgb}{0.172549, 0.627451, 0.172549}
\definecolor{C3}{rgb}{0.839216, 0.152941, 0.156863}
\definecolor{C4}{rgb}{0.580392, 0.403922, 0.741176}
\definecolor{C5}{rgb}{0.549020, 0.337255, 0.294118}
\definecolor{C6}{rgb}{0.890196, 0.466667, 0.760784}
\definecolor{C7}{rgb}{0.498039, 0.498039, 0.498039}
\definecolor{C8}{rgb}{0.737255, 0.741176, 0.133333}
\definecolor{C9}{rgb}{0.090196, 0.745098, 0.811765}
\definecolor{trolleygrey}{rgb}{0.5, 0.5, 0.5}
\definecolor{darkpastelgreen}{rgb}{0.01, 0.75, 0.24}
\definecolor{darkpink}{rgb}{0.91, 0.33, 0.5}
\definecolor{alizarin}{rgb}{0.82, 0.1, 0.26}
\definecolor{americanrose}{rgb}{1.0, 0.01, 0.24}
\definecolor{myblue}{rgb}{0, 0.45, 0.74}
\definecolor{myorange}{rgb}{0.85, 0.32, 0.1}
\definecolor{mygreen}{rgb}{0, 0.62, 0.45}
\definecolor{lightgrey}{rgb}{0.85, 0.85, 0.85}
\definecolor{lightgreen}{rgb}{0.85, 0.95, 0.85}
\definecolor{lightyellow}{rgb}{0.97, 0.97, 0.85}
\definecolor{lightorange}{rgb}{0.98, 0.92, 0.84}
\definecolor{lightpink}{rgb}{0.98, 0.88, 0.90}
\definecolor{extremelylightcolor}{rgb}{0.98, 0.90, 0.86}
\definecolor{extremelylightblue}{rgb}{0.90, 0.95, 0.98}
\definecolor{extremelylightgreen}{rgb}{0.90, 0.98, 0.92}
\definecolor{extremelylightmint}{rgb}{0.90, 0.98, 0.95}
\definecolor{lightblue}{rgb}{0.90, 0.95, 0.98}
\begin{document}

\title{Enhancing Diffusion Model Stability for Image Restoration via Gradient Management}


\author{Hongjie Wu}  \orcid{0009-0007-3203-1521}
 \email{wuhongjie0818@gmail.com}
\affiliation{
 \institution{College of Computer Science,\\ Sichuan University} 
 \city{Chengdu} \country{China}
 }
 \additionalaffiliation{\institution{Engineering Research Center of Machine Learning and Industry Intelligence, Ministry of Education, China}}

\author{Mingqin Zhang}  \orcid{0009-0008-7273-9302}  
\authornotemark[1] 
 \email{zhangmingqin@stu.scu.edu.cn}
 \affiliation{%
\institution{College of Computer Science,\\ Sichuan University} 
 \city{Chengdu} \country{China}
 }
 
\author{Linchao He}   \orcid{0000-0002-0562-3026}
 \email{hlc@stu.scu.edu.cn}
\affiliation{
 \institution{National Key Laboratory of Fundamental Science on Synthetic Vision, Sichuan University} 
 \city{Chengdu} \country{China}
 }

  \author{Ji-Zhe Zhou} 
  \authornotemark[1]
   \email{jzzhou@scu.edu.cn} 
   \orcid{0000-0002-2447-1806}
\affiliation{
\institution{College of Computer Science,\\ Sichuan University}  
 \city{Chengdu} \country{China}
 }

\author{Jiancheng Lv} 
\authornotemark[1]  
\orcid{0000-0001-6551-3884}
 \email{lvjiancheng@scu.edu.cn}
\authornote{Corresponding author.}
\affiliation{
\institution{College of Computer Science,\\ Sichuan University}  
 \city{Chengdu}
 \country{China}
 }

\renewcommand\shortauthors{Hongjie Wu et al.}

\begin{abstract}
Diffusion models have shown remarkable promise for image restoration by leveraging powerful priors. Prominent methods typically frame the restoration problem within a Bayesian inference framework, which iteratively combines a denoising step with a likelihood guidance step. 
However, the interactions between these two components in the generation process remain underexplored.
In this paper, we analyze the underlying gradient dynamics of these components and identify significant instabilities. Specifically, we demonstrate conflicts between the prior and likelihood gradient directions, alongside temporal fluctuations in the likelihood gradient itself. 
We show that these instabilities disrupt the generative process and compromise restoration performance.
To address these issues, we propose Stabilized Progressive Gradient Diffusion (SPGD), a novel gradient management technique. SPGD integrates two synergistic components: (1) a progressive likelihood warm-up strategy to mitigate gradient conflicts; and (2) adaptive directional momentum (ADM) smoothing to reduce fluctuations in the likelihood gradient. Extensive experiments across diverse restoration tasks demonstrate that SPGD significantly enhances generation stability, leading to state-of-the-art performance in quantitative metrics and visually superior results. 
Code is available at \href{https://github.com/74587887/SPGD}{https://github.com/74587887/SPGD}.

\end{abstract}


\begin{CCSXML}
<ccs2012>
   <concept>
       <concept_id>10010147.10010178.10010224.10010245</concept_id>
       <concept_desc>Computing methodologies~Computer vision problems</concept_desc>
       <concept_significance>500</concept_significance>
       </concept>
   <concept>
   <concept>
       <concept_id>10010147.10010257</concept_id>
       <concept_desc>Computing methodologies~Machine learning</concept_desc>
       <concept_significance>100</concept_significance>
       </concept>
 </ccs2012>
\end{CCSXML}

\ccsdesc[500]{Computing methodologies~Computer vision problems}
\ccsdesc[100]{Computing methodologies~Machine learning}




\maketitle

\begin{figure}[t]
    \centering
  \begin{tikzpicture}
        \node[anchor=south west,inner sep=0] (image) at (0,0) {\includegraphics[width=\linewidth]{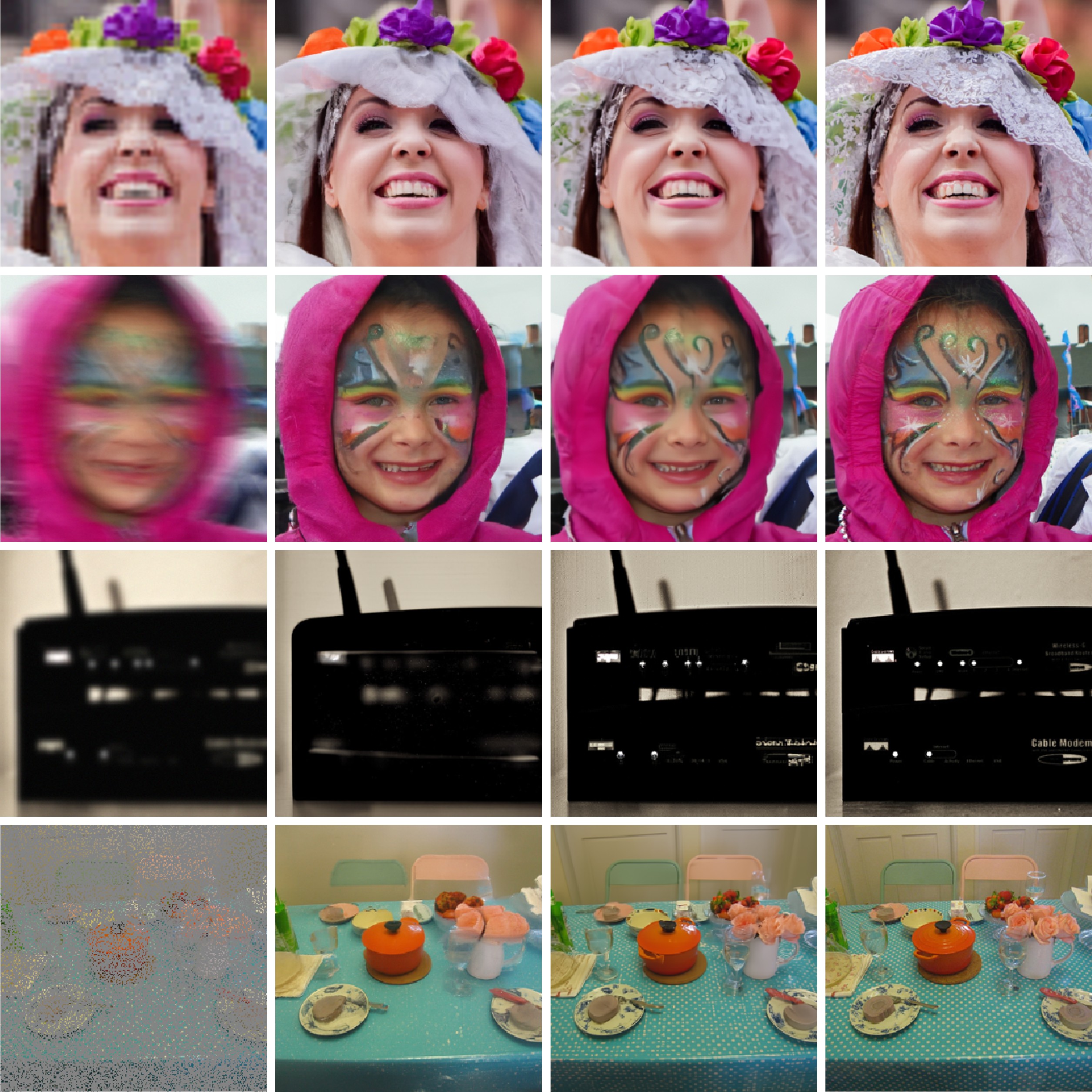}};
        \begin{scope}[x={(image.south east)},y={(image.north west)}]
            \node at (0.12,1.03) {{ Input}};
            \node at (0.375,1.03) {{DPS}};
            \node at (0.62,1.03) {{ Ours}};
            \node at (0.875,1.03) {{Reference}};
        \end{scope}
    \end{tikzpicture}
        \caption{Illustrative comparison of diffusion-based restoration results: DPS (without gradient management), Ours (with the proposed gradient management), and the reference. Our method demonstrates improved restoration quality, particularly in challenging restoration scenarios.}
    \label{fig:conceptual_intro}
    \vspace{-0.3cm}
\end{figure}

\section{Introduction}
Image restoration~\cite{zhang2017beyond,lehtinen2018noise2noise,liang2021swinir}, encompassing tasks such as inpainting~\cite{yu2018generative,elharrouss2020image}, deblurring~\cite{chen2024hierarchical,wu2024id}, and super-resolution~\cite{saharia2022image,wang2024exploiting}, is a fundamental problem in computer vision and image processing. The goal is to recover a high-quality, clean image $\mathbf{x}_0$ from a degraded measurement $\mathbf{y}$, which is often modeled by the following equation:
\begin{equation}
    \mathbf{y} = \mathcal{A}(\mathbf{x}_0) + \mathbf{n},
\label{eq:degradation_model}
\end{equation}
where $\mathcal{A}$ represents a known degradation operator (e.g., a blurring kernel, a downsampling matrix, or a masking operator), and $\mathbf{n} \sim \mathcal{N}(\mathbf{0},{\sigma_y^2}\mathbf{I})$ represents additive noise. Traditional approaches to solve this ill-posed inverse problem~\cite{milanfar2012tour, zhang2017beyond, tai2017memnet, zhang2021plug} often rely on handcrafted priors and regularization terms, which may not fully capture the complex statistics of natural images.

Diffusion models~\citep{sohl2015deep, ho2020denoising, song2021scorebased} have emerged as a transformative force in generative modeling, achieving state-of-the-art results in image synthesis~\citep{rombach2022high,poole2022dreamfusion,saharia2022palette,meng2023distillation}. Their remarkable ability to model complex data distribution and generating realistic images, has naturally led to significant interest in adapting them for image restoration~\citep{choi2021ilvr, wen2024unpaired,lugmayr2022repaint,carrillo2023diffusart,cao2024deep}.
Among these methods, a dominant paradigm leverages the Bayesian framework~\citep{song2022solving,chung2023diffusion,song2023pseudoinverseguided,dou2024diffusion}, guiding the reverse diffusion process using the conditional score $\nabla_{\mathbf{x}_t} \log p_t(\mathbf{x}_t|\mathbf{y})$. This score elegantly decomposes via Bayes' theorem into two key components:
\begin{equation}
\underbrace{\nabla_{\mathbf{x}_t} \log p_t(\mathbf{x}_t|\mathbf{y})}_{\text{Conditional Score}} = \underbrace{\nabla_{\mathbf{x}_t} \log p_t(\mathbf{x}_t)}_{\text{Prior Score}} + \underbrace{\nabla_{\mathbf{x}_t} \log p_t(\mathbf{y}|\mathbf{x}_t)}_{\text{Likelihood Guidance}}.
\label{eq:bayes}
\end{equation}
Here, the prior score enforces conformity to the learned distribution of natural images, while the likelihood guidance ensures consistency with the observed measurement $\mathbf{y}$.


Existing methods~\citep{chung2023diffusion, song2023pseudoinverseguided, peng2024improving, wu2024diffusion} typically apply the gradients corresponding to the prior score and likelihood guidance iteratively, without explicitly accounting for their potentially detrimental interactions. 
However, despite the success of these approaches, we argue that their performance is limited by inherent \textbf{conflicts and instabilities within the gradient dynamics} governing the reverse process. Our detailed analysis in \cref{sec:Gradient}, which examines the reverse update from a gradient decomposition perspective, reveals two critical issues:
\begin{enumerate}
    \item \textbf{Gradient Conflict:} The gradient enforcing data consistency (likelihood guidance) often conflicts with the gradient enforcing the natural image prior in direction, particularly in the early stages of generation.
    \item \textbf{Gradient Fluctuation:} The likelihood guidance gradient exhibits significant temporal instability, fluctuating abruptly between consecutive timesteps.
\end{enumerate}
Such issues can hinder the effective combination of prior knowledge and data fidelity, leading to suboptimal updates and ultimately compromising generation quality. Addressing these instabilities is essential for unlocking the full potential of diffusion models in demanding restoration tasks.

This paper introduces \textbf{Stabilized Progressive Gradient Diffusion (SPGD)}, a novel gradient management technique designed to address these identified instabilities. SPGD integrates two synergistic components into the reverse diffusion step:
\begin{enumerate}
\item \textbf{Progressive Likelihood Warm-Up:} To mitigate gradient conflict, we introduce multiple small, iterative updates guided only by the likelihood gradient before each main denoising step. This allows the estimate to progressively adapt to measurement constraints, reducing potential directional clashes with the prior gradient.
\item \textbf{Adaptive Directional Momentum (ADM) Smoothing:} To counteract likelihood gradient instability, we employ ADM smoothing. This technique dynamically adjusts momentum based on directional consistency, effectively dampening erratic fluctuations and promoting more stable update guidance from the likelihood term. 
\end{enumerate}
These components work in concert to stabilize the reverse trajectory, leading to more effective and robust optimization, even with fewer computational resources.
Extensive experiments across diverse restoration tasks demonstrate that SPGD significantly enhances reconstruction quality, yielding substantial improvements in restoration metrics alongside visually superior results.

In summary, our primary contributions are:
\begin{itemize}
    \item We present a novel gradient-centric perspective on the reverse process in diffusion models for image restoration, identifying the critical, yet previously under-addressed, issues of gradient conflict and fluctuation.
    \item We propose Stabilized Progressive Gradient Diffusion (SPGD), a novel gradient management technique incorporating a progressive warm-up phase and momentum-based smoothing to mitigate these issues.
    \item We conduct extensive experiments on various image restoration tasks across two different datasets, demonstrating that SPGD significantly improves reconstruction quality both quantitatively and qualitatively.
\end{itemize}

\section{Background}
\label{sec:Background}

\begin{figure*}[t]
    \centering
        \begin{subfigure}{\textwidth}
        \centering
        \includegraphics[width=0.24\linewidth]{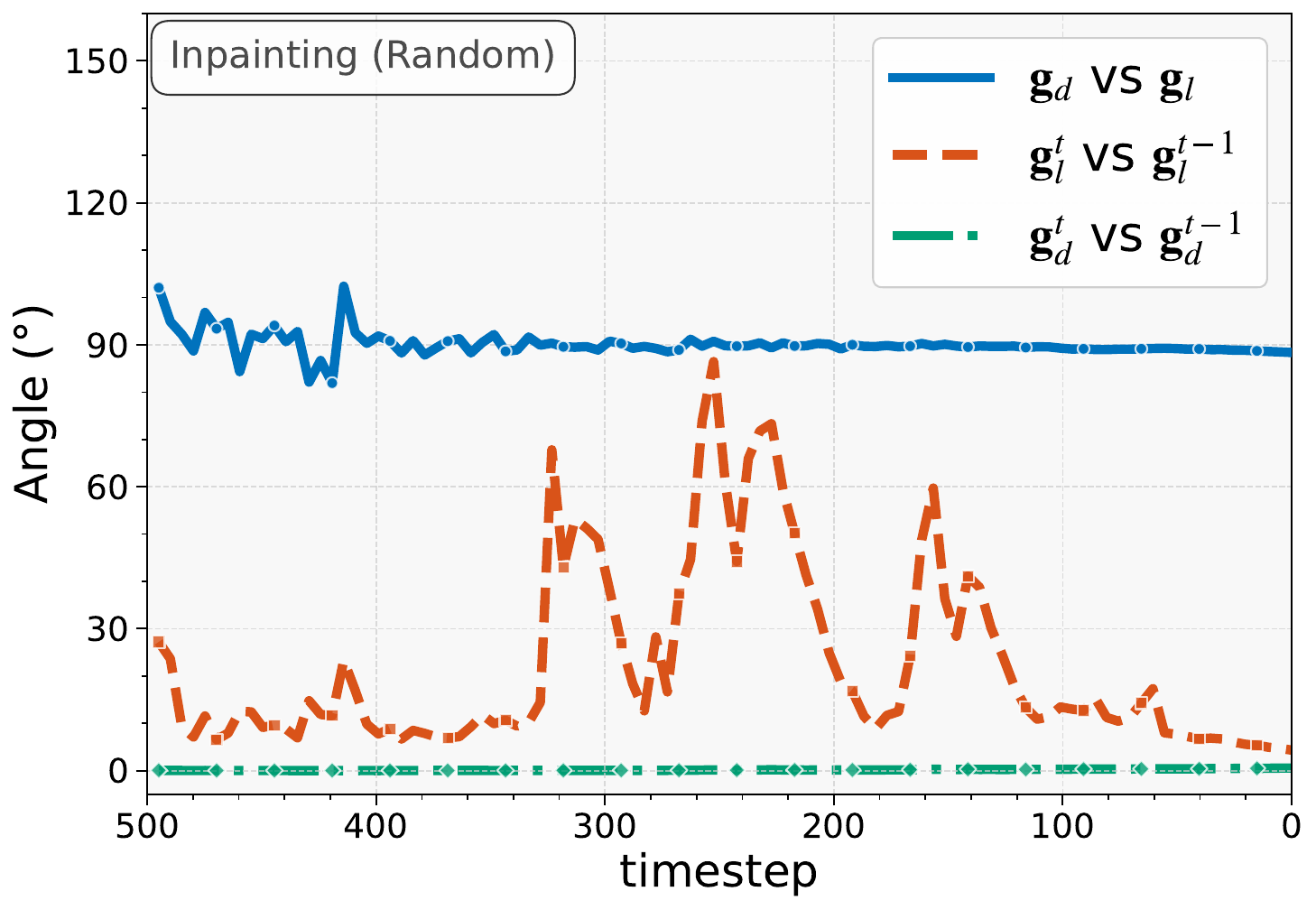}
        \includegraphics[width=0.24\linewidth]{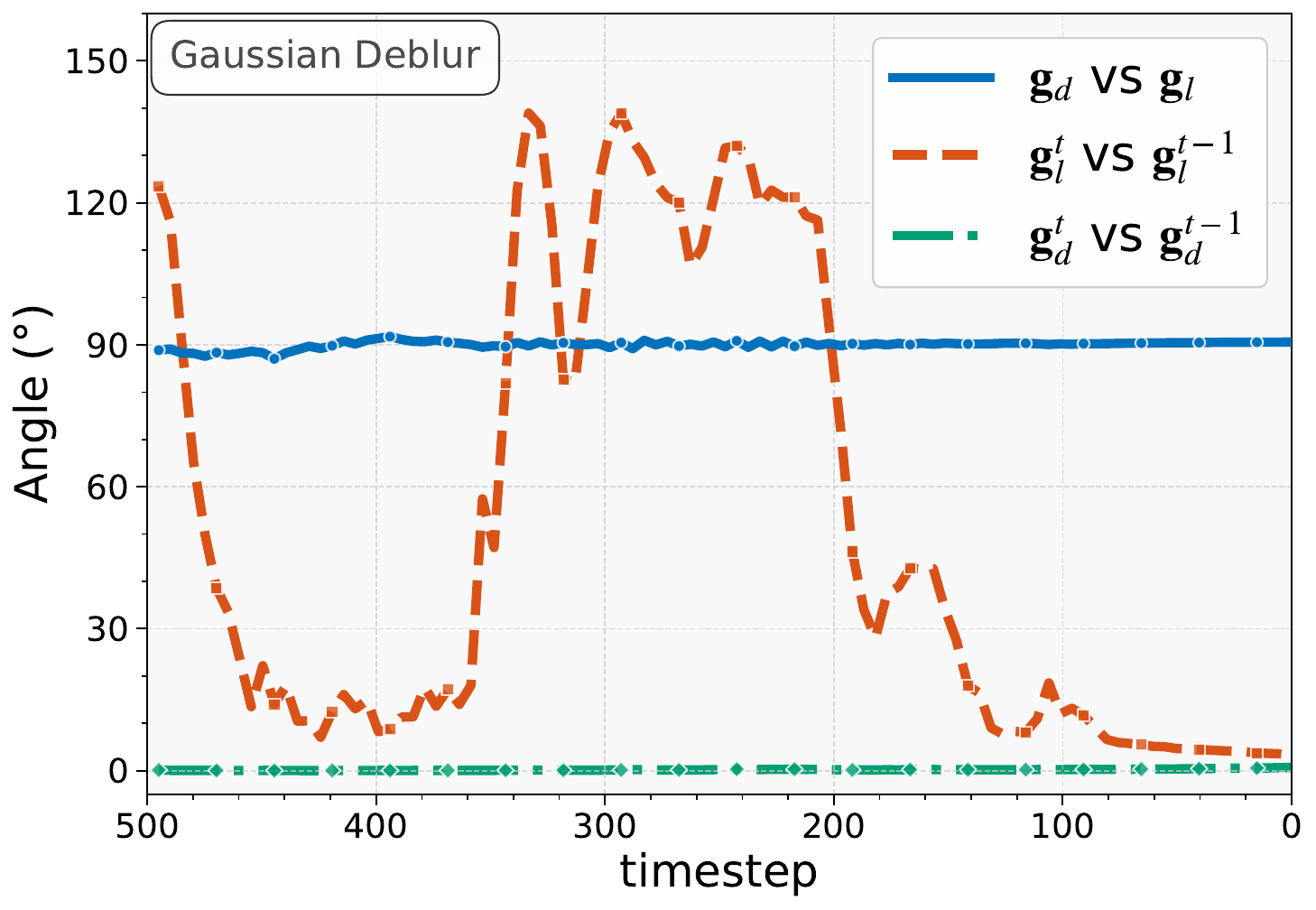}
        \includegraphics[width=0.24\linewidth]{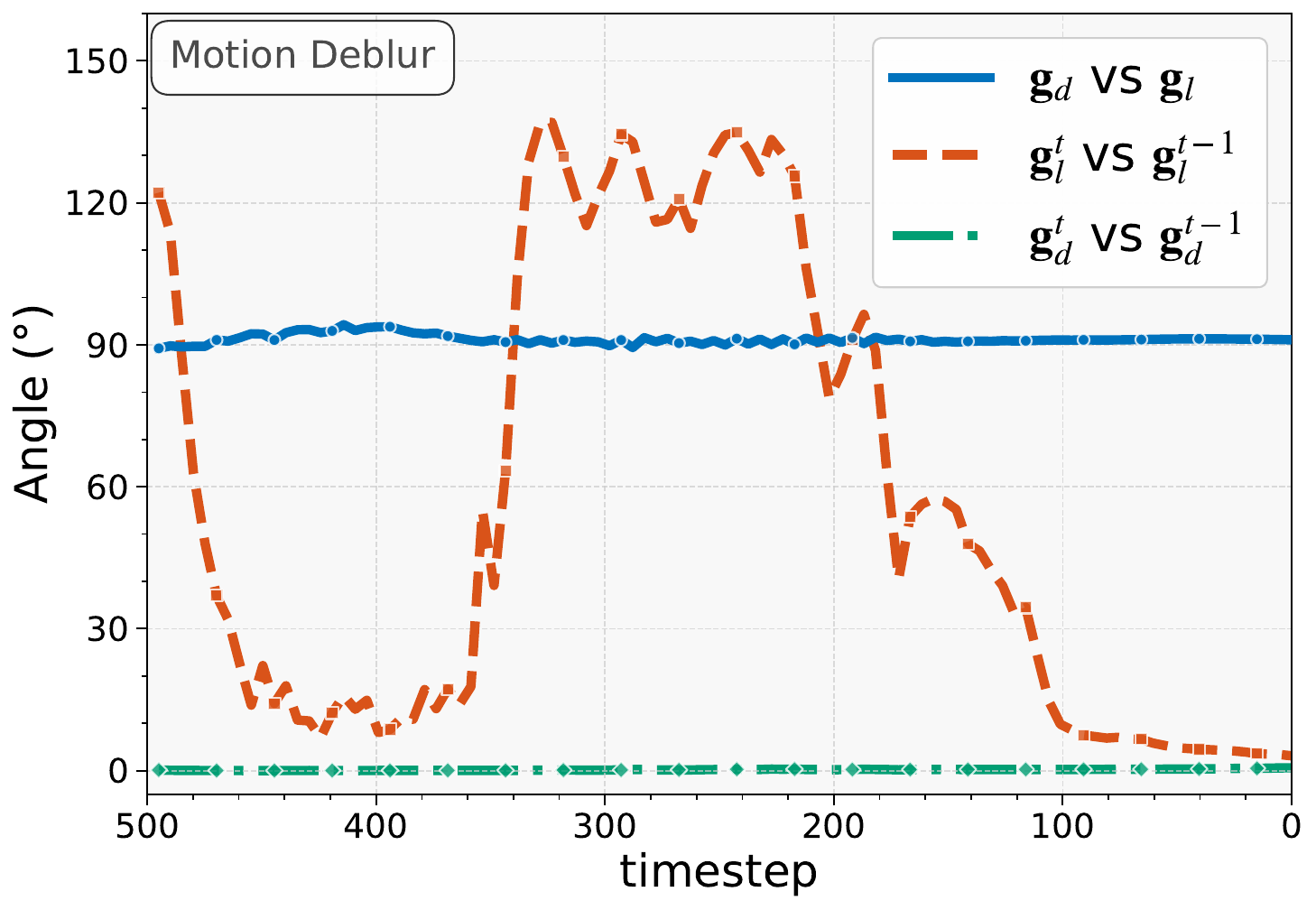}
        \includegraphics[width=0.24\linewidth]{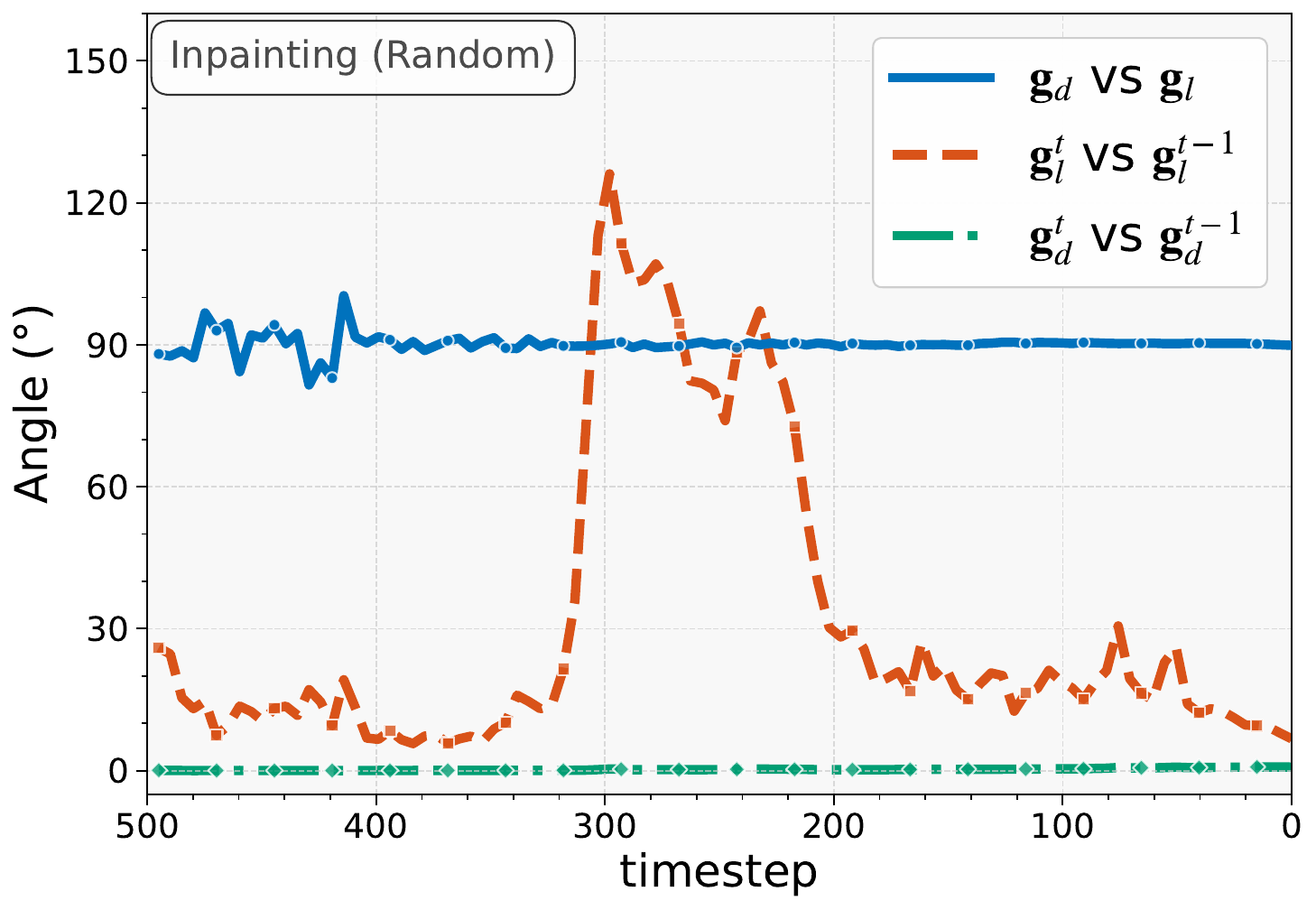}
        \caption{Gradient angles dynamics without smoothing.}
    \end{subfigure}
    \begin{subfigure}{\textwidth}
        \centering
        \includegraphics[width=0.24\linewidth]{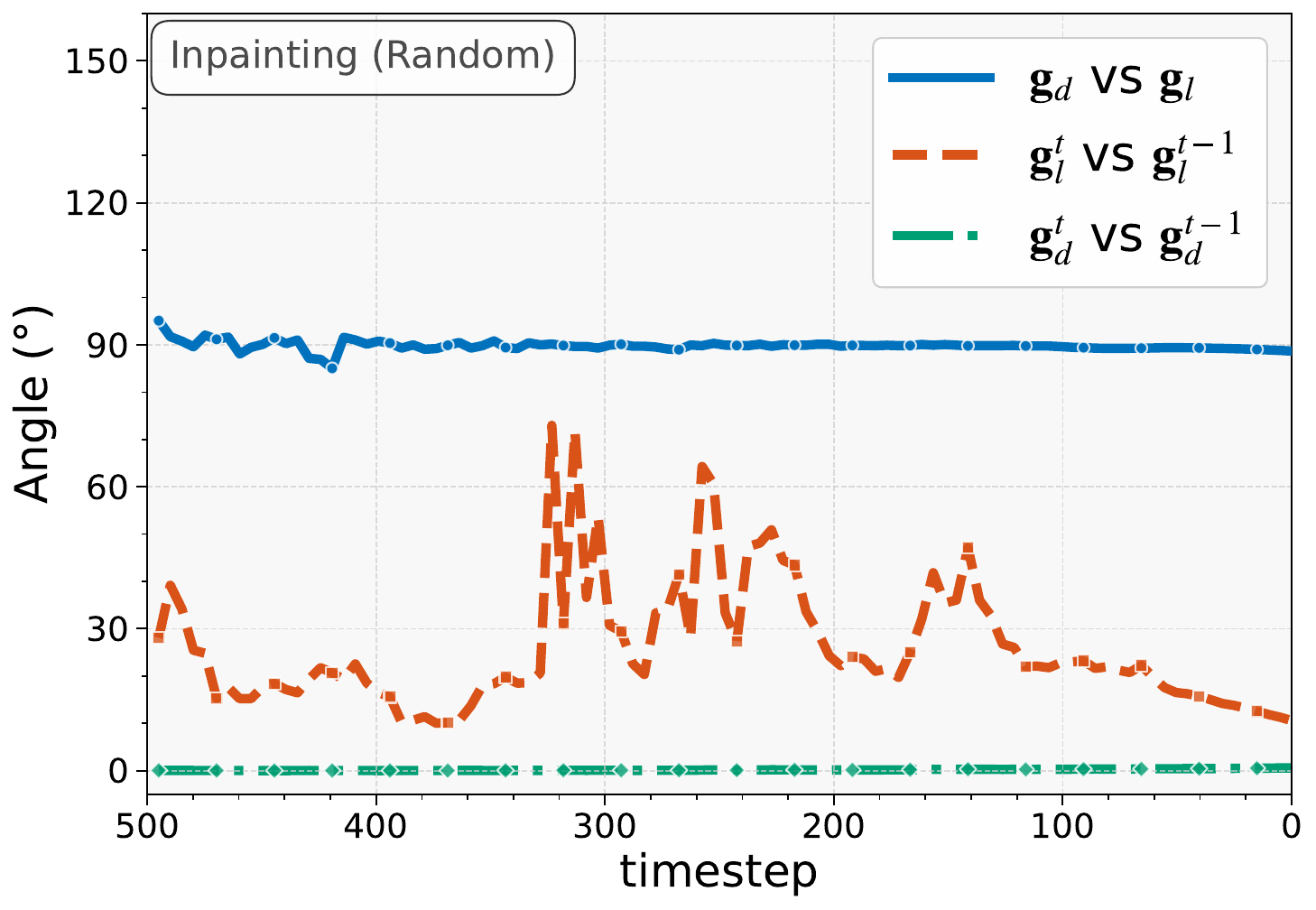}
        \includegraphics[width=0.24\linewidth]{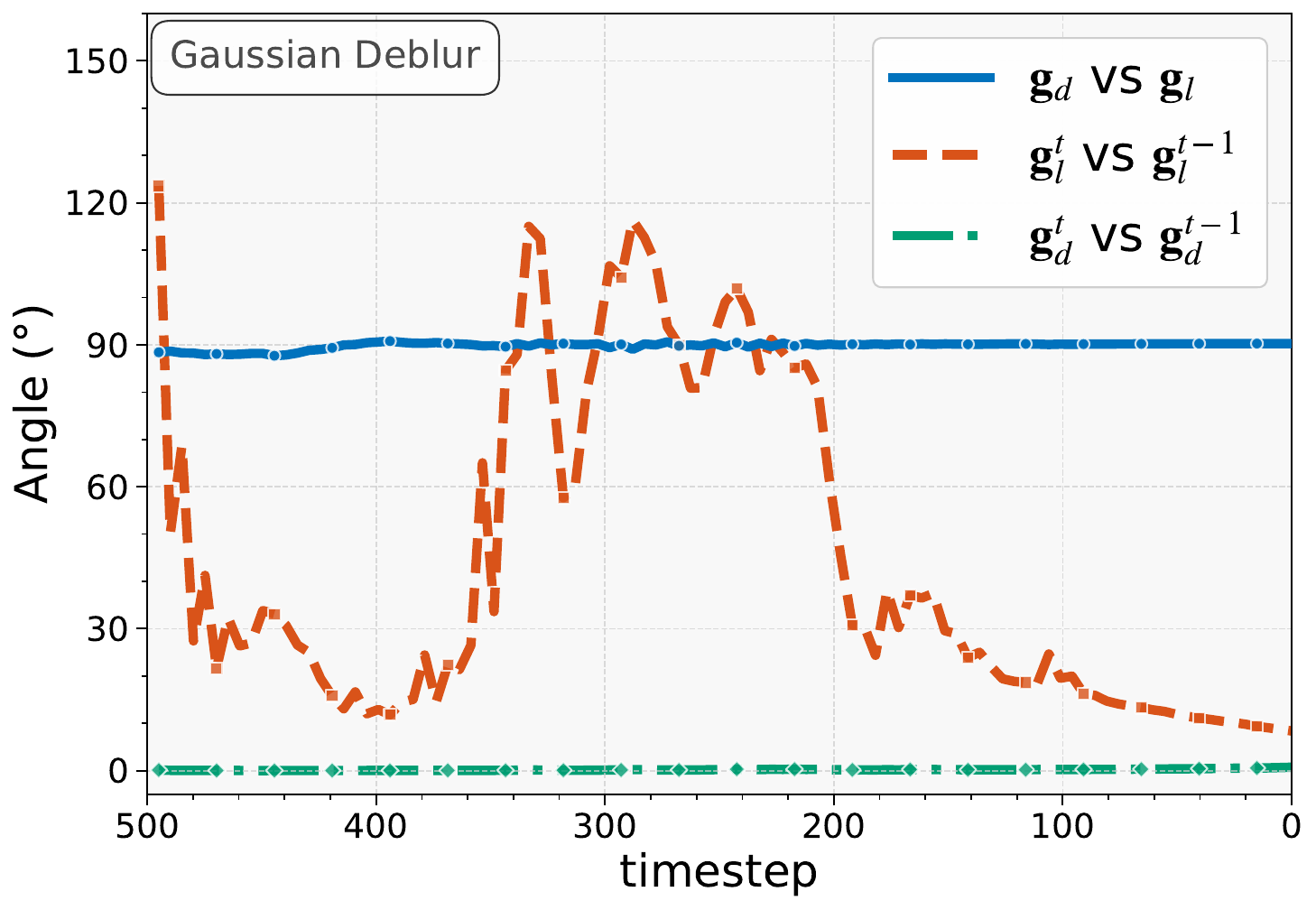}
        \includegraphics[width=0.24\linewidth]{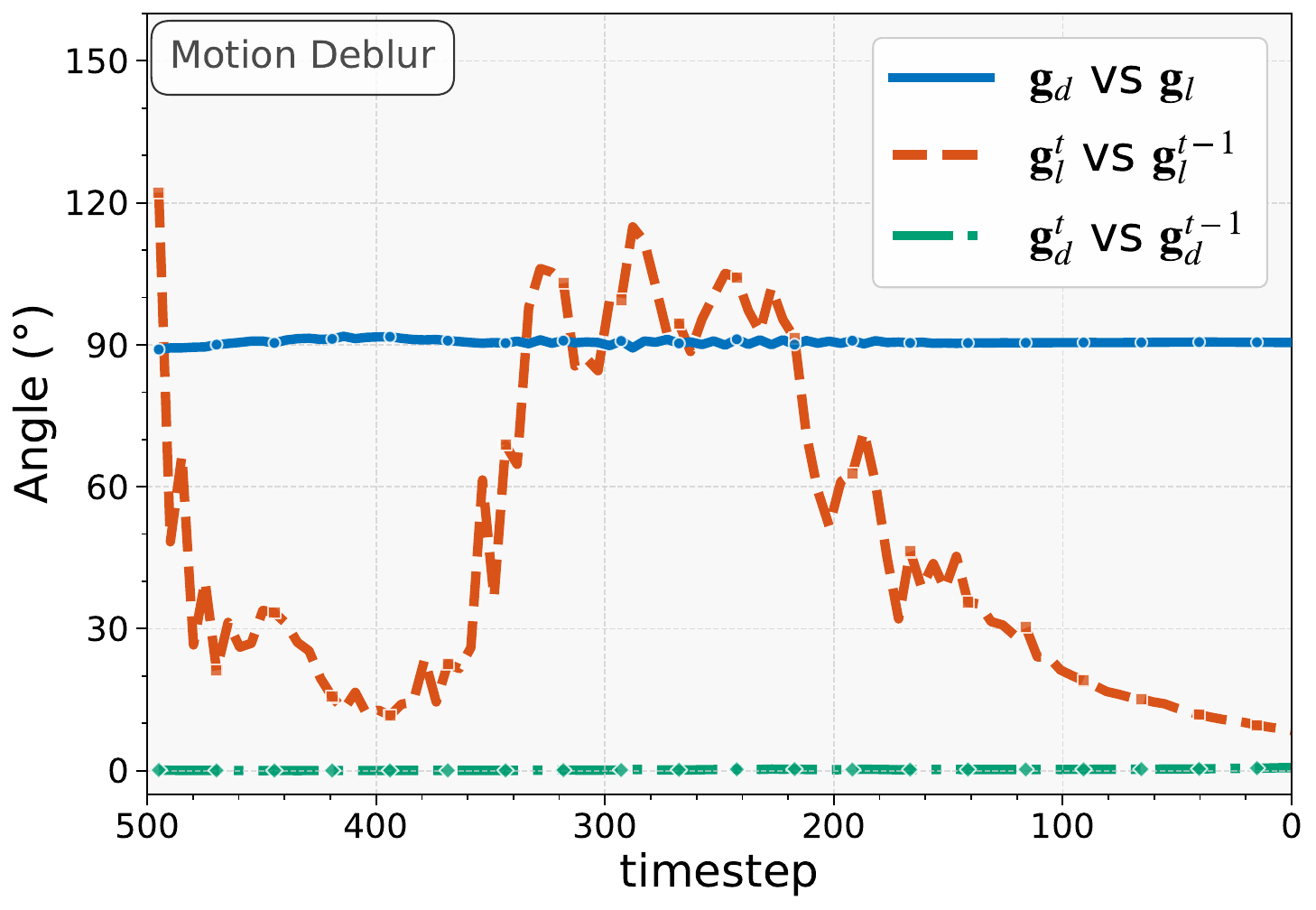}
        \includegraphics[width=0.24\linewidth]{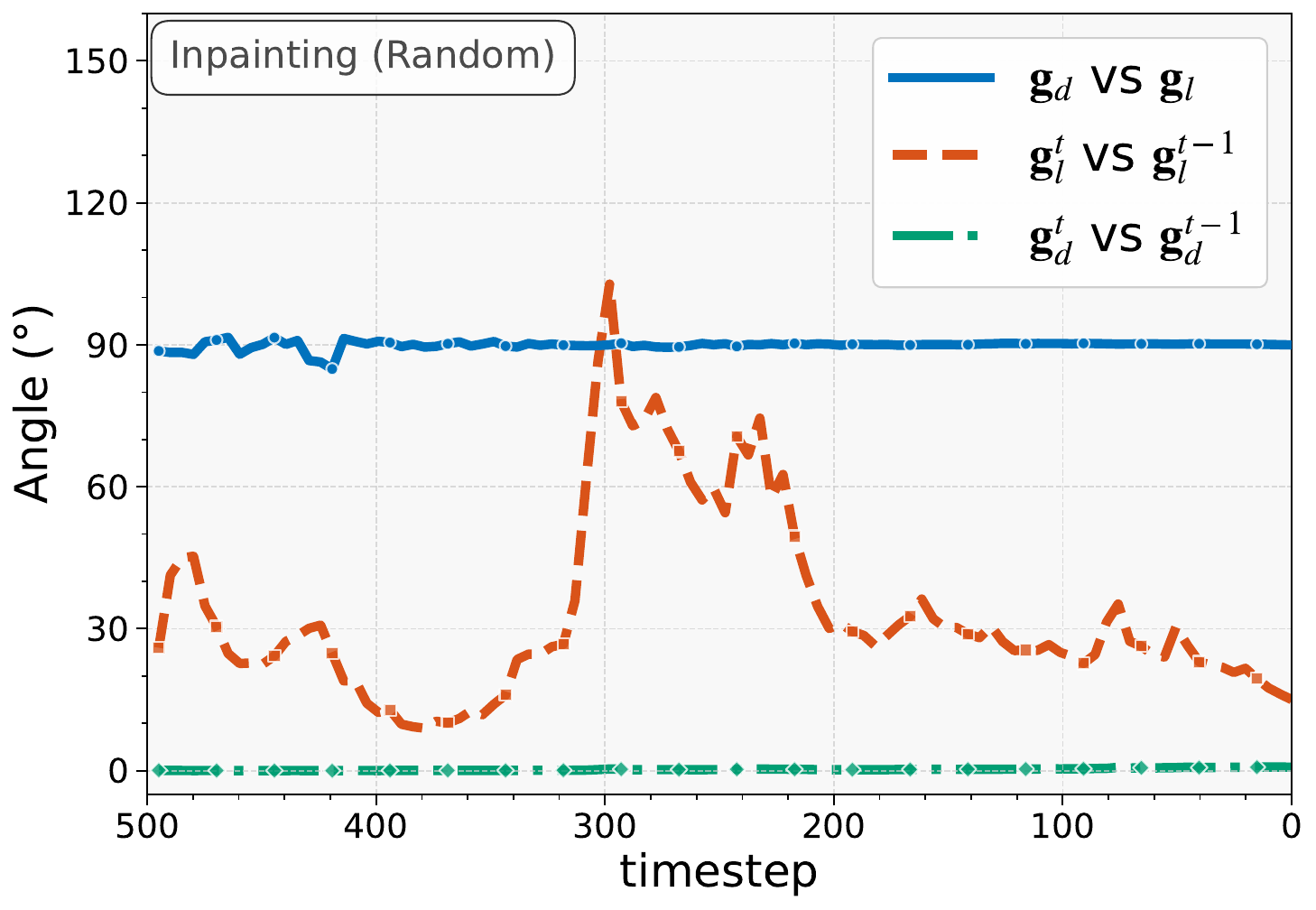}
        \caption{Gradient angles dynamics with ADM smoothing.}
        \vspace{-0.2cm}
    \end{subfigure}
    \caption{Angular relationships of gradients during the reverse process across different tasks, illustrating the gradient angles dynamics (a) without smoothing and (b) with our ADM smoothing.  \textcolor{myblue}{\textbf{Blue lines}}: angle between $\mathbf{g}_l$ and $\mathbf{g}_d$. \textcolor{myorange} {\textbf{Orange lines}}: angle between $\mathbf{g}_l(\mathbf{x}_t)$ and $\mathbf{g}_l(\mathbf{x}_{t-1})$. \textcolor{mygreen}{\textbf{Green lines}}: angle between $\mathbf{g}_d(\mathbf{x}_t)$ and $\mathbf{g}_d(\mathbf{x}_{t-1})$.}
    \label{fig:gradient_show}
    \vspace{-0.2cm}
\end{figure*}

\subsection{Diffusion Models}
Diffusion models~\cite{sohl2015deep, ho2020denoising, song2021scorebased}, also known as score-based models, operate by defining a stochastic process that gradually perturbs data samples into a noise distribution and subsequently learns to reverse this process to generate samples from the desired data distribution.
Consider a $T$-step forward process, where the noised sample $\mathbf{x}_t \in \mathbb{R}^d$ at time step $t$ can be modeled from the previous state $\mathbf{x}_{t-1}$:
\begin{equation}\label{eq:ddpmforward1}
    q(\mathbf{x}_{t}|\mathbf{x}_{t-1})=\mathcal{N}(\mathbf{x}_{t};\sqrt{1-\beta_{t}}\mathbf{x}_{t-1},\beta_{t}\mathbf{I}),    
\end{equation}
where $\mathcal{N}$ denotes the Gaussian distribution, and $\beta_{t}$ is a pre-defined parameter increasing with $t$. Through reparameterization, $\mathbf{x}_t$ can be directly expressed in terms of the original data $\mathbf{x}_0$ via:
\begin{equation}
    q(\mathbf{x}_{t}|\mathbf{x}_{0})=\mathcal{N}(\mathbf{x}_{t};\sqrt{\bar{\alpha}_{t}}\mathbf{x}_{0},(1-\bar{\alpha}_{t})\mathbf{I}),
\end{equation}
where $\bar{\alpha}_{t} = \prod_{i=0}^{t}\alpha_{i}$ and $\alpha_{t} = 1- \beta_{t}$.

The training objective of diffusion models is typically defined as:
\begin{equation}\label{eq:epsilon-matching}
    \mathbf{L}({\mathbf{\theta}}) = \mathbb{E}_{\mathbf{x}_0 \sim q(\mathbf{x}_0), \boldsymbol{\epsilon} \sim \mathcal{N} (\mathbf{0}, \mathbf{I}), t \sim \mathbb{U}(\{1,...,T\})} [\|\boldsymbol{\epsilon} - \boldsymbol{\epsilon}_{\mathbf{\theta}}(\mathbf{x}_t,t)  \|^2],
\end{equation}
where $\mathbf{x}_0$ is sampled from training data and $\mathbf{x}_t \sim q(\mathbf{x}_t|\mathbf{x}_0)$. Once the network $\boldsymbol{\epsilon}_\mathbf{\theta}(\mathbf{x}_t,t)$ is trained, a clean sample can be derived by evaluating the generative reverse process step by step.

It is worth noting that one can adopt DDIM \cite{song2021denoising} to perform the reverse sampling, as it preserves the same marginal distributions $q(\mathbf{x}_t|\mathbf{x}_0)$ as DDPM. The DDIM sampling procedure can be formulated as:
\begin{equation}
\mathbf{x}_{t-1} = \sqrt{\bar{\alpha}_{t-1}} \hat{\mathbf{x}}_{0}(\mathbf{x}_t) + \sqrt{1-\bar{\alpha}_{t-1}-\sigma_t^2}\mathbf{\epsilonb}_\mathbf{\theta}(\mathbf{x}_t, t)+ \sigma_t \epsilonb,
\end{equation}
where $\hat{\mathbf{x}}_{0}(\mathbf{x}_t) =  (\mathbf{x}_t-\sqrt{1-\bar{\alpha}_t}\boldsymbol{\epsilon}_\mathbf{\theta}(\mathbf{x}_{t}, t))/{\sqrt{\bar{\alpha}_t}}$ is the predicted clean image given $\mathbf{x}_t$.
The parameter $\sigma_t$ governs the variance of the reverse sampling process. Specifically, setting  $\sigma_t = 0$ yields a deterministic DDIM sampling, while $\sigma_t = \sqrt{ \frac{1 - \bar{\alpha}_{t-1}}{1 - \bar{\alpha}_t} \cdot \beta_t }$ recovers the original DDPM.




\subsection{Diffusion-Based Solvers for Image Restoration}

The strong generative power of diffusion models makes them excellent candidates for solving image restoration and inverse problem~\citep{xia2023diffir,ye2024learning,zhang2024diffusion}. 
To implement this, the reverse process is guided by the conditional score $\nabla_{\mathbf{x}_t} \log p_t(\mathbf{x}_t|\mathbf{y})$. As shown in \cref{eq:bayes}, this involves the unconditional score $\nabla_{\mathbf{x}_t} \log p_t(\mathbf{x}_t)$  and the likelihood term $\nabla_{\mathbf{x}_t} \log p_t(\mathbf{y}|\mathbf{x}_t)$.

The unconditional score $\nabla_{\mathbf{x}_t} \log p_t(\mathbf{x}_t)$ is directly related to $\frac{-\boldsymbol{\epsilon}_{\thetab}(\mathbf{x}_t, t)}{\sqrt{1-\bar{\alpha}_t}}$~\cite{song2021scorebased}. However, the likelihood term  is intractable because $p_t(\mathbf{y}|\mathbf{x}_t)$ is unknown, since there only exists an explicit connection between $\mathbf{y}$ and $\mathbf{x}_0$, not $\mathbf{x}_t$. To deal with this, a family of methods avoids explicit likelihood computation through interleaving optimization~\cite{song2024resample,zhu2023denoising,li2024decoupled,wu2024principled} or projecting the estimate onto a measurement-consistent subspace~\cite{kawar2022denoising,chung2022improving,wang2023zeroshot,cardoso2024monte}.

The another category of methods direct compute the likelihood under some mild assumptions~\cite{rout2023solving,rout2024beyond}. Of these,
\citet{chung2023diffusion} propose Diffusion Posterior Sampling (DPS) to approximate the likelihood using a Laplacian approximation:
$p_t(\mathbf{y}|\mathbf{x}_t) \simeq p_t(\mathbf{y}|\hat{\mathbf{x}}_{0}(\mathbf{x}_t))$, $\hat{\mathbf{x}}_{0}(\mathbf{x}_t)$ is the predicted clean image of $\mathbf{x}_t$ via Tweedie's formula~\cite{stein1981estimation,efron2011tweedie}.
Then the likelihood gradient can be calculated by:
\begin{equation}
\nabla_{\mathbf{x}_t} \log p_t(\mathbf{y}|\mathbf{x}_t) \simeq  -{\zeta} \nabla_{\mathbf{x}_t} \|\mathbf{y} - \mathcal{A}(\hat{\mathbf{x}}_{0}(\mathbf{x}_t))\|_2^2,
\end{equation}
where $\zeta$ is a tunable step size that controls the strength of the guidance.
The complete DPS algorithm with DDIM deterministic update is shown below:
\begin{equation}
\begin{aligned} \label{eq:dps_ddim}
\mathbf{x}_{t-1}^{\prime} &= \sqrt{\bar{\alpha}_{t-1}} \big(\frac{\mathbf{x}_t-\sqrt{1-\bar{\alpha}_t}\boldsymbol{\epsilon}_\mathbf{\theta}(\mathbf{x}_{t}, t)}{\sqrt{\bar{\alpha}_t}} \big) + \sqrt{1-\bar{\alpha}_{t-1}}\boldsymbol{\epsilon}_\mathbf{\theta}(\mathbf{x}_t, t),\\
\mathbf{x}_{t-1} &= \mathbf{x}_{t-1}^{\prime} -{\zeta} \nabla_{\mathbf{x}_t} \|\mathbf{y} - \mathcal{A}(\hat{\mathbf{x}}_{0}(\mathbf{x}_t))\|_2^2.
\end{aligned}
\end{equation}

By iteratively updating via likelihood matching combined with denoising, one can effectively recover high-quality estimates of $\mathbf{x}_0$ from degraded observations $\mathbf{y}$.  However, as we will show, the direct application of these updates can be unstable.

\section{A Gradient Perspective on Reverse Process} 
\label{sec:Gradient}
\subsection{Reverse Update Decomposition}

Despite the empirical success of Bayesian guidance frameworks, the underlying behavior and interaction between the prior score and likelihood guidance warrant closer examination. In this section, we adopt a gradient-centric perspective~\cite{dinh2023pixelasparam} to analyze the reverse update process.

Consider the DPS likelihood guidance combined with the deterministic DDIM update ($\sigma_t=0$), we rearrange the update from $\mathbf{x}_t$ to $\mathbf{x}_{t-1}$ to explicitly highlight the contributing gradient terms. The insightful decomposition (see \cref{app:derivation} for derivation) is:
\begin{equation}
\begin{aligned} \label{eq:grad_view}
\mathbf{x}_{t-1} =& \underbrace{\frac{1}{\sqrt{\alpha_t}} \mathbf{x}_t}_{\text{fixed scaling}} - \underbrace{\left(\frac{\sqrt{1-\bar{\alpha}_t}}{\sqrt{{\alpha}_t}}-\sqrt{1-\bar{\alpha}_{t-1}} \right) \boldsymbol{\epsilon}_{\boldsymbol\theta}(\mathbf{x}_t, t)}_{\text{denoising gradient}~(\mathbf{g}_d)} \\ 
&- {\zeta}\underbrace{ \nabla_{\mathbf{x}_t} \|\mathbf{y} - \mathcal{A}(\hat{\mathbf{x}}_{0}(\mathbf{x}_t))\|_2^2}_{\text{likelihood gradient}~(\mathbf{g}_l)}, 
\end{aligned}
\end{equation}
Here, we explicitly define:
\begin{itemize}
    \item $\mathbf{g}_d(\mathbf{x}_t)$: The \textbf{denoising gradient}, derived from the learned prior $\boldsymbol{\epsilon}_{\thetab}$, guiding the sample towards the natural image manifold.
    \item $\mathbf{g}_l(\mathbf{x}_t)$: The \textbf{likelihood gradient}, enforcing consistency with the measurement $\mathbf{y}$ via the reconstruction error.
\end{itemize}
The fixed scaling term primarily adjusts the magnitude of $\mathbf{x}_t$. The core dynamics influencing the image content arise from the interplay between $\mathbf{g}_d$ and $\mathbf{g}_l$. While previous works~\citep{chung2023diffusion,song2023pseudoinverseguided,peng2024improving,wu2024diffusion} implicitly combine these effects, analyzing their individual dynamics reveals potential sources of performance degradation.


\subsection{Gradient Conflicts and Fluctuations}

We empirically examine the angular relationships between $\mathbf{g}_d$ and $\mathbf{g}_l$, as well as the temporal consistency of each gradient. The results are shown in \cref{fig:gradient_show} (see \cref{app:more} for more results). Our analysis reveals two critical issues:

(1) \textbf{Gradient Conflict:}  The angle between $\mathbf{g}_d(\mathbf{x}_t)$ and $\mathbf{g}_l(\mathbf{x}_t)$ (\textcolor{myblue}{\textbf{blue lines}} in \cref{fig:gradient_show}) frequently diverges from 90 degrees in the early stage, implying that the prior direction conflicts with the data constraint direction. While these conflicts diminish as $t$ decreases~\footnote{two randomly chosen vectors tend to be orthogonal in high dimensions~\citep{ledoux2001concentration,talagrand1995concentration}}, their presence in the early stages can disrupt the image structure formation~\cite{singh2022conditioning,everaert2024exploiting}, a crucial phase for overall reconstruction quality.

(2) \textbf{Likelihood Gradient Fluctuation:}  The temporal consistency between $\mathbf{g}_l(\mathbf{x}_t)$ and $\mathbf{g}_l(\mathbf{x}_{t+1})$ (\textcolor{myorange}{\textbf{orange lines}} in \cref{fig:gradient_show}) often exhibits pronounced instability in the intermediate stages.
We speculate that $\mathbf{g}_l$ is derived from $\|\mathbf{y} - \mathcal{A}(\hat{\mathbf{x}}_{0}(\mathbf{x}_t))\|_2^2$, which can be highly sensitive to small changes in the estimate $\hat{\mathbf{x}}_{0}(\mathbf{x}_t)$.
These erratic fluctuations hinder smooth convergence towards data consistency and can potentially amplify errors over time.

In contrast, the denoising gradient $\mathbf{g}_d$ derived from the well-trained $\boldsymbol{\epsilon}_{\thetab}$ demonstrates remarkable temporal stability (visualized by the \textcolor{mygreen}{\textbf{green lines}} in \cref{fig:gradient_show}), providing a consistent guidance from the learned prior.


These observed conflicts and fluctuations in the gradient dynamics are not merely theoretical concerns; they manifest as tangible artifacts in reconstructed images, slower convergence rates, and quantifiable reductions in evaluation metrics (see \cref{sec:experiments}). This motivates the development of a method that explicitly manages these gradient dynamics for more stable and effective restoration.

\section{Proposed Method}
 
Motivated by our analysis in \cref{sec:Gradient}, we propose \textbf{Stabilized Progressive Gradient Diffusion (SPGD)}, a novel gradient management technique to enhance the stability and effectiveness of the reverse diffusion process. SPGD introduces two synergistic components: a progressive likelihood warm-up strategy and a momentum-based likelihood gradient smoothing technique.



\subsection{Progressive Warm-Up Phase}
To mitigate the conflicts between the denoising gradient $\mathbf{g}_d$ and the likelihood gradient $\mathbf{g}_l$, SPGD introduces a \textit{progressive warm-up} phase for likelihood update before each primary denoising step.

Specifically, within a single conceptual reverse step from time $t$ to $t-1$, starting from the current estimate $\mathbf{x}_t^{(0)} = \mathbf{x}_t$, we perform $N$ iterative refinement steps focusing solely on the likelihood gradient using a reduced step size:
\begin{equation}
\mathbf{x}_t^{(j+1)} = \mathbf{x}_t^{(j)} - \frac{\zeta}{N} \widetilde{\mathbf{g}}_l(\mathbf{x}_t^{(j)}), \quad \text{for } j = 0, 1, \ldots, N-1,
\label{eq:warmup_with_momentum}
\end{equation}
where $\widetilde{\mathbf{g}}_l(\mathbf{x}_t^{(j)})$ is the ADM-smoothed likelihood gradient (detailed in \cref{eq:momentum_update}), and $\zeta/N$ serves as a small learning rate for each warm-up step.  This multi-step process allows the estimate $\mathbf{x}_t$ to gradually adapt towards satisfying the measurement constraint $\mathbf{y}$ \textit{before} the potentially conflicting denoising gradient $\mathbf{g}_d$ is applied. Moreover, the small step size helps prevent drastic deviations from the learned data manifold~\citep{alkhouri2024sitcom,huang2022multi}, fostering a more controlled adjustment.

After these $N$ warm-up iterations yield $\mathbf{x}_t^{(N)}$, the standard denoising update is applied to obtain the initial estimate for the next timestep, using $\mathbf{x}_t^{(N)}$ as input:
\begin{equation}
\mathbf{x}_{t-1}^{(0)} = 
\frac{1}{\sqrt{\alpha_t}} \mathbf{x}_t^{(N)} -\left(\frac{\sqrt{1-\bar{\alpha}_t}}{\sqrt{{\alpha}_t}}-\sqrt{1-\bar{\alpha}_{t-1}} \right) \boldsymbol{\epsilon}_{\boldsymbol\theta}(\mathbf{x}_t^{(N)}, t),
\label{eq:denoising_update}
\end{equation}
which is a standard denoising update with diffusion model, i.e.  DDIM in this paper. 
Given the observed stability of the denoising gradient $\mathbf{g}_d$, this multi-step warm-up mitigates the instability risk arising from potential conflicts, providing a more refined input $\mathbf{x}_t^{(N)}$ for the reliable denoising operation.


\begin{algorithm}[t]
\caption{Stabilized Progressive Gradient Diffusion}
\begin{algorithmic}[1] \label{algo:algo}
\REQUIRE Measurements $\y$, degrade operator $\mathcal{A}(\cdot)$, network $\boldsymbol{\epsilon}_{\theta}(\cdot)$, parameters $\zeta$
\STATE Initialize $\x_T \sim \mathcal{N}(\mathbf{0}, \I)$
\FOR{$t = T,  \dots, 1$}
    \FOR{$j = 0, \dots, N-1$}
        \STATE $\mathbf{g}_l(\mathbf{x}_t^{(j)}) = \nabla_{\mathbf{x}_t^{(j)}} \|\mathbf{y} - \mathcal{A}(\hat{\mathbf{x}}_{0}(\mathbf{x}_t^{(j)}))\|_2^2$
        \IF{j=0} \STATE $\widetilde{\mathbf{g}}_l(\mathbf{x}_t^{(0)})=\mathbf{g}_l(\mathbf{x}_t^{(0)})$
        \ELSE
            \STATE $\alpha_j =  (\text{sim}\big( \widetilde{\mathbf{g}}_l(\mathbf{x}_t^{(j-1)}),\, \mathbf{g}_l(\mathbf{x}_t^{(j)}) \big)+1)/2,$
            \STATE $\widetilde{\mathbf{g}}_l(\mathbf{x}_t^{(j)}) = \alpha_j \beta \, \widetilde{\mathbf{g}}_l(\mathbf{x}_t^{(j-1)}) + (1 - \alpha_j \beta) \,\mathbf{g}_l(\mathbf{x}_t^{(j)})$  \textcolor[rgb]{0.40,0.40,0.40}{\textit{// adaptive momentum-based gradient smoothing}}
        \ENDIF
       \STATE  $\mathbf{x}_t^{(j+1)} \gets \mathbf{x}_t^{(j)} - \frac{\zeta}{N} \widetilde{\mathbf{g}}_l(\mathbf{x}_t^{(j)})$ \textcolor[rgb]{0.40,0.40,0.40}{\textit{// warm-up likelihood update}}
    \ENDFOR
    \STATE $\mathbf{g}_d(\mathbf{x}_t^{(N)}) = \left( {\sqrt{1-\bar{\alpha}_t}}/{\sqrt{{\alpha}_t}}-\sqrt{1-\bar{\alpha}_{t-1}}\right) \boldsymbol{\epsilon}_{\theta}(\mathbf{x}_t^{(N)}, t)$
    \STATE  $\mathbf{x}_{t-1}^{(0)} = \frac{1}{\sqrt{\alpha_t}} \mathbf{x}_t^{(N)} - \mathbf{g}_d(\mathbf{x}_t^{(N)}) $
    \textcolor[rgb]{0.40,0.40,0.40}{\textit{// DDIM denoising update}}
\ENDFOR
\RETURN $\x_0$
\end{algorithmic}
\end{algorithm}

\subsection{Momentum-Based Likelihood Gradient Smoothing}

To counteract the temporal fluctuation observed in the likelihood gradient $\mathbf{g}_l$, SPGD incorporates an \textbf{Adaptive Directional Momentum (ADM)} smoothing technique. This produces the stabilized likelihood gradient $\widetilde{\mathbf{g}}_l$ by incorporating historical information, which is then used in the warm-up phase in \cref{eq:warmup_with_momentum}.

Standard momentum methods~\citep{polyak1964some, sutskever2013importance, he2024faststablediffusioninverse, wang2024boosting} use an exponentially weighted moving average to smooth gradients. However, constant momentum can slow adaptation when the optimal direction genuinely changes. Our proposed ADM enhances this by dynamically adjusting the momentum strength based on the directional consistency between the current gradient and the accumulated momentum.

Specifically, let $\mathbf{g}_l(\mathbf{x}_t^{(j)}) = \nabla_{\mathbf{x}_t^{(j)}} \|\mathbf{y} - \mathcal{A}(\hat{\mathbf{x}}_{0}(\mathbf{x}_t^{(j)}))\|_2^2$ be the raw likelihood gradient at warm-up step $j$. The ADM-smoothed gradient $\widetilde{\mathbf{g}}_l(\mathbf{x}_t^{(j)})$ is computed recursively:
\begin{equation}
\widetilde{\mathbf{g}}_l(\mathbf{x}_t^{(j)}) = \alpha_j \beta \, \widetilde{\mathbf{g}}_l(\mathbf{x}_t^{(j-1)}) + (1 - \alpha_j \beta) \,\mathbf{g}_l(\mathbf{x}_t^{(j)}),
\label{eq:momentum_update}
\end{equation}
with initialization $\widetilde{\mathbf{g}}_l(\mathbf{x}_t^{(0)}) = \mathbf{g}_l(\mathbf{x}_t^{(0)})$. Here, $\beta \in [0, 1)$ is the pre-defined base momentum coefficient. The key component is the adaptive weight $\alpha_j \in [0, 1]$, defined as:
\begin{equation}
    \alpha_j =  \frac{\text{sim}\big( \widetilde{\mathbf{g}}_l(\mathbf{x}_t^{(j-1)}),\, \mathbf{g}_l(\mathbf{x}_t^{(j)}) \big)+1}{2},
    \label{eq:alpha_def}
\end{equation}
where $\text{sim}(\mathbf{u}, \mathbf{v}) = \frac{\mathbf{u} \cdot \mathbf{v}}{\|\mathbf{u}\| \|\mathbf{v}\|} \in [-1, 1]$, denotes the cosine similarity.

\textbf{Theoretical intuition:}
When the current gradient $\mathbf{g}_l(\mathbf{x}_t^{(j)})$ aligns well ($\text{sim}(\cdot) \approx 1$) with the previous momentum $\widetilde{\mathbf{g}}_l(\mathbf{x}_t^{(j-1)})$, then $\alpha_j \approx 1$, and ADM behaves like standard momentum, smoothing the update by trusting the historical trend.
Conversely, when the directions differ significantly ($\text{sim}(\cdot) \ll 1$), $\alpha_j$ decreases towards 0, effectively down-weighting the potentially outdated historical momentum and prioritizing the current gradient information.
By adaptively modulating the momentum, ADM effectively dampens erratic gradient fluctuations while remaining responsive to significant directional shifts, yielding a more stable and reliable gradient $\widetilde{\mathbf{g}}_l$ for guiding the optimization towards data consistency.

The complete SPGD procedure is detailed in \cref{algo:algo}. In our experiments, we extend the warm-up strategy throughout the entire reverse process, and set the total number of outer diffusion steps $T=100$ and warm-up steps $N=5$, aligning with common Number of Function Evaluations (NFEs) in existing literature.

\subsection{Theoretical Analysis}
\label{subsec:theoretical_analysis}

Here, we provide theoretical support for our methodology. Our analysis highlights how the SPGD warm-up optimizes for data fidelity, distinguishing it from prior posterior sampling techniques.


\begin{figure}[t]
    \centering
    \includegraphics[width=1\linewidth]{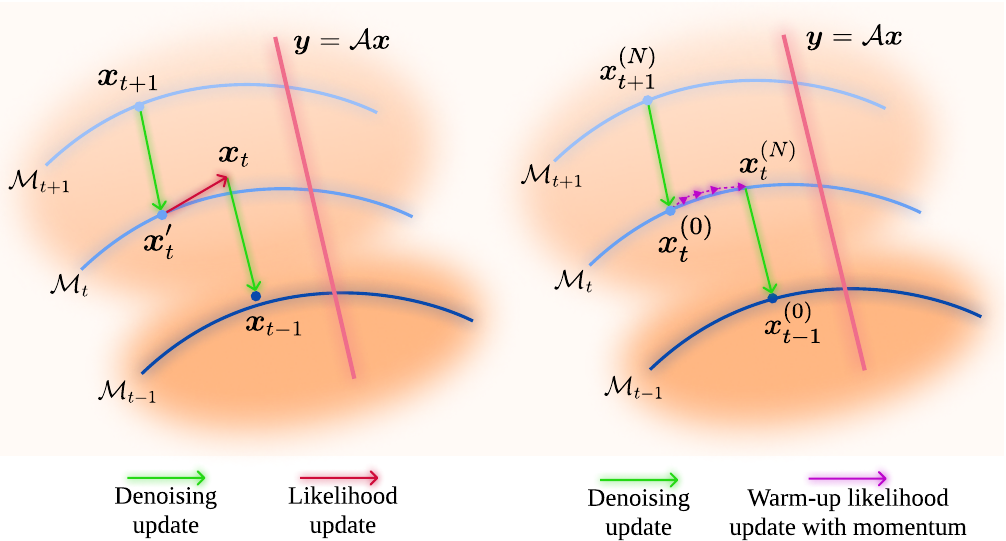} 
        \caption{High-level illustration of our proposed SPGD.  (a) the standard reverse process (Eq.~\ref{eq:dps_ddim}), and (b) our proposed SPGD (\cref{algo:algo}), showing the warm-up phase with smoothed gradient to enhance restoration stability.}
    \label{fig:manifold}
    \vspace{-0.2cm}
\end{figure}

\begin{proposition}\label{prop:warmup_optimality}
Let $L_t(\x_t) = \frac{1}{2} \norm{\y - \Acal(\hat{\x}_{0}(\x_t))}^2$ be the likelihood objective at diffusion timestep $t$. Assume the likelihood gradient $\g_l(\x_t) = \nabla_{\x_t} L_t(\x_t)$ is $L$-Lipschitz continuous with respect to $\x_t$ for some $L > 0$. Consider the SPGD warm-up phase without momentum ($\beta=0$) using $N \ge 1$ steps with step size $\eta = \zeta / N$. If the step size satisfies $\eta < 1/L$, then the state $\x_t^{(N)}$ obtained after $N$ warm-up steps guarantees a decrease in the likelihood objective:
\begin{equation}
L_t(\x_t^{(N)}) \le L_t(\x_t^{(0)}) - \eta \sum_{j=0}^{N-1} \left(1 - \frac{L\eta}{2}\right) \norm{\g_l(\x_t^{(j)})}^2 ,
\end{equation}
where $\x_t^{(0)}$ is the initial state at time $t$ before the warm-up.
\end{proposition}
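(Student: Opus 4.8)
The plan is to recognize \cref{prop:warmup_optimality} as an iterated application of the classical descent lemma for functions with Lipschitz-continuous gradient. First I would note that setting $\beta = 0$ collapses the ADM recursion \cref{eq:momentum_update} to $\widetilde{\g}_l(\x_t^{(j)}) = \g_l(\x_t^{(j)})$ for every $j$, so the warm-up update \cref{eq:warmup_with_momentum} is precisely plain gradient descent on the per-timestep objective $L_t$, namely $\x_t^{(j+1)} = \x_t^{(j)} - \eta\,\g_l(\x_t^{(j)})$ with constant step size $\eta = \zeta/N$. Flagging this reduction explicitly matters, since a nonzero $\alpha_j\beta$ would reintroduce history terms that the simple telescoping argument below does not handle.

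Next I would establish the one-step descent inequality. Because $\g_l = \nabla L_t$ is $L$-Lipschitz, $L_t$ obeys the quadratic upper bound $L_t(\mathbf{v}) \le L_t(\mathbf{u}) + \langle \nabla L_t(\mathbf{u}),\, \mathbf{v} - \mathbf{u}\rangle + \tfrac{L}{2}\norm{\mathbf{v} - \mathbf{u}}^2$ for all $\mathbf{u}, \mathbf{v}$; this follows in one line from writing $L_t(\mathbf{v}) - L_t(\mathbf{u}) = \int_0^1 \langle \nabla L_t(\mathbf{u} + s(\mathbf{v}-\mathbf{u})),\, \mathbf{v}-\mathbf{u}\rangle\, ds$, subtracting $\langle \nabla L_t(\mathbf{u}), \mathbf{v}-\mathbf{u}\rangle$, and bounding the resulting difference of gradients by Cauchy--Schwarz together with the Lipschitz property. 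Substituting $\mathbf{u} = \x_t^{(j)}$ and $\mathbf{v} = \x_t^{(j+1)} = \x_t^{(j)} - \eta\,\g_l(\x_t^{(j)})$ turns the inner-product term into $-\eta\norm{\g_l(\x_t^{(j)})}^2$ and the quadratic term into $\tfrac{L\eta^2}{2}\norm{\g_l(\x_t^{(j)})}^2$, yielding $L_t(\x_t^{(j+1)}) \le L_t(\x_t^{(j)}) - \eta\bigl(1 - \tfrac{L\eta}{2}\bigr)\norm{\g_l(\x_t^{(j)})}^2$.

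Finally I would telescope this estimate over $j = 0, 1, \dots, N-1$, which immediately gives $L_t(\x_t^{(N)}) \le L_t(\x_t^{(0)}) - \eta\sum_{j=0}^{N-1}\bigl(1 - \tfrac{L\eta}{2}\bigr)\norm{\g_l(\x_t^{(j)})}^2$, the claimed bound. To read it as a genuine decrease I would invoke the hypothesis $\eta < 1/L$, which forces $1 - L\eta/2 > 1/2 > 0$, so every summand is nonnegative and $L_t$ is non-increasing along the warm-up trajectory, strictly so unless some iterate is already a stationary point of $L_t$.

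Honestly there is no substantive obstacle here — it is a textbook gradient-descent argument. The only points requiring a little care are (i) confirming that the stated regularity (``$\g_l$ is $L$-Lipschitz'') is exactly what the descent lemma consumes, modulo the standard identification of gradient-Lipschitzness with $L$-smoothness, and (ii) being explicit that momentum is switched off so the recursion is unadulterated gradient descent, as noted above. If a cleaner constant were desired, one could further use $1 - L\eta/2 > 1/2$ to weaken the right-hand side to $L_t(\x_t^{(N)}) \le L_t(\x_t^{(0)}) - \tfrac{\eta}{2}\sum_{j=0}^{N-1}\norm{\g_l(\x_t^{(j)})}^2$, but this is not needed for the statement as given.
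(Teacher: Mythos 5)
Your proposal is correct and follows essentially the same route as the paper: the paper likewise proves the quadratic upper bound (descent lemma) from the fundamental theorem of calculus, Cauchy--Schwarz, and the Lipschitz assumption, applies it to each warm-up step with $\beta=0$, and telescopes over $j=0,\dots,N-1$, invoking $\eta < 1/L$ to ensure $1-L\eta/2 > 1/2 > 0$. Your explicit remark that $\beta=0$ collapses the ADM recursion to plain gradient descent is a welcome clarification, but otherwise the arguments coincide.
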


\begin{proof}
    Please refer to \cref{sec:proof}.
\end{proof}

\Cref{prop:warmup_optimality} formally supports that the SPGD warm-up iterations progressively decrease the likelihood objective $L_t$, thereby enhancing the data consistency of the state $\x_t$ before denoising. In other words, $\x_t^{(N)}$ yields a better estimate $\hat{\x}_{0}$ with smaller likelihood objective value compared to using the initial state $\x_t^{(0)}$.

As conceptually illustrated in \cref{fig:manifold},
SPGD contrasts with standard posterior sampling methods like DPS~\citep{chung2023diffusion}. In DPS, the likelihood gradient $\g_l$ is computed based on the initial state $\x_t$ (equivalent to our $\x_t^{(0)}$), but the correction is applied only \textbf{after} the denoising step produces $\x_{t-1}'$. This timing mismatch, where a gradient relevant to $\x_t$ corrects the transformed state $\x_{t-1}'$, can potentially lead to instability and cause trajectories to deviate from the data manifold~\cite{wu2023pure}.
On the contrary, SPGD performs iterative refinement directly on $\x_t^{(j)}$ before invoking the denoiser. By providing the denoising step with an input $\x_t^{(N)}$ that is already better aligned with the measurements $\y$, SPGD potentially facilitates a more stable update process and a more accurate subsequent generation.


\begin{figure*}[t]
    \centering
    \begin{tikzpicture}
        \node[anchor=south west,inner sep=0] (image) at (0,0) {\includegraphics[width=\linewidth]{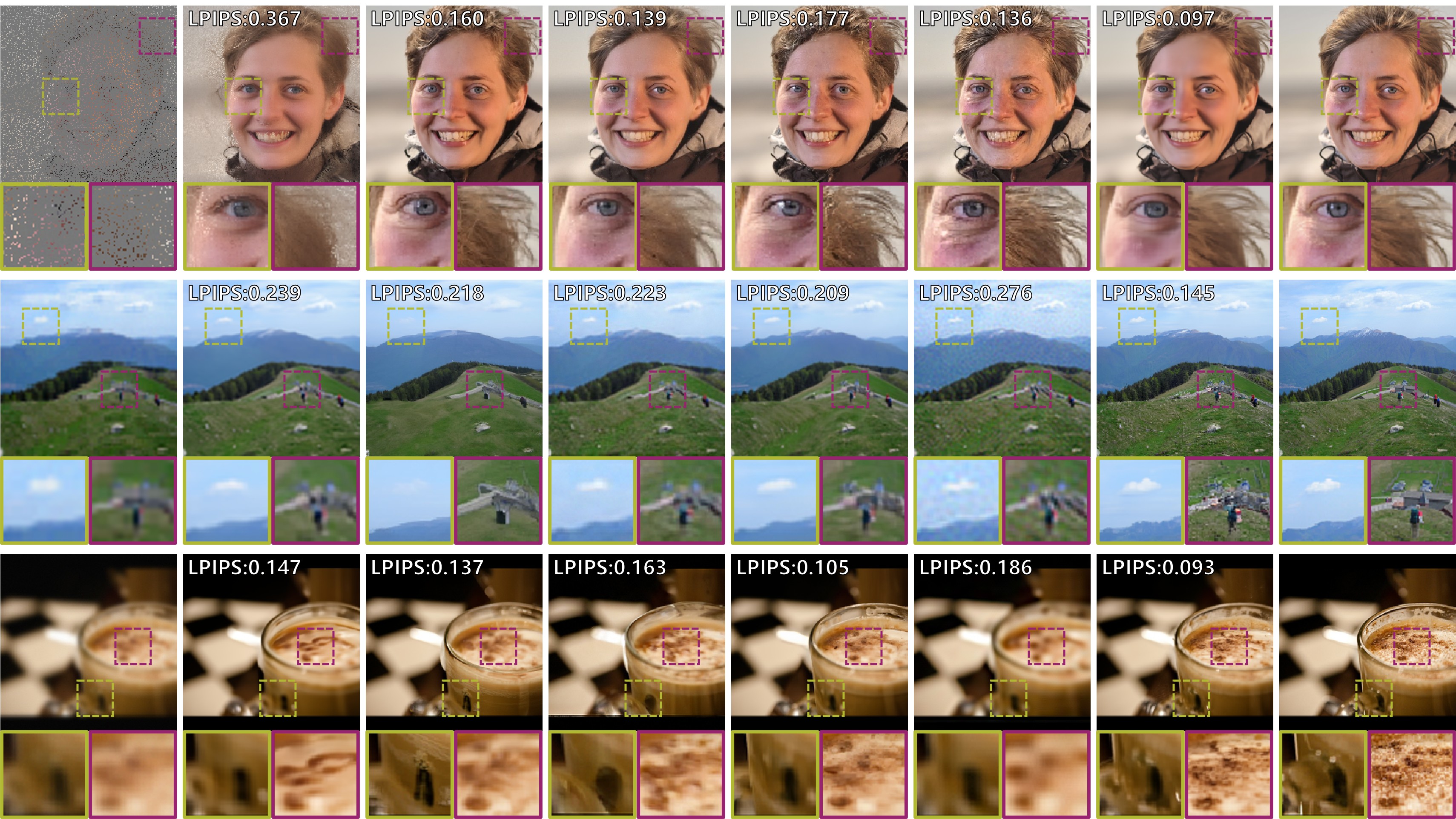}};
        \begin{scope}[x={(image.south east)},y={(image.north west)}]
            \node at (0.06,1.02) {\large{Input}};
            \node at (0.186,1.02) {\large{DDRM}};
            \node at (0.31,1.02) {\large{DPS}};
            \node at (0.435,1.02) {\large{DiffPIR}};
            \node at (0.56,1.02) {\large{DPPS}};
            \node at (0.687,1.02) {\large{RED-Diff}};
            \node at (0.81,1.02) {\large{Ours}};
            \node at (0.937,1.02) {\large{Reference}};
        \end{scope}
    \end{tikzpicture}
    \caption{Image restoration results with $\sigma_y=0.01$. Row 1: random inpainting, Row 2: SR ($\times$4), Row 3: Gaussian deblurring.}
    \label{fig:show_mains}
    \vspace{-0.3cm}
\end{figure*}

\begin{table*}[t]
\centering 
\caption{Quantitative evaluation of image restoration tasks on FFHQ 256$\times$256 and ImageNet 256$\times$256 with $\sigma_y=0.01$.Best results are marked in \textbf{bold}, and second-best results are \underline{underlined}.
}
\vspace{-0.2cm}
\setlength{\tabcolsep}{4 pt}
\resizebox{1.0\textwidth}{!}{
\begin{tabular}{lcccccccccccc}
\toprule
{\textbf{}} & \multicolumn{3}{c}{\textbf{Inpaint (random)}} & \multicolumn{3}{c}{\textbf{Deblur (Gaussian)}} &
\multicolumn{3}{c}{\textbf{Deblur (motion)}} &\multicolumn{3}{c}{\textbf{SR ($\times$ 4)}} \\
\cmidrule(lr){2-4}
\cmidrule(lr){5-7}
\cmidrule(lr){8-10}
\cmidrule(lr){11-13}
{\textbf{Method}} & PSNR $\uparrow$ & SSIM $\uparrow$ & LPIPS $\downarrow$ & PSNR $\uparrow$ & SSIM $\uparrow$ & LPIPS $\downarrow$ & PSNR $\uparrow$ & SSIM $\uparrow$ & LPIPS $\downarrow$ & PSNR $\uparrow$ & SSIM $\uparrow$ & LPIPS $\downarrow$ \\
\toprule
\multicolumn{13}{c}{\textbf{FFHQ}} \\
\toprule
PnP-ADMM~\cite{chan2016plug} & 27.99 & 0.729 & 0.306 & 26.07 & 0.758 & 0.260 & 25.86 &\underline{ 0.772} & 0.278 & 27.75 & \underline{0.835} & 0.246 \\
Score-SDE~\cite{song2021scorebased}~ & 22.04 & 0.613 & 0.370 & 24.32 & 0.745 & 0.336 & 15.87 & 0.495 & 0.541 & 18.53 & 0.272 & 0.598 \\
MCG~\cite{chung2022improving}& 27.74 & 0.825 & \underline{0.147} & 24.28 & 0.737 & 0.292 & 19.78 & 0.540 & 0.476 & 18.10 & 0.261 & 0.606 \\
DDRM~\cite{kawar2022denoising} & 22.02 & 0.652 & 0.362 & \underline{27.11} & 0.779 & 0.244 &- &- &- & \textbf{29.49} & \textbf{0.853} & 0.190 \\
DPS~\cite{chung2023diffusion} & 26.11 & 0.802 & 0.180 & 26.51 & \underline{0.782} & 0.181 & 25.58 & 0.752 & 0.212 & 27.06 & 0.803 & 0.187 \\
DiffPIR~\cite{zhu2023denoising} & \underline{28.64} & \underline{0.842} & 0.182 & 25.92 & 0.722 & 0.234 & \underline{27.51} & 0.635 & 0.264 & 28.21 & 0.801 & 0.211 \\
DPPS~\cite{wu2024diffusion} & 25.47 & 0.769 & 0.191 & 26.92 & \textbf{0.793} & \underline{0.173} & 26.25 & 0.771 & \underline{0.194} & 27.52 & 0.820 & \underline{0.153} \\
RED-Diff~\cite{mardani2024red-diff} & 27.17 & 0.799 & 0.159 & 24.69 & 0.672 & 0.288 & 26.24 & 0.706 & 0.255 & 29.06 & 0.800 & 0.243  \\
\midrule
\rowcolor{lightblue}
Ours & \textbf{30.87} & \textbf{0.889} & \textbf{0.120} & \textbf{27.83} & 0.775& \textbf{0.172}& \textbf{29.41}&\textbf{0.834} & \textbf{0.158} & \underline{29.35} & 0.831& \textbf{0.137} \\
\bottomrule
\multicolumn{13}{c}{\textbf{ImageNet}} \\
\toprule
PnP-ADMM~\citep{chan2016plug} & 25.14 & 0.643 & 0.355 & 22.01 & 0.541 & 0.418 & 21.88 & 0.563 & 0.408 & 23.95 & 0.718 & 0.323 \\
Score-SDE\cite{song2021scorebased}~ & 16.25 & 0.234 & 0.629 & 21.31 & 0.593 & 0.446 & 13.56 & 0.371 & 0.584 & 17.69 & 0.285 &  0.582\\
MCG~\cite{chung2022improving}& 23.21 & 0.647 & 0.294 & 12.31  & 0.340 & 0.578 & 18.32 & 0.425 & 0.502 & 17.08 & 0.264 & 0.591 \\
DDRM~\cite{kawar2022denoising} & 19.34 & 0.449 & 0.525  & \underline{23.67} & \underline{0.632} & 0.378 & -& -&- & \underline{25.49} & \textbf{0.736} & 0.298 \\
DPS~\citep{chung2023diffusion} & 25.09 & 0.773 & 0.217 & 19.49 & 0.485 & 0.399 & 18.68 & 0.454 & 0.432 & 23.47 & 0.665 & 0.310 \\
DiffPIR~\cite{zhu2023denoising} & \textbf{26.59} & \underline{0.793} & 0.204 & 22.51 & 0.557 & 0.363 & \underline{25.37} & 0.616 & 0.238 & 25.44 & \underline{0.723} & 0.272 \\
DPPS~\cite{wu2024diffusion} & 24.55 & 0.755 & 0.193 & 21.75 & 0.587 & 0.352 & 20.09 & 0.530 & 0.392 & 24.36 & 0.718 & \underline{0.253} \\
RED-Diff~\cite{mardani2024red-diff} & 24.86 & 0.750 & \underline{0.171} & 22.80 & 0.577 & \underline{0.339} & 24.84 & \underline{0.687} & \underline{0.284} & 25.09 & 0.686 & 0.325   \\
\midrule
\rowcolor{lightblue}
Ours &\underline{26.28} &\textbf{0.798} & \textbf{0.165} & \textbf{24.80} & \textbf{0.651} & \textbf{0.229} & \textbf{28.11} & \textbf{0.728} & \textbf{0.154} & \textbf{25.71} &0.705 & \textbf{0.203} \\

\bottomrule
\end{tabular}
}
\label{tab:results_all}
\vspace{-0.2cm}
\end{table*}

\section{Experiments} 
\label{sec:experiments}

\subsection{Experimental Setup}

\textbf{Datasets \& Metrics.} We evaluate our proposed SPGD method and baseline approaches on 1,000 images from the FFHQ 256$\times$256 validation set~\cite{karras2019style} and 500 images from the ImageNet 256$\times$256 validation set~\cite{russakovsky2015imagenet}. The pre-trained diffusion models for FFHQ and ImageNet are obtained from~\citep{chung2023diffusion} and~\citep{dhariwal2021diffusion}, respectively, following established conventions.

Performance is assessed with quantitative metrics: Peak Signal-to-Noise Ratio (PSNR) and Structural Similarity Index Measure (SSIM)~\citep{wang2004image} for distortion evaluation, and Learned Perceptual Image Patch Similarity (LPIPS)~\cite{Zhang2018TheUE} for perceptual quality. 

\textbf{Baselines.} We compare our method against several state-of-the-art diffusion-based image restoration techniques, including PnP-ADMM~\cite{chan2016plug}, Score-SDE~\cite{song2022solving}, MCG~\cite{chung2022improving}, DDRM~\cite{kawar2022denoising}, DPS~\cite{chung2023diffusion}, DiffPIR~\cite{zhu2023denoising}, DPPS~\cite{wu2024diffusion}, and RED-Diff~\cite{mardani2024red-diff}. All experiments are conducted using the same pre-trained models and a fixed random seed to ensure fair comparisons.

\textbf{Tasks.} We consider four standard linear image restoration tasks:
(1) \textbf{Inpainting:} Randomly mask 80\% of the pixels across all RGB channels.
(2) \textbf{Gaussian Blur:} Apply a Gaussian blur kernel of size $61 \times 61$ with a standard deviation of $\sigma = 3.0$.
(3) \textbf{Motion Blur:} Use a motion blur kernel of size $61 \times 61$ and intensity 0.5, following the procedure of Chung et al.~\cite{chung2022improving}.
(4) \textbf{Super-Resolution (SR) $\times$4:} Perform $4\times$ bicubic downsampling.

In all scenarios, Gaussian noise with a variance of $\sigma_y = 0.01$ is added to the degraded measurements. 
For efficiency, all our ablation studies were conducted using 100 FFHQ test images.

\begin{table}[t]  
\caption{Ablation study on the effect of components.}  
    \setlength{\tabcolsep}{2.5pt}
        \centering
        \resizebox{\linewidth}{!}{%
        \begin{tabular}{cccccccc}
            \toprule        
            \multicolumn{2}{c}{Strategies} & \multicolumn{3}{c}{Inpainting} & \multicolumn{3}{c}{SR ($\times$ 4)}   \\
            \cmidrule(lr){1-2}
            \cmidrule(lr){3-5}
            \cmidrule(lr){6-8}
            \small{Warm-up}  & \small{ADM} & {PSNR$\uparrow$} &  {SSIM$\uparrow$} &{LPIPS$\downarrow$}  &{PSNR$\uparrow$}& {SSIM$\uparrow$}  & {LPIPS$\downarrow$}  \\
            \midrule
            \xmark  & \xmark  &30.35 & 0.861 & 0.155 & 28.41 & 0.785 & 0.194 \\ 
           \xmark & \cmark  & 30.34 & 0.859 & 0.157 & 28.36 & 0.783 & 0.197 \\   

		 \cmark  & \xmark &\underline{31.57} & \underline{0.902} & \underline{0.093} & \underline{29.79} & \underline{0.838} & \underline{0.131} \\      
		\rowcolor{lightblue}
		\cmark & \cmark  & \textbf{31.63} & \textbf{0.904} & \textbf{0.075} &  \textbf {30.08 }& \textbf {0.847} & \textbf {0.124} \\
        
            \bottomrule
        \end{tabular}} 
        \vspace{-0.2cm} \label{tab:Components}
\end{table}

\subsection{Quantitative Results}

As summarized in \cref{tab:results_all}, SPGD demonstrates consistently superior performance across the majority of tasks and metrics.

On the FFHQ dataset, our method achieves the best results for all metrics across all four degradation types. This indicates that our approach effectively minimizes distortion (high PSNR), preserves structural details (high SSIM), and generates results with superior perceptual similarity to the ground truth (low LPIPS) compared to the baseline methods on this dataset. The improvements are particularly significant in inpainting and motion deblurring, highlighting the robustness of our method in handling complex degradations.

On the more challenging ImageNet dataset, our method continues to outperform the baselines in most scenarios. It achieves the best SSIM and LPIPS scores for inpainting, Gaussian deblurring, and motion deblurring. For SR$\times$4, our method obtains the best PSNR and LPIPS, and competitive SSIM scores, demonstrating its effectiveness on diverse image content. While DDRM achieves slightly higher SSIM for SR~($\times$4), the reliance on the singular value decomposition (SVD) of operator $\mathcal{A}$ renders their methodology inapplicable in many scenarios. Our method shows a marked advantage in LPIPS and frequently outperforms others in distortion metrics, demonstrating that our gradient management directly leads to the enhanced stability with superior restoration quality.


\subsection{Qualitative Results}

Visual comparisons, presented in \cref{fig:show_mains}, corroborate the quantitative findings.
Across different restoration tasks, SPGD consistently produces visually superior results compared to baseline methods.
For instance, in random inpainting on FFHQ, SPGD generates significantly cleaner and more coherent reconstructions, successfully restoring intricate details like eye reflections and hair textures where competing methods often exhibit artifacts or blurriness.
In super-resolution, SPGD renders sharper textures and finer structural details compared to the outputs of methods like RED-Diff, which can appear overly smoothed.
Similarly, for Gaussian deblurring tasks, SPGD yields noticeably sharper edges and recovers finer details compared to DiffPIR or DPPS. These visual improvements are direct consequences of the stabilized reverse trajectory enabled by the proposed techniques.

Furthermore, we analyze the evolution of the predicted image ($\hat{\mathbf{x}}_{0}$) of the challenging task motion deblurring. As shown in \cref{fig:convenge}, the method DPS exhibits erratic behavior in intermediate stages (e.g., NFE = 150). Excessive or unstable likelihood corrections, driven by the fluctuating likelihood gradient $\mathbf{g}_l$, can introduce strong artifacts into $\hat{\mathbf{x}}_{0}$, potentially disrupting the global structure of the final result. In contrast, SPGD demonstrates a much smoother and more stable progression. The progressive warm-up and ADM smoothing enable gradual, controlled refinement guided by the likelihood, preserving key image features and more reliably leading to a perceptually convincing restoration, particularly in complex restoration scenarios. These results
strongly support our analysis in \cref{sec:Gradient}.



\begin{figure*}[t]
    \centering
    \includegraphics[width=\linewidth]{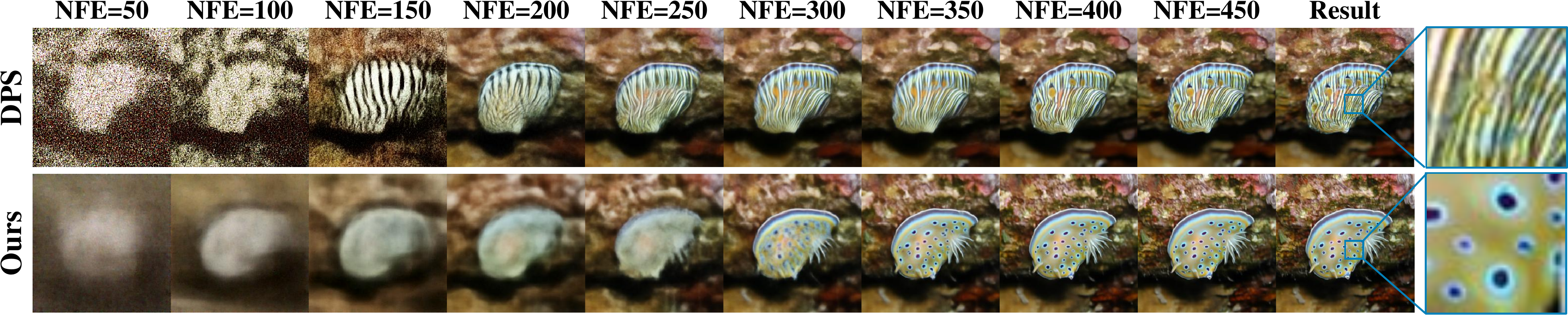} 
        \caption{Comparison of intermediate predictions $\hat{\mathbf{x}}_{0}(\mathbf{x}_t)$ from DPS and our SPGD method on motion deblurring.}
    \label{fig:convenge}
    \vspace{-0.3cm}
\end{figure*}

\begin{figure}[t]
    \centering
    \includegraphics[width=\linewidth]{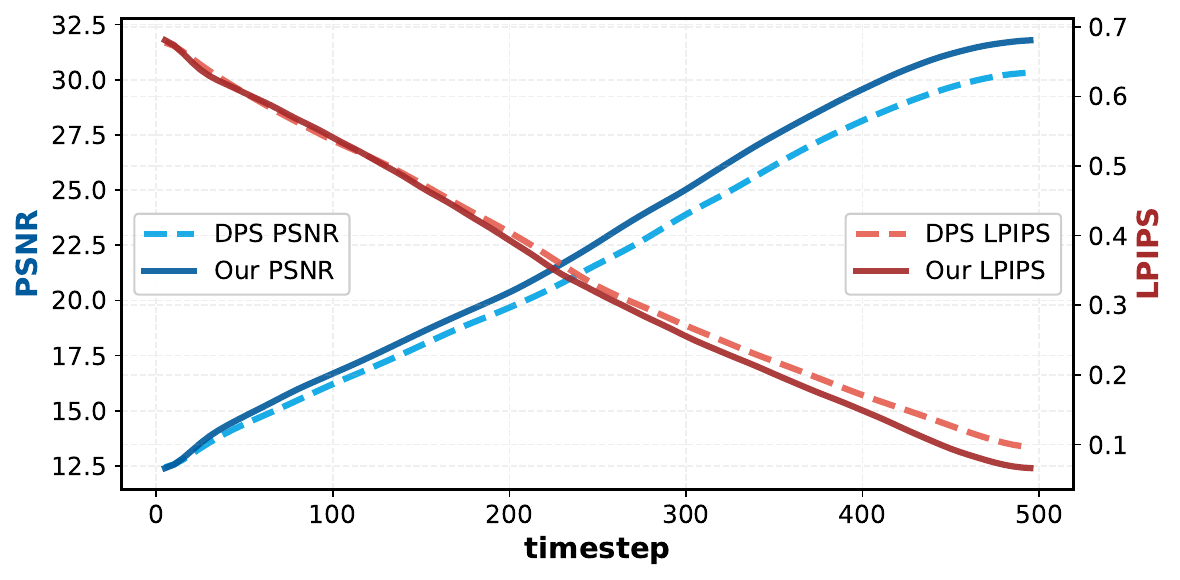}
        \caption{Evolution of PSNR and LPIPS of $\hat{\mathbf{x}}_{0}(\mathbf{x}_t)$ during the reverse process for SPGD (T=100, N=5) and the baseline without gradient management (T=500).}
    \label{fig:psnr_lpips}
    \vspace{-0.3cm}
\end{figure}

\subsection{Ablation Study}

\subsubsection{Impact of Design Components}

SPGD integrates two core ideas: the progressive warm-up strategy and the ADM smoothing technique. To isolate their respective contributions, we evaluated our ablation study on four variants with 100 FFHQ test images: baseline\footnote{The performance discrepancy between 'baseline' and DPS is attributable to our improved experimental settings, as detailed in~\cref{sec:Implementation_Details}.}; SPGD with only the warm-up strategy; SPGD with only ADM smoothing; and the full SPGD model.

The results, presented in \cref{tab:Components}, reveal the following observations: (i) Implementing only the warm-up strategy yields significant improvements across all metrics compared to the baseline. This confirms that performing multiple, smaller likelihood updates before the denoising step effectively mitigates gradient conflicts and enhances reconstruction quality.  (ii) Applying ADM smoothing directly fails to improve performance. We hypothesize this occurs because the momentum accumulates information based on likelihood gradients calculated at different states ($t-1$, $t$, etc.) where the underlying image estimate has significantly changed due to the intervening denoising steps. This makes the historical gradient information less relevant or potentially misleading for the current step's likelihood correction. (iii) Combining both techniques results in the best performance, substantially outperforming either component alone. This demonstrates a crucial synergy: ADM effectively stabilizes the likelihood gradient within the warm-up phase, where likelihood gradients $\mathbf{g}_l(\mathbf{x}_t^{(j)})$ are computed at the same effective timestep $t$.
These findings underscore that both components are essential and work synergistically, validating the design of SPGD.

\subsubsection{Ablation Study on $N$}
The number of inner warm-up steps, denoted as $N$, is a central hyperparameter in our proposed SPGD. To analyze its impact, we performed an ablation study varying $N$ while keeping the number of outer diffusion steps fixed at $T=100$. We evaluated $N \in \{2, 5, 10, 20\}$ and included standard DPS with $T=500$ and $T=1000$ steps (corresponding to 500 and 1000 NFEs respectively) as references.

Results are presented in \cref{tab:ablation_computation}. We observe that increasing $N$ from 2 to 10 generally improves performance, particularly benefiting the perceptual LPIPS metric. Optimal performance appears around $N=5$ or $N=10$, depending on the specific metric. However, increasing $N$ further to 20 leads to a noticeable decline in quality, potentially due to overfitting within the warm-up phase. Considering the trade-off between performance and the computational overhead, we select $N=5$ as the default value for SPGD.

Notably, as shown in \cref{tab:ablation_computation}, with the same 500 NFEs, SPGD with $T=100, N=5$ significantly outperforms standard DPS using $T=500$ steps, and even surpasses DPS with $T=1000$ steps. 
Further evidence is provided in \cref{fig:psnr_lpips}, which compares the evolution of PSNR and LPIPS during the reverse process for SPGD (T=100, N=5) and DPS (T=500). SPGD consistently achieves superior metric values throughout the generation process for the same NFEs, indicating not only better final results but also faster convergence towards a high-quality solution.

\begin{table}[t]          
\caption{Ablation study on the impact of warm-up steps $N$.}  
        \centering
        \begin{tabular}{llccc}
            \toprule        
            \multicolumn{2}{c}{Allocation}  & {PSNR$\uparrow$} &  {SSIM$\uparrow$} &{LPIPS$\downarrow$}   \\
            \midrule
             $T$= 100  &$N$=1 &30.357 & 0.861 & 0.154  \\
            $T$= 100  &$N$=2 & \textbf{31.655} & 0.896 & 0.109 \\   
            \rowcolor{lightblue}
           $T$= 100  &$N$=5 & \underline{31.631} & \textbf{0.904} & \underline{0.075} \\       
            $T$= 100  & $N$=10 & 30.923 & 0.893 & \textbf{0.070}  \\
             $T$= 100  & $N$=20  & 29.900 & 0.874 & 0.079   \\
            \midrule
            $T$=500  &$N$=1 & 30.403 & 0.863 & 0.096  \\
            $T$=1000  &$N$=1 & 31.608 & \underline{0.902} & 0.080   \\
             \bottomrule
        \end{tabular}
        \vspace{-0.3cm} 
        \label{tab:ablation_computation}
\end{table}

\subsubsection{Influence of \texorpdfstring{$\beta$}{beta}}

To investigate the influence of the parameter $\beta$, we conduct an ablation study on Gaussian deblurring and SR $\times$4 tasks. Quantitative results, summarized in Table~\ref{tab:ablation_beta}, reveal that setting $\beta$ within the range of 0.9 to 0.95 leads to optimal performance.

For the Gaussian Deblurring task, the configuration $\beta=0.9$ yields the highest PSNR (28.32), whereas $\beta=0.95$ achieves the best SSIM (0.788) and LPIPS (0.159) scores. In the case of SR $\times$4, $\beta=0.95$ consistently outperforms other values across all evaluation metrics. Notably, performance degrades as $\beta$ approaches 1.0, with the particularly decline in the deblurring scenario. Based on these observations, we adopt $\beta=0.95$ as the default setting in our method.

\begin{table}[t]          
\caption{Ablation study on the impact of parameter $\beta$.}  
        \centering
        \resizebox{.99\linewidth}{!}{%
        \begin{tabular}{lcccccc}
            \toprule        
            {} &  \multicolumn{3}{c}{Deblur (Gaussian)}& \multicolumn{3}{c}{SR ($\times$4)}\\
            \cmidrule(lr){2-4}
            \cmidrule(lr){5-7}
            {Setting}  & {PSNR$\uparrow$} &  {SSIM$\uparrow$} &{LPIPS$\downarrow$}  &{PSNR$\uparrow$}& {SSIM$\uparrow$}  & {LPIPS$\downarrow$}  \\
            \midrule
            $\beta$=0.8 &28.26 & 0.781 & 0.166 & 30.02 & 0.843 & 0.130 \\
		$\beta$=0.9 & \textbf{28.32} & \underline{0.787} & \underline{0.161} & \underline{30.06} & 0.845 & 0.127 \\
        \rowcolor{lightblue}
		$\beta$=0.95 &\underline{28.28} &\textbf {0.788} & \textbf {0.159}  & \textbf {30.08 }& \textbf {0.847} & \textbf {0.124} \\   
		 $\beta$=0.99 &26.93 & 0.756 & 0.189 & 29.99 & \underline{0.846} & \underline{0.124}  \\       
		$\beta$=1.0 &25.15 & 0.669 & 0.249& 29.92 & 0.846 & 0.126	 \\
            \bottomrule
        \end{tabular}
        }
        \label{tab:ablation_beta}
        \vspace{-0.3cm}
\end{table}

\section{Conclusion}

In this paper, we examined  the reverse update from a gradient decomposition perspective, identified and addressed previously under-addressed gradient conflicts and fluctuations that hinder diffusion-based image restoration methods. Our proposed SPGD, incorporating progressive warm-up and adaptive momentum smoothing, directly mitigates these instabilities, leading to a more stable reverse process. Extensive experiments demonstrated SPGD's superiority over state-of-the-art methods across various tasks, achieving significant quantitative and qualitative improvements, often with enhanced efficiency.  

SPGD provides a practical improvement in image restoration and offers valuable insights into stabilizing guided diffusion processes.
Future work could broaden the application of SPGD to diverse conditional diffusion frameworks, including text-to-image synthesis, image-to-image translation, and video generation.


\begin{acks}
This work was supported by the National Major Scientific Instruments and Equipments Development Project of National Natural Science Foundation of China under Grant 62427820, the Fundamental Research Funds for the Central Universities under Grant 1082204112364, and the Science Fund for Creative Research Groups of Sichuan Province Natural Science Foundation under Grant 2024NSFTD0035. 
\end{acks}

\bibliographystyle{ACM-Reference-Format}
\bibliography{bib}

\clearpage
\appendix

\section{Derivation for Reverse Update Decomposition}
\label{app:derivation}
In this part, we formally derive the DPS algorithm with DDIM sampling to our gradient perspective decomposition \cref{eq:grad_view}.

The standard DDIM sampling with $\sigma_t=0$ is given by:
\begin{equation}
    \mathbf{x}_{t-1}^{\prime} = \sqrt{\bar{\alpha}_{t-1}} \left( \frac{\mathbf{x}_t - \sqrt{1-\bar{\alpha}_t} \boldsymbol{\epsilon}_\theta(\mathbf{x}_t, t)}{\sqrt{\bar{\alpha}_t}} \right) + \sqrt{1-\bar{\alpha}_{t-1}} \boldsymbol{\epsilon}_\theta(\mathbf{x}_t, t),
\end{equation}
where $\bar{\alpha}_t = \prod_{s=1}^{t} \alpha_s$ is the cumulative product of the noise schedule, and $\boldsymbol{\epsilon}_\theta(\mathbf{x}_t, t)$ denotes the estimated noise obtained from the learned network.

Rewriting the scaling term,
$$
    \frac{\sqrt{\bar{\alpha}_{t-1}}}{\sqrt{\bar{\alpha}_t}} = \frac{1}{\sqrt{\alpha_t}},
$$
we express $\mathbf{x}_{t-1}^{\prime}$ as:
\begin{equation}
\begin{aligned}
    \mathbf{x}_{t-1}^{\prime} &=\frac{\sqrt{\bar{\alpha}_{t-1}}}{\sqrt{\bar{\alpha}_t}} \mathbf{x}_t - \frac{\sqrt{\bar{\alpha}_{t-1}}}{\sqrt{\bar{\alpha}_t}}  \sqrt{1-\bar{\alpha}_t} \boldsymbol{\epsilon}_\theta(\mathbf{x}_t, t) + \sqrt{1-\bar{\alpha}_{t-1}} \boldsymbol{\epsilon}_\theta(\mathbf{x}_{t}, t)
    \\
    &= \frac{\sqrt{\bar{\alpha}_{t-1}}}{\sqrt{\bar{\alpha}_t}} \mathbf{x}_t - \left(\frac{\sqrt{\bar{\alpha}_{t-1}}}{\sqrt{\bar{\alpha}_t}} \sqrt{1-\bar{\alpha}_t}-
    \sqrt{1-\bar{\alpha}_{t-1}}     \right) \boldsymbol{\epsilon}_\theta(\mathbf{x}_{t}, t)
    \\
    &=\frac{1}{\sqrt{\alpha_t}} \mathbf{x}_t - \left( \frac{\sqrt{1-\bar{\alpha}_t}}{\sqrt{\alpha_t}} - \sqrt{1-\bar{\alpha}_{t-1}} \right) \boldsymbol{\epsilon}_\theta(\mathbf{x}_t, t).
\end{aligned}
\end{equation}
Here, the second term represents the denoising gradient, which we denote as:
\begin{equation}
    \mathbf{g}_d(\mathbf{x}_t) = \left( \frac{\sqrt{1-\bar{\alpha}_t}}{\sqrt{\alpha_t}} - \sqrt{1-\bar{\alpha}_{t-1}} \right) \boldsymbol{\epsilon}_\theta(\mathbf{x}_t, t).
\end{equation}

For conditional generation based on measurement $\y$, DPS introduce an additional likelihood update:
\begin{equation}
    \mathbf{x}_{t-1} = \mathbf{x}_{t-1}^{\prime} - \zeta \nabla_{\mathbf{x}_t} \| \mathbf{y} - \mathcal{A}(\hat{\mathbf{x}}_{0}(\mathbf{x}_t)) \|_2^2,
\end{equation}
where $\mathbf{y}$ represents the observed data, $\mathcal{A}$ is the observation operator, and $\hat{\mathbf{x}}_{0}(\mathbf{x}_t)$ is the estimated clean image given by:
\begin{equation}
    \hat{\mathbf{x}}_{0}(\mathbf{x}_t) = \frac{\mathbf{x}_t - \sqrt{1-\bar{\alpha}_t} \boldsymbol{\epsilon}_\theta(\mathbf{x}_t, t)}{\sqrt{\bar{\alpha}_t}}.
\end{equation}
The likelihood gradient term is thus defined as:
\begin{equation}
    \mathbf{g}_l(\mathbf{x}_t) = \nabla_{\mathbf{x}_t} \| \mathbf{y} - \mathcal{A}(\hat{\mathbf{x}}_{0}(\mathbf{x}_t)) \|_2^2.
\end{equation}

Finally, the complete algorithm in an gradient perspective can be expressed as:
\begin{equation}
\begin{aligned}
    \mathbf{x}_{t-1} =&\underbrace{\frac{1}{\sqrt{\alpha_t}} \mathbf{x}_t}_{\text{fixed scaling}} - \underbrace{\left(\frac{\sqrt{1-\bar{\alpha}_t}}{\sqrt{{\alpha}_t}}-\sqrt{1-\bar{\alpha}_{t-1}} \right) \boldsymbol{\epsilon}_{\boldsymbol\theta}(\mathbf{x}_t, t)}_{\text{denoising gradient}~\mathbf{g}_d(\mathbf{x}_t)}  \\
& - {\zeta}\underbrace{ \nabla_{\mathbf{x}_t} \|\mathbf{y} - \mathcal{A}(\hat{\mathbf{x}}_{0}(\mathbf{x}_t))\|_2^2}_{\text{likelihood gradient}~\mathbf{g}_l(\mathbf{x}_t)} \\
    = & \underbrace{\frac{1}{\sqrt{\alpha_t}} \mathbf{x}_t}_{\text{fixed scaling}} - \underbrace{\mathbf{g}_d(\mathbf{x}_t)}_{\text{denoising gradient}} - \zeta \underbrace{\mathbf{g}_l(\mathbf{x}_t)}_{\text{likelihood gradient}},
\end{aligned}
\end{equation}
which matches the \cref{eq:grad_view} in the main paper.

\section{Proofs}
\label{sec:proof}
\subsection{Descent Lemma for L-Smooth Functions}
To support the proof of Proposition~\ref{prop:warmup_optimality}, we first state and prove the following lemma concerning functions with $L$-smooth gradients.

\begin{lemma} \label{lem:descent}
Let $f: \Real^d \to \Real$ be a continuously differentiable function whose gradient $\nabla f$ is $L$-Lipschitz continuous for some $L > 0$, i.e., $\norm{\nabla f(\wprime) - \nabla f(\w)} \le L \norm{\wprime - \w}$ for all $\w, \wprime \in \Real^d$. Then, for any $\w, \wprime \in \Real^d$:
\begin{align}
f(\wprime) \le f(\w) + \langle \nabla f(\w), \wprime-\w \rangle + \frac{L}{2} \norm{\wprime-\w}^2.
\end{align}
\end{lemma}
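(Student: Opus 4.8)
The plan is to prove the Descent Lemma (Lemma~\ref{lem:descent}) by the standard route via the fundamental theorem of calculus applied along the segment joining $\w$ and $\wprime$. Concretely, I would define the auxiliary scalar function $\phi(s) = f(\w + s(\wprime - \w))$ for $s \in [0,1]$, so that $\phi(0) = f(\w)$ and $\phi(1) = f(\wprime)$. Since $f$ is continuously differentiable, $\phi$ is differentiable with $\phi'(s) = \langle \nabla f(\w + s(\wprime - \w)), \wprime - \w \rangle$, and hence $f(\wprime) - f(\w) = \phi(1) - \phi(0) = \int_0^1 \phi'(s)\, ds = \int_0^1 \langle \nabla f(\w + s(\wprime - \w)), \wprime - \w \rangle\, ds$.

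Next I would subtract the ``linear term'' $\langle \nabla f(\w), \wprime - \w \rangle = \int_0^1 \langle \nabla f(\w), \wprime - \w \rangle\, ds$ from both sides, giving
\begin{align*}
f(\wprime) - f(\w) - \langle \nabla f(\w), \wprime - \w \rangle = \int_0^1 \langle \nabla f(\w + s(\wprime - \w)) - \nabla f(\w),\, \wprime - \w \rangle\, ds.
\end{align*}
Then I bound the right-hand side: apply the Cauchy--Schwarz inequality inside the integral to get $\langle \nabla f(\w + s(\wprime - \w)) - \nabla f(\w), \wprime - \w \rangle \le \norm{\nabla f(\w + s(\wprime - \w)) - \nabla f(\w)}\, \norm{\wprime - \w}$, then invoke the $L$-Lipschitz hypothesis with the two points $\w + s(\wprime - \w)$ and $\w$ (whose difference has norm $s\norm{\wprime - \w}$) to obtain the upper bound $L s \norm{\wprime - \w}^2$. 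Integrating $\int_0^1 L s\, ds = L/2$ yields $f(\wprime) - f(\w) - \langle \nabla f(\w), \wprime - \w \rangle \le \tfrac{L}{2}\norm{\wprime - \w}^2$, which rearranges to the claimed inequality.

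This argument is essentially routine, so there is no serious obstacle; the only points requiring a little care are the justification that differentiating under the integral / applying the fundamental theorem of calculus is legitimate (which follows from continuity of $\nabla f$, ensuring $\phi'$ is continuous on $[0,1]$), and the correct tracking of the factor $s$ inside the Lipschitz estimate so that the integration produces $L/2$ rather than $L$. I would present the chain of (in)equalities in a single \texttt{align*} block to keep the bookkeeping transparent, and then note that Lemma~\ref{lem:descent} is exactly the tool needed to telescope the per-step decrease in the warm-up iteration and thereby prove Proposition~\ref{prop:warmup_optimality}.
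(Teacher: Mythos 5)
Your proposal is correct and follows essentially the same route as the paper's proof: integrate the gradient along the segment from $\w$ to $\wprime$, subtract the linear term, and bound the remainder via Cauchy--Schwarz and the Lipschitz hypothesis, with the factor $s$ integrating to $1/2$. The only cosmetic difference is that you introduce the auxiliary scalar function $\phi$ explicitly, whereas the paper writes the line integral directly.
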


\begin{proof}
By the Fundamental Theorem of Calculus for vector functions, we have:
\begin{align}
f(\wprime) - f(\w) = \int_0^1 \langle \nabla f(\w + t(\wprime - \w)), \wprime - \w \rangle \, dt.
\end{align}
Adding and subtracting $\langle \nabla f(\w), \wprime - \w \rangle = \int_0^1 \langle \nabla f(\w), \wprime - \w \rangle \, dt$:
\begin{align}
f(\wprime) - f(\w) =& \langle \nabla f(\w), \wprime - \w \rangle+ \int_0^1 \langle \nabla f(\w + t(\wprime - \w)) \notag \\ & - \nabla f(\w), \wprime - \w \rangle \, dt.
\end{align}
Applying the Cauchy-Schwarz inequality to the integrand:
\begin{align}
&\langle \nabla f(\w + t(\wprime - \w)) - \nabla f(\w), \wprime - \w \rangle \notag \\
\le& \norm{\nabla f(\w + t(\wprime - \w)) - \nabla f(\w)} \norm{\wprime - \w}.
\end{align}
Using the $L$-Lipschitz property of the gradient $\nabla f$:
\begin{align}
&\norm{\nabla f(\w + t(\wprime - \w)) - \nabla f(\w)} \notag\\
\le& L \norm{(\w + t(\wprime - \w)) - \w} \notag\\
=& L \norm{t(\wprime - \w)} \notag\\
=& L t \norm{\wprime - \w}.
\end{align}
Substituting this back into the integral bound:
\begin{align}
& f(\wprime) - f(\w) \notag \\
\le& \langle \nabla f(\w), \wprime - \w \rangle + \int_0^1 (L t \norm{\wprime - \w}) \norm{\wprime - \w} \, dt \notag \\
=& \langle \nabla f(\w), \wprime - \w \rangle + L \norm{\wprime - \w}^2 \int_0^1 t \, dt.
\end{align}
Evaluating the integral $\int_0^1 t \, dt = 1/2$, we obtain:
\begin{align}
f(\wprime) - f(\w) \le \langle \nabla f(\w), \wprime - \w \rangle + \frac{L}{2} \norm{\wprime - \w}^2,
\end{align}
which gives the desired inequality.
\end{proof}

\subsection{Proof of Proposition \ref{prop:warmup_optimality}}

\begin{proof}[Proof of \Cref{prop:warmup_optimality}] 
We apply the Lemma~\ref{lem:descent} to each iteration $j$ of the SPGD warm-up loop (Line 9 in Algorithm~\ref{algo:algo}, with $\beta=0$). The update is $\x_t^{(j+1)} = \x_t^{(j)} - \eta \g_l(\x_t^{(j)})$.
Let $f = L_t$, $\w = \x_t^{(j)}$, and $\wprime = \x_t^{(j+1)}$. The displacement is $\wprime-\w = -\eta \g_l(\w)$. Substituting into Lemma~\ref{lem:descent}, we get:
\begin{align}
L_t(\x_t^{(j+1)}) &\le L_t(\x_t^{(j)}) + \langle \g_l(\x_t^{(j)}), -\eta \g_l(\x_t^{(j)}) \rangle + \frac{L}{2} \norm{-\eta \g_l(\x_t^{(j)})}^2  \notag \\
&= L_t(\x_t^{(j)}) - \eta \langle \g_l(\x_t^{(j)}), \g_l(\x_t^{(j)}) \rangle + \frac{L}{2} \eta^2 \norm{\g_l(\x_t^{(j)})}^2 \notag \\
&= L_t(\x_t^{(j)}) - \eta \norm{\g_l(\x_t^{(j)})}^2 + \frac{L\eta^2}{2} \norm{\g_l(\x_t^{(j)})}^2 \notag \\
&= L_t(\x_t^{(j)}) - \eta \left(1 - \frac{L\eta}{2}\right) \norm{\g_l(\x_t^{(j)})}^2.
\end{align}
The condition $\eta < 1/L$ ensures that $(1 - L\eta/2) > 1/2 > 0$. Thus, each warm-up step guarantees a non-increasing likelihood objective, $L_t(\x_t^{(j+1)}) \le L_t(\x_t^{(j)})$, with strict decrease if $\g_l(\x_t^{(j)}) \neq \mathbf{0}$.

Summing this per-step inequality over $j=0, \dots, N-1$ yields:
\begin{equation} \label{eq:proof_sum}
\sum_{j=0}^{N-1} [L_t(\x_t^{(j+1)}) - L_t(\x_t^{(j)})] \le \sum_{j=0}^{N-1} \left[ - \eta \left(1 - \frac{L\eta}{2}\right) \norm{\g_l(\x_t^{(j)})}^2 \right]
\end{equation}
The right-hand side can be simplified by moving the constant factors outside the summation:
$$
\sum_{j=0}^{N-1} [L_t(\x_t^{(j+1)}) - L_t(\x_t^{(j)})] \le - \eta \left(1 - \frac{L\eta}{2}\right) \sum_{j=0}^{N-1} \norm{\g_l(\x_t^{(j)})}^2.
$$
The left-hand side of Eq.~\eqref{eq:proof_sum} is a telescoping sum:
\begin{align*}
&\sum_{j=0}^{N-1} [L_t(\x_t^{(j+1)}) - L_t(\x_t^{(j)})] \\
= &[L_t(\x_t^{(1)}) - L_t(\x_t^{(0)})] + [L_t(\x_t^{(2)}) - L_t(\x_t^{(1)})] + \dots \\
\quad& \dots + [L_t(\x_t^{(N)}) - L_t(\x_t^{(N-1)})] \\
=& L_t(\x_t^{(N)}) - L_t(\x_t^{(0)}).
\end{align*}
Substituting this back, we have:
$$
L_t(\x_t^{(N)}) - L_t(\x_t^{(0)}) \le - \eta \left(1 - \frac{L\eta}{2}\right) \sum_{j=0}^{N-1} \norm{\g_l(\x_t^{(j)})}^2.
$$
Rearranging this inequality to isolate $L_t(\x_t^{(N)})$ yields the final result stated in the proposition:
$$
L_t(\x_t^{(N)}) \le L_t(\x_t^{(0)}) - \eta \sum_{j=0}^{N-1} \left(1 - \frac{L\eta}{2}\right) \norm{\g_l(\x_t^{(j)})}^2.
$$
This completes the proof.
\end{proof}

\section{More Discussions}


\subsection{Novel Aspects of SPGD}

While prior works have explored momentum-based techniques~\cite{he2024faststablediffusioninverse, chung2024prompttuning} and multi-step likelihood updates~\cite{xu2025rethinking, ye2024tfg, huang2024constrained}, SPGD's novelty lies in three key areas:
\begin{enumerate}
    \item \textbf{Analysis-Driven Motivation:} Our core contribution is the novel gradient-centric analysis identifying the dual problems of \textit{gradient conflict} and \textit{gradient fluctuation}. SPGD's components are not an ad-hoc combination but are directly motivated by this analysis to solve these specific, identified issues, providing a principled foundation.
    \item \textbf{Technical Novelty of ADM:} Unlike prior works that use standard, constant momentum, our ADM is technically more advanced. It dynamically adjusts the momentum strength based on directional consistency (Eq.~\eqref{eq:alpha_def}), making it more robust than a fixed momentum coefficient, as validated in our ablation studies.
    \item \textbf{Synergistic Design:} Our warm-up phase is specifically designed to mitigate gradient conflict before the denoising step. This creates a stable context that enables ADM to effectively dampen fluctuations. The synergistic combination of a conflict-mitigating warm-up with a fluctuation-dampening ADM is a unique design absent in prior art.
\end{enumerate}




\begin{figure*}[t]
    \centering
  \begin{tikzpicture}
        \node[anchor=south west,inner sep=0] (image) at (0,0) {\includegraphics[width=.9\linewidth]{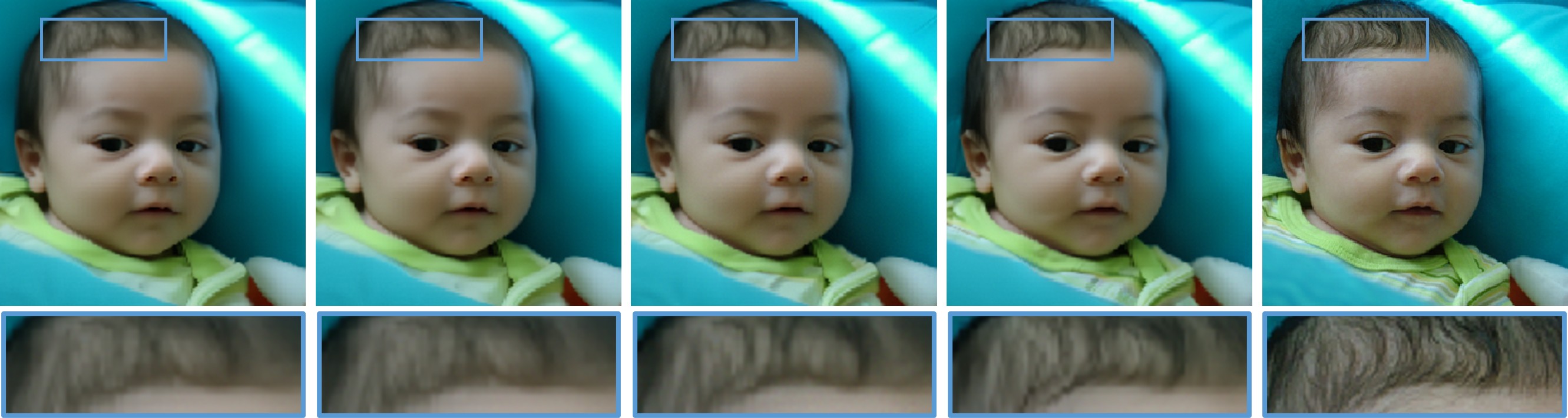}};
        \begin{scope}[x={(image.south east)},y={(image.north west)}]
            \node at (0.092,1.05) {{ baseline}};
            \node at (0.30,1.05) {{w/ ADM}};
            \node at (0.5,1.05) {{w/ warm-up}};
            \node at (0.70,1.05) {\textcolor{myblue}{\textbf{SPGD}}};
            \node at (0.90,1.05) {{reference}};
        \end{scope}
    \end{tikzpicture}
        \caption{Visual results of ablation study on design components. The complete SPGD shows clearer hair textures.}
    \label{fig:component}
    \vspace{-0.2cm}
\end{figure*}

\subsection{Relation to Pixel-as-Parameter (PxP)}
\label{subsec:relation_pxp}

Our SPGD method shares conceptual ground with Pixel-as-Parameter (PxP)~\citep{dinh2023pixelasparam} in viewing the reverse process through a gradient lens and acknowledging potential gradient conflicts. However, our solutions are fundamentally different.

PxP's approach is to explicitly \textbf{resolve} conflicts by projecting gradients to enforce orthogonality before the update. In contrast, SPGD aims to \textbf{mitigate} conflicts by using a progressive warm-up phase. This phase allows the state $\x_t$ to gradually adapt to the likelihood guidance \textit{before} the denoising step is applied. We adopt this strategy because we found that explicit gradient projection can lead to information loss and degrade performance.

Furthermore, PxP computes all guidance terms from the initial $\x_t$ and applies them in a single, combined update. SPGD's architecture is different, introducing an inner loop that iteratively refines $\x_t$ with the likelihood gradient prior to denoising. In essence, SPGD employs sequential refinement for conflict mitigation, whereas PxP uses simultaneous projection for conflict resolution.




\subsection{ADM Stability}
\label{app:adm_theory}

While a full theoretical convergence proof for SPGD with ADM is a significant undertaking beyond the scope of this paper, we can provide further intuition on its stability. The ADM update rule, $\widetilde{\mathbf{g}}_l^{(j)} = \alpha_j \beta \widetilde{\mathbf{g}}_l^{(j-1)} + (1 - \alpha_j \beta) \mathbf{g}_l^{(j)}$, can be interpreted as a standard momentum update with a dynamically adjusted momentum coefficient $\beta' = \alpha_j \beta$.

The stability of standard momentum methods is well-understood and often relies on the step size and momentum coefficient being within certain bounds related to the properties of the objective function (e.g., its smoothness $L$ and strong convexity $\mu$). Our adaptive weight $\alpha_j \in [0, 1]$ ensures that the effective momentum $\beta'$ is always less than or equal to the base coefficient $\beta$ ($\beta' \le \beta$). When the gradient direction changes abruptly, $\alpha_j$ decreases, reducing the effective momentum $\beta'$. This reduction pushes the update behavior closer to that of standard gradient descent, a method with well-known and robust convergence guarantees under standard assumptions (e.g., sufficiently small step size).

In essence, ADM acts as a self-stabilizing mechanism. It leverages the acceleration benefits of momentum when the optimization landscape is smooth and consistent, but reduces its reliance on momentum precisely when the gradient signal becomes noisy or changes direction sharply. This connection to the more conservative gradient descent method provides strong theoretical intuition for why ADM contributes to the overall stability of the SPGD framework. 


\section{Implementation Details} \label{sec:Implementation_Details}
\subsection{Comparison Methods}
To ensure a fair comparison, all pixel-based diffusion methods (Score-SDE, MCG, DDRM, DPS, DiffPIR, DPPS, and RED-Diff) share the same pre-trained checkpoint. Below, we describe our implementation details for each method.

\textbf{PnP-ADMM:} We use the pre-trained DnCNN model from \cite{zhang2017beyond}. The regularization parameter is set to $\tau=0.2$, and the algorithm is run for 10 iterations on all tasks.

\textbf{Score-SDE:} In Score-SDE, data consistency is enforced via a projection step following unconditional diffusion denoising at every iteration. The projection configuration follows the recommendations in \cite{chung2022improving}.

\textbf{MCG:} Our implementation of MCG is based on the source code provided in \cite{chung2022improving}. We replicate the evaluation settings reported in the paper.

\textbf{DPS:} Experimental outcomes for DPS are obtained from the implementation in \cite{chung2023diffusion} using 1,000 DDPM sampling steps. All hyper-parameters are set as suggested in D.1 section of the corresponding paper.

\textbf{DDRM:} For DDRM, we utilize 20 NFEs DDIM sampling~\cite{song2021denoising} with parameters $\eta = 0.85$ and $\eta_B = 1.0$, as recommended in the literature.

\textbf{DiffPIR:} We adopt the original code of DiffPIR from \cite{zhu2023denoising}. We adjust the diffusion steps to 20 for implementation efficiency. Hyper-parameters are selected to reflect the settings for the noiseless case.

\textbf{DPPS:} We follow the default configuration as provided by the open-sourced code in \cite{wu2024diffusion}.

\textbf{RED-Diff:} The implementation for RED-Diff is adopted from \cite{mardani2024red-diff}. We use the Adam optimizer over 1,000 steps, setting $\lambda=0.25$ and $lr=0.25$, consistent with the experimental settings in the paper.

\subsection{Our SPGD:}
For our SPGD implementation detailed in \cref{algo:algo}, we follow the time step discretization and noise schedule from EDM~\cite{karras2022elucidating}. We use $T=100$ outer diffusion steps. Each outer step includes $N=5$ inner warm-up iterations, corresponding to 500 NFEs in total. The base momentum coefficient for ADM is set to $\beta=0.95$.

The learning rate $\zeta$ is tuned separately for each task. The specific values are provided in \cref{tab:Hyperparameters}.

\begin{table}[htbp]
\centering
\caption{Learning rates $\zeta$ for each task.}
\resizebox{.75\linewidth}{!}{%
\begin{tabular}{lcc}
\toprule
Dataset & FFHQ & ImageNet \\
\midrule
Inpainting           & 2.5 & 2.5   \\
Deblurring (Gaussian)& 1.5 & 1     \\
Deblurring (Motion)  & 1   & 1     \\
Super-Resolution ($\times 4$) & 8   & 7     \\
\bottomrule
\end{tabular}
}
\label{tab:Hyperparameters}
\end{table}






\subsection{Computational Cost}

\begin{table}[ht]
\centering
\caption{Computational Cost (\(T=100\)) under different \(N\).}
\begin{tabular}{lcccccc}
\toprule
\textbf{Metric} & \textbf{$N$=1} & \textbf{$N$=2} & \textbf{$N$=5} & \textbf{$N$=10} & \textbf{$N$=20} \\
\midrule
FLOPS (T) ↓    & 19.38  & 38.75  & 96.89   & 193.77  & 387.54 \\
Time (s) ↓     & 8.69   & 15.11  & 33.83   & 65.19   & 129.14 \\
\bottomrule
\end{tabular}
\label{tab:computational_cost}
\end{table}

To provide a clearer understanding of the computational overhead, we report both the inference time and the forward-pass cost (in FLOPs) of the network $\epsilonb_{\thetab}$ for each image on RTX 4090 GPU, with $T=100$ and varying $N$ on FFHQ inpainting.

\section{More Visual Results} \label{app:more}
In this section, we provide supplementary experimental results to further substantiate the effectiveness of our proposed method. Specifically, \cref{fig:component} presents high-resolution visualizations for the ablation study on the design components.
\cref{fig:more_1} to \cref{fig:more_4} further illustrate the visual advantages of our proposed approach.
\cref{fig:last1} to \cref{fig:last4} provide additional examples of angular relationships of gradients, both before and after ADM smoothing.

\begin{figure*}[ht]
    \centering
    \includegraphics[width=.99\linewidth]{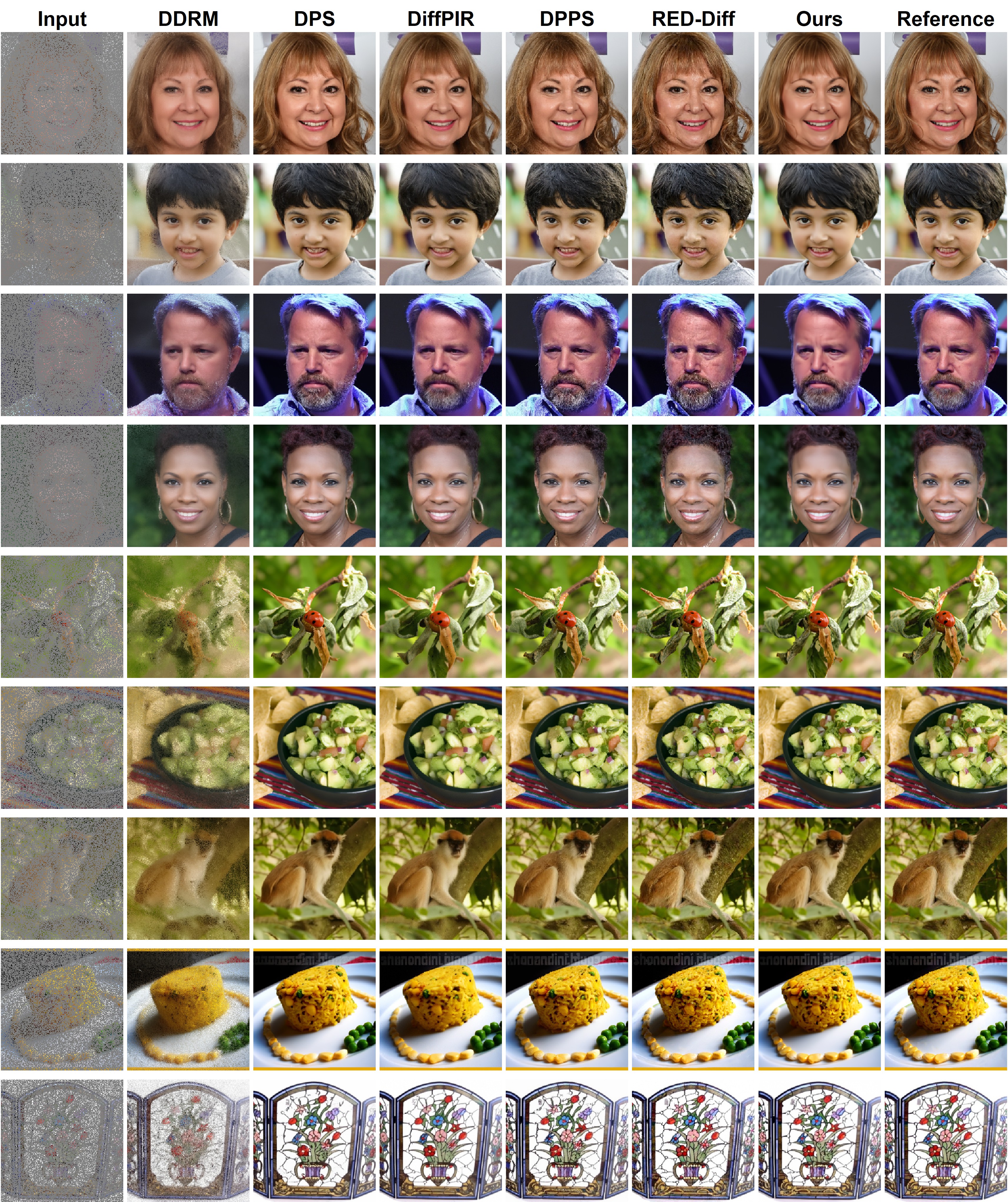}
    \caption{Qualitative visual comparison of \textbf{random inpainting} on FFHQ (top rows) and ImageNet (bottom rows).}\label{fig:more_1}
\end{figure*}

\begin{figure*}[ht]
    \centering
    \includegraphics[width=.99\linewidth]{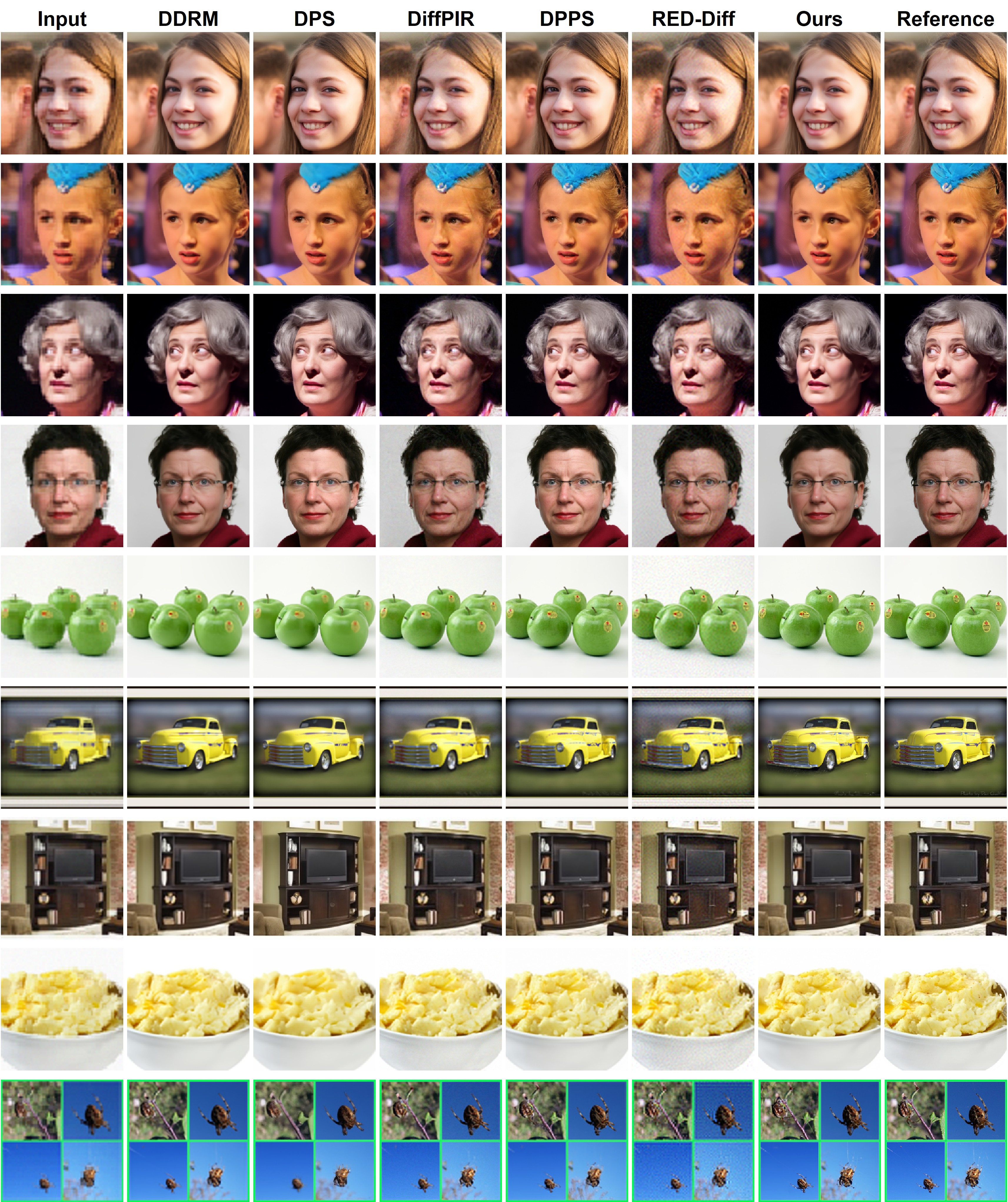}
    \caption{Qualitative visual comparison of \textbf{SR $\times$4} on FFHQ (top rows) and ImageNet (bottom rows).}\label{fig:more_2}
\end{figure*}

\begin{figure*}[ht]
    \centering
    \includegraphics[width=.99\linewidth]{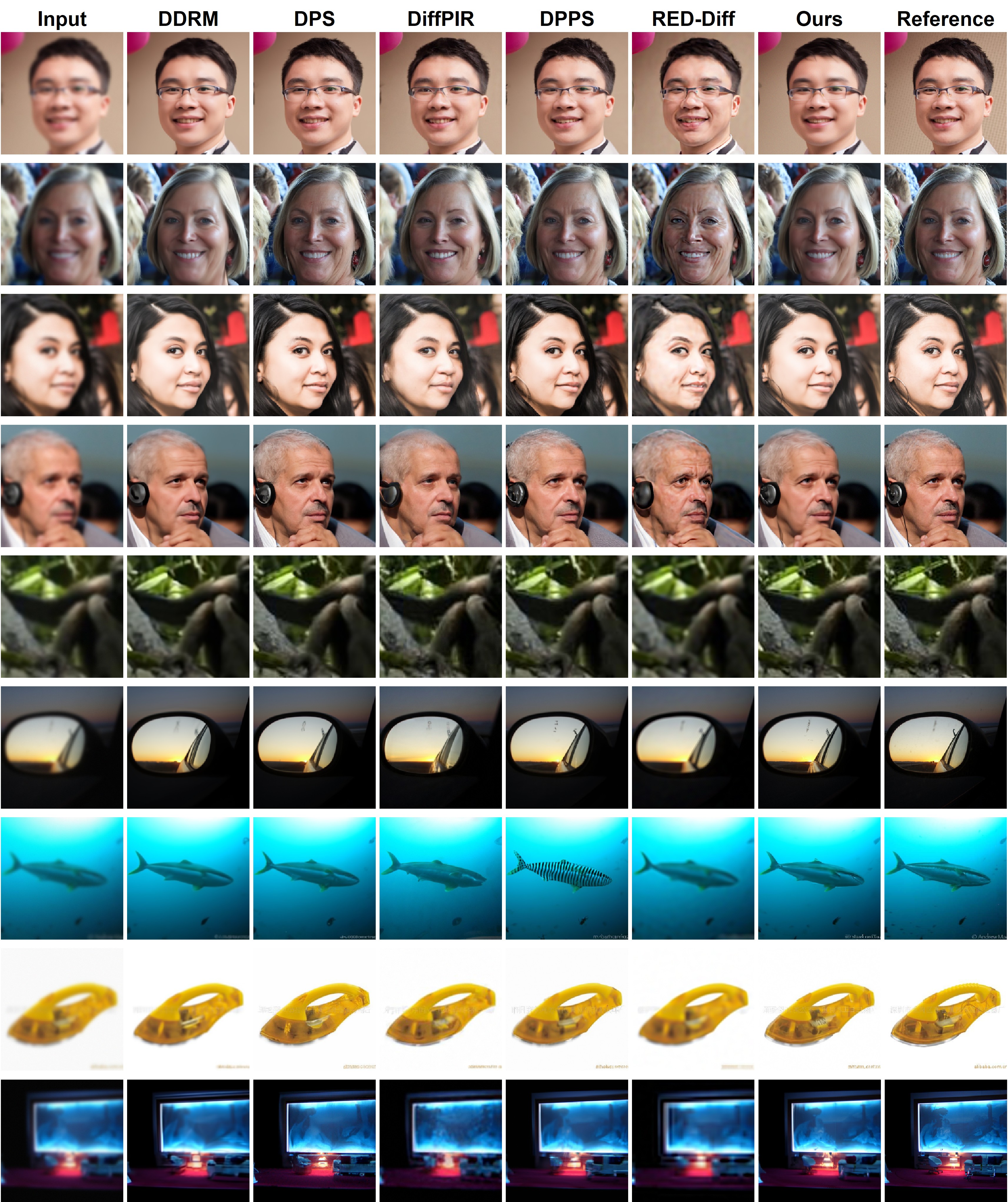}
    \caption{Qualitative visual comparison of \textbf{Gaussian deblurring} on FFHQ (top rows) and ImageNet (bottom rows).}\label{fig:more_3}
\end{figure*}

\begin{figure*}[ht]
    \centering
    \includegraphics[width=0.96\linewidth]{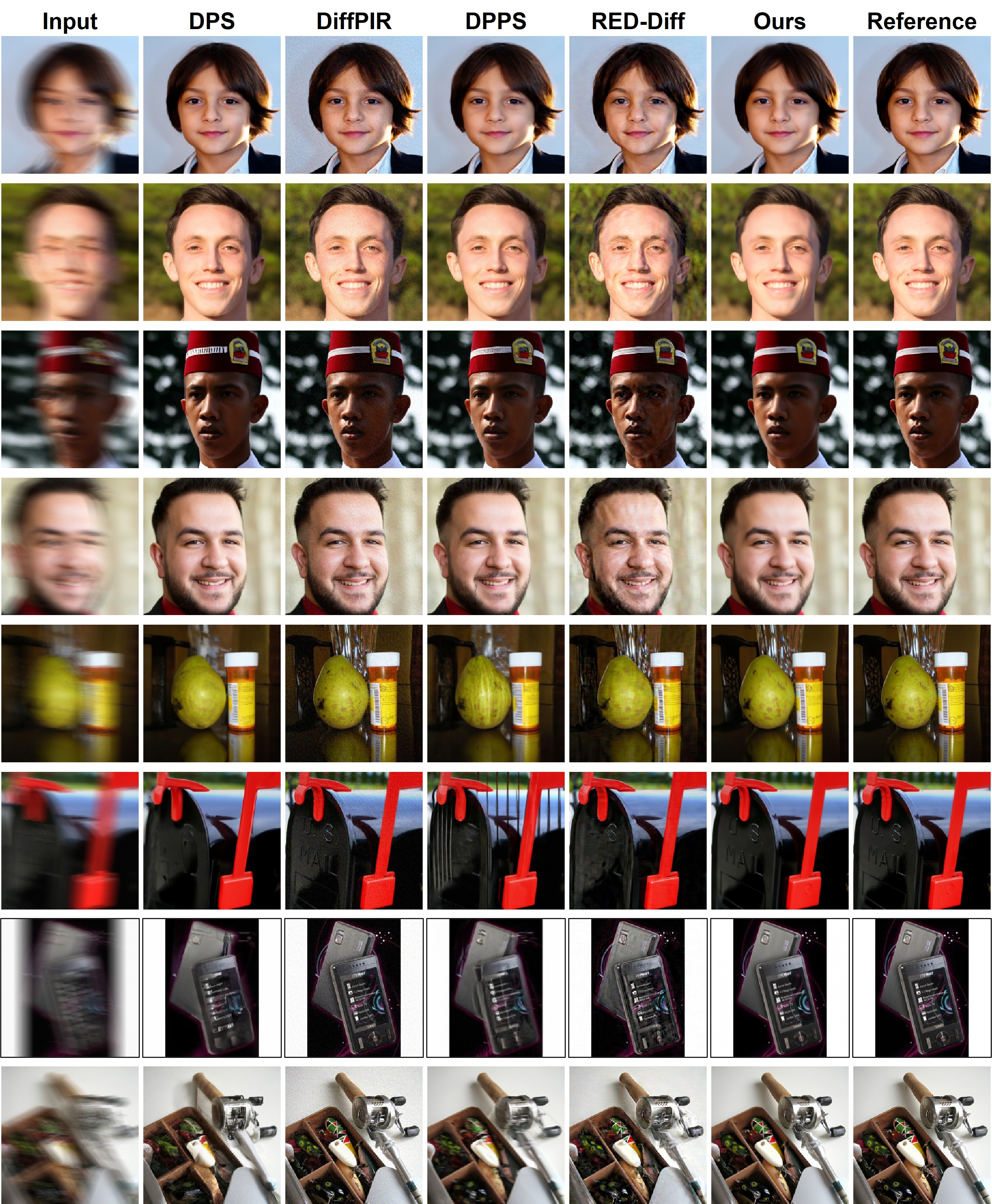}
    \caption{Qualitative visual comparison of \textbf{motion deblurring} on FFHQ (top rows) and ImageNet (bottom rows).}\label{fig:more_4}
\end{figure*}

\begin{figure*}[htbp] 
    \centering
    \begin{subfigure}{0.45\textwidth}
        \centering
        \includegraphics[width=0.49\linewidth]{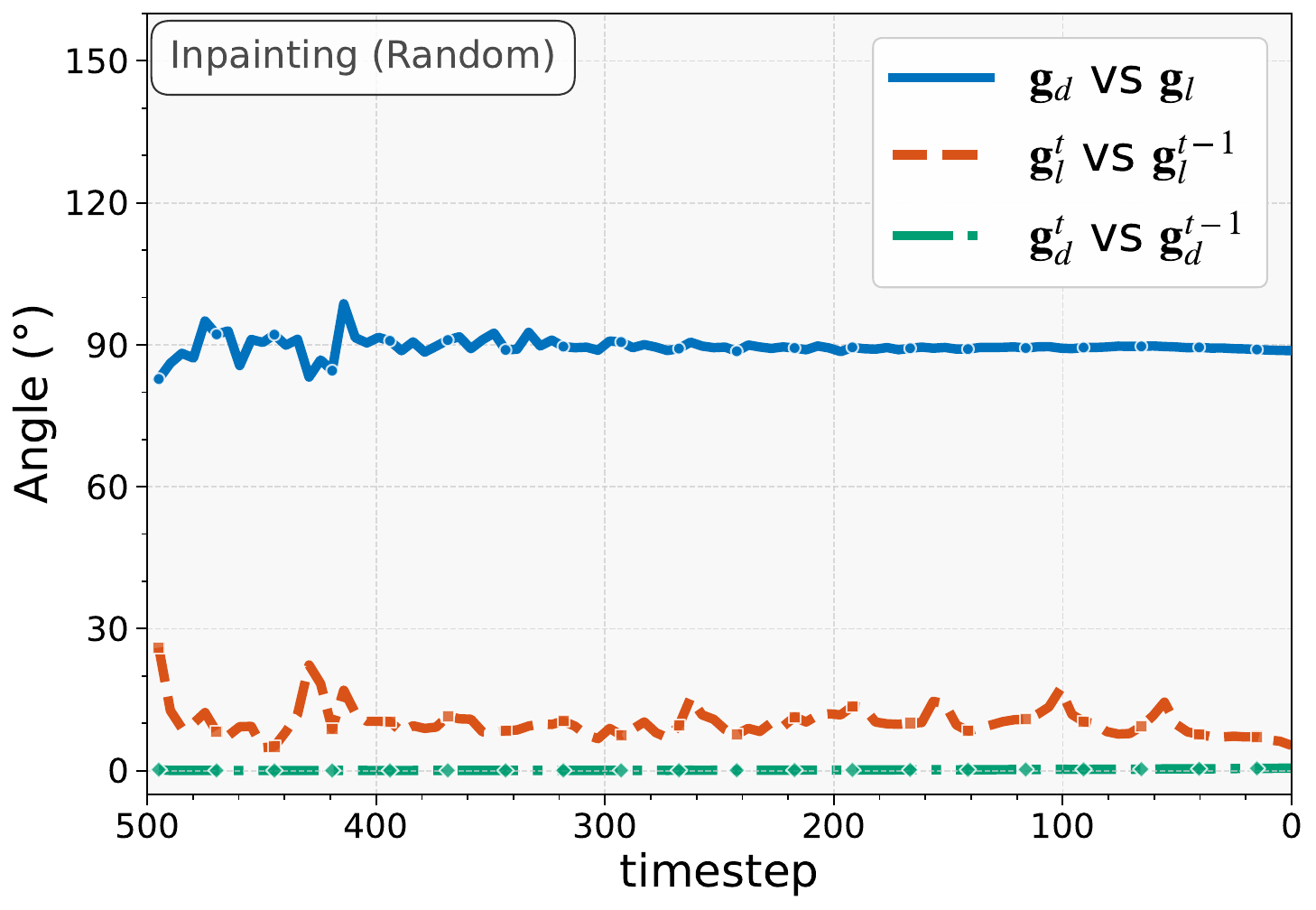}
        \includegraphics[width=0.49\linewidth]{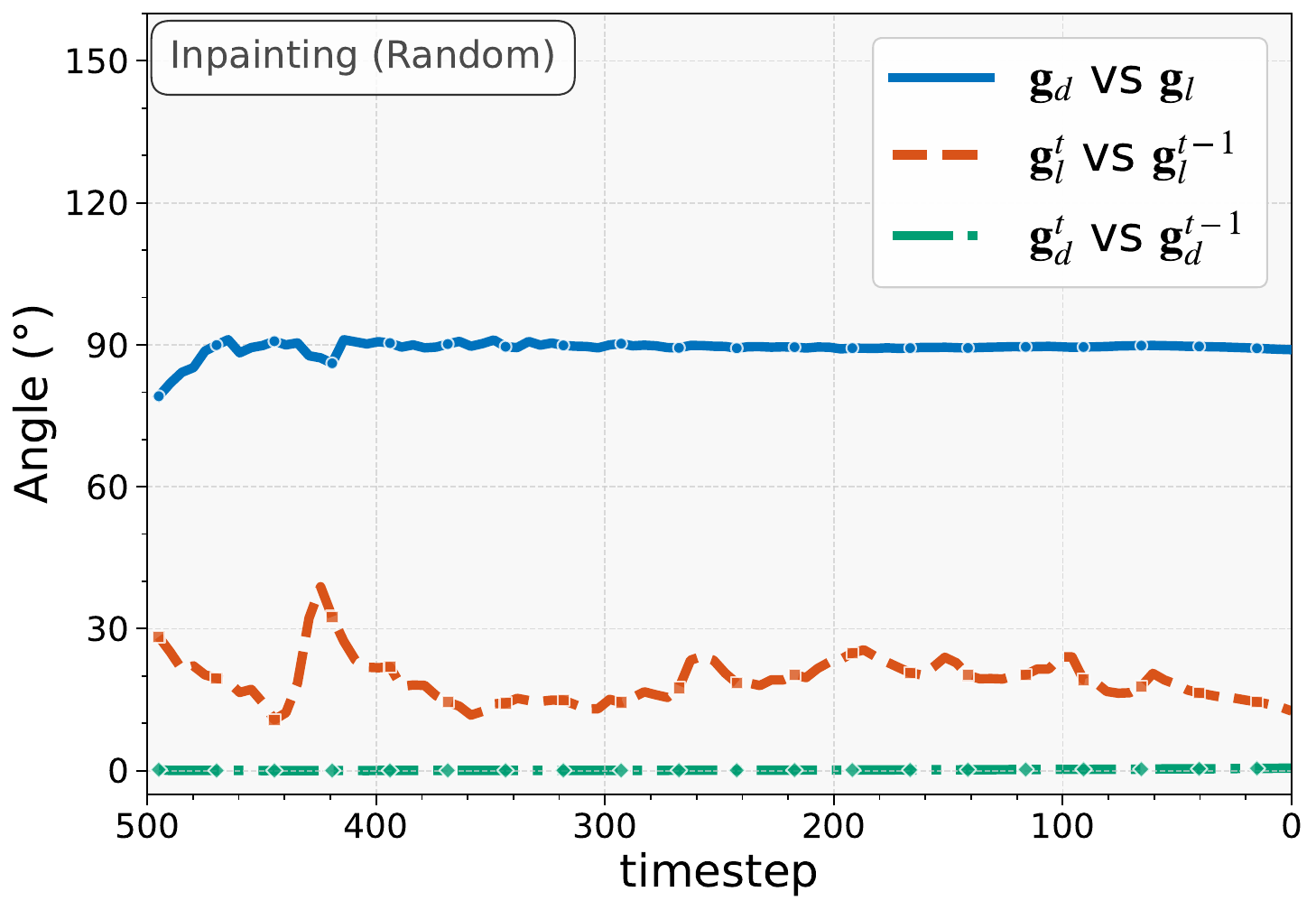}
        \caption{Sample 1}
    \end{subfigure}
    \begin{subfigure}{0.45\textwidth}
        \centering
        \includegraphics[width=0.49\linewidth]{before/Inpainting_Random_Column_2.pdf}
        \includegraphics[width=0.49\linewidth]{after/Inpainting_Random_Column_2.pdf}
        \caption{Sample 2}
    \end{subfigure}
        \begin{subfigure}{0.45\textwidth}
        \centering
        \includegraphics[width=0.49\linewidth]{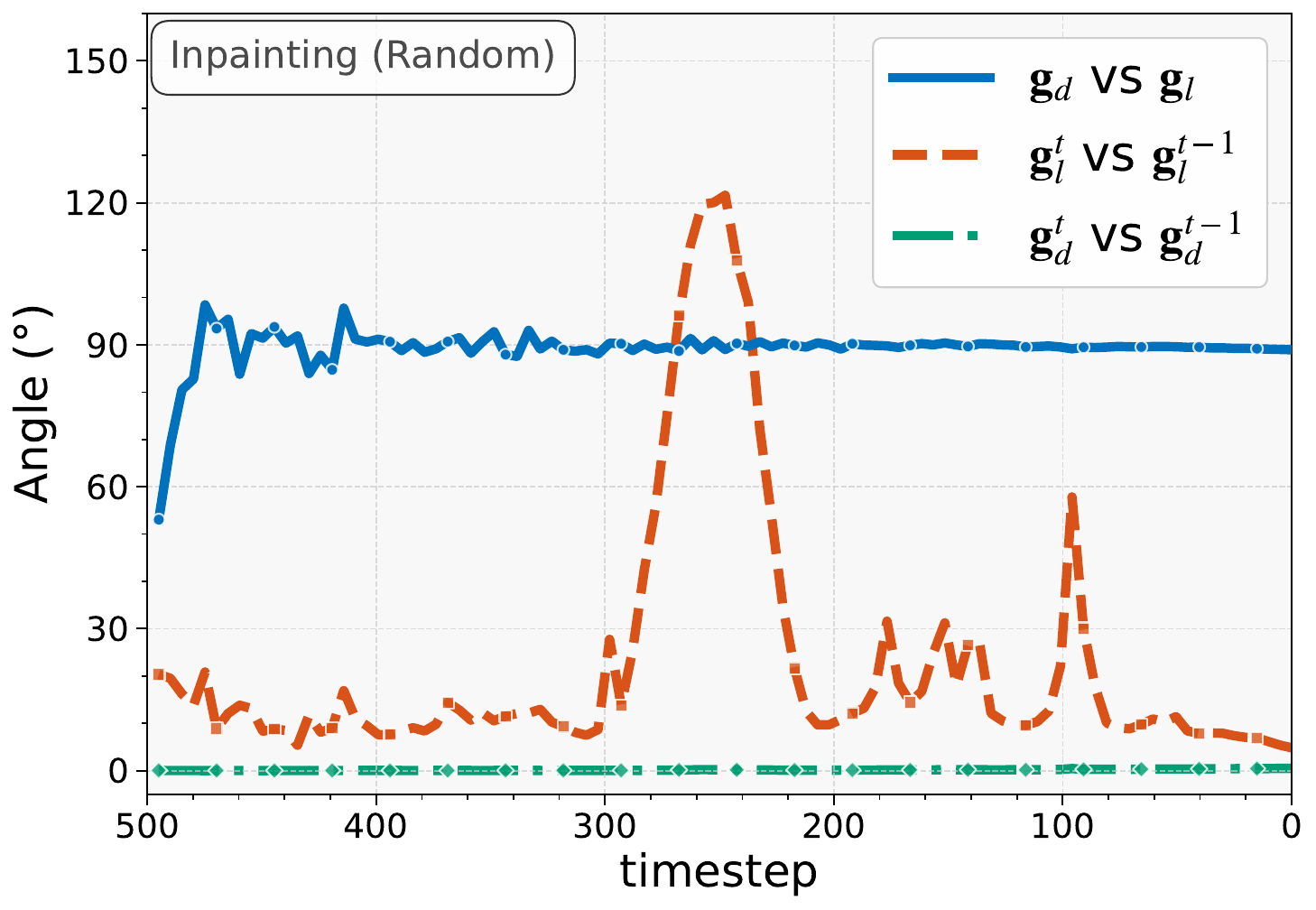}
        \includegraphics[width=0.49\linewidth]{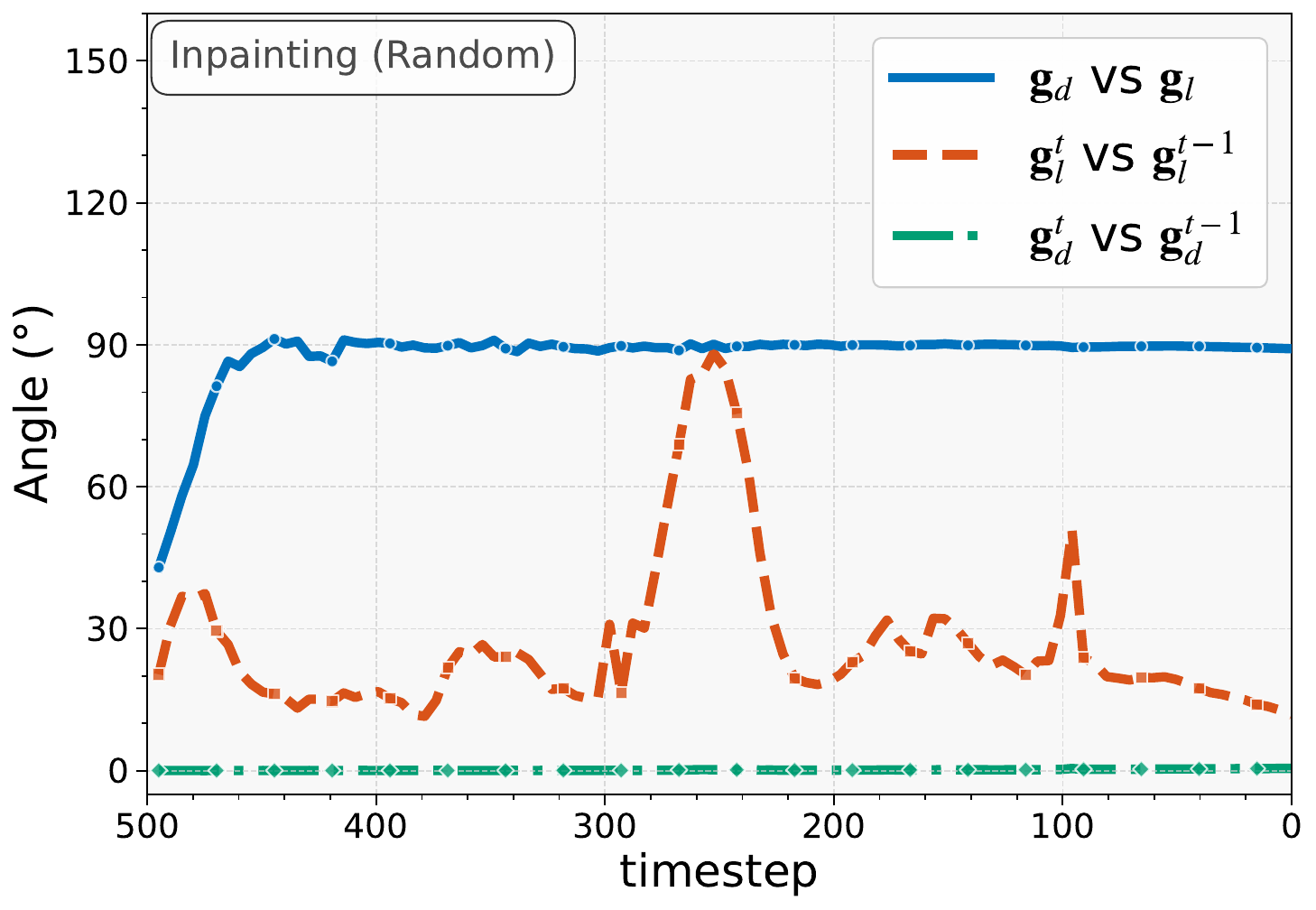}
        \caption{Sample 3}
    \end{subfigure}
    \begin{subfigure}{0.45\textwidth}
        \centering
        \includegraphics[width=0.49\linewidth]{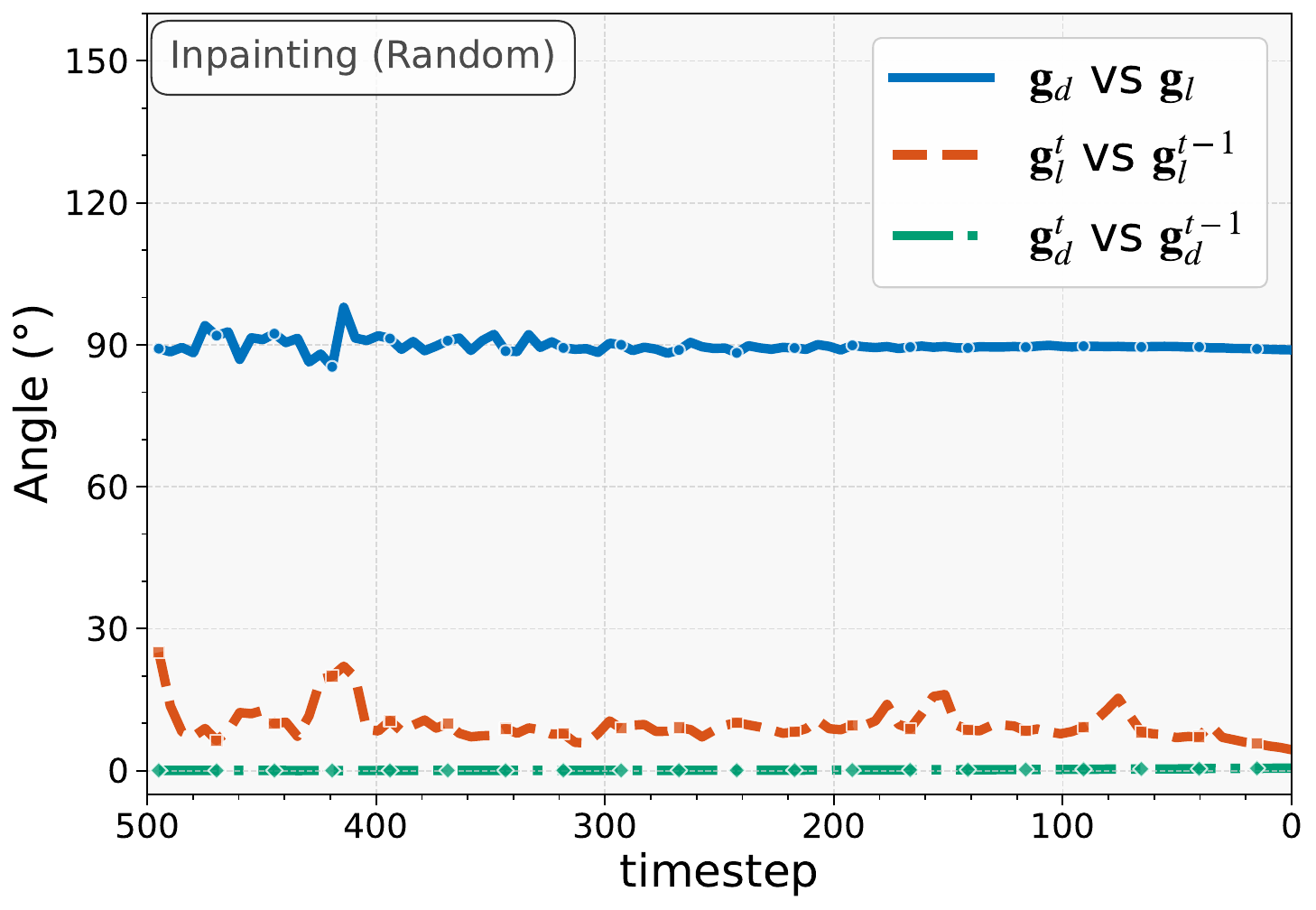}
        \includegraphics[width=0.49\linewidth]{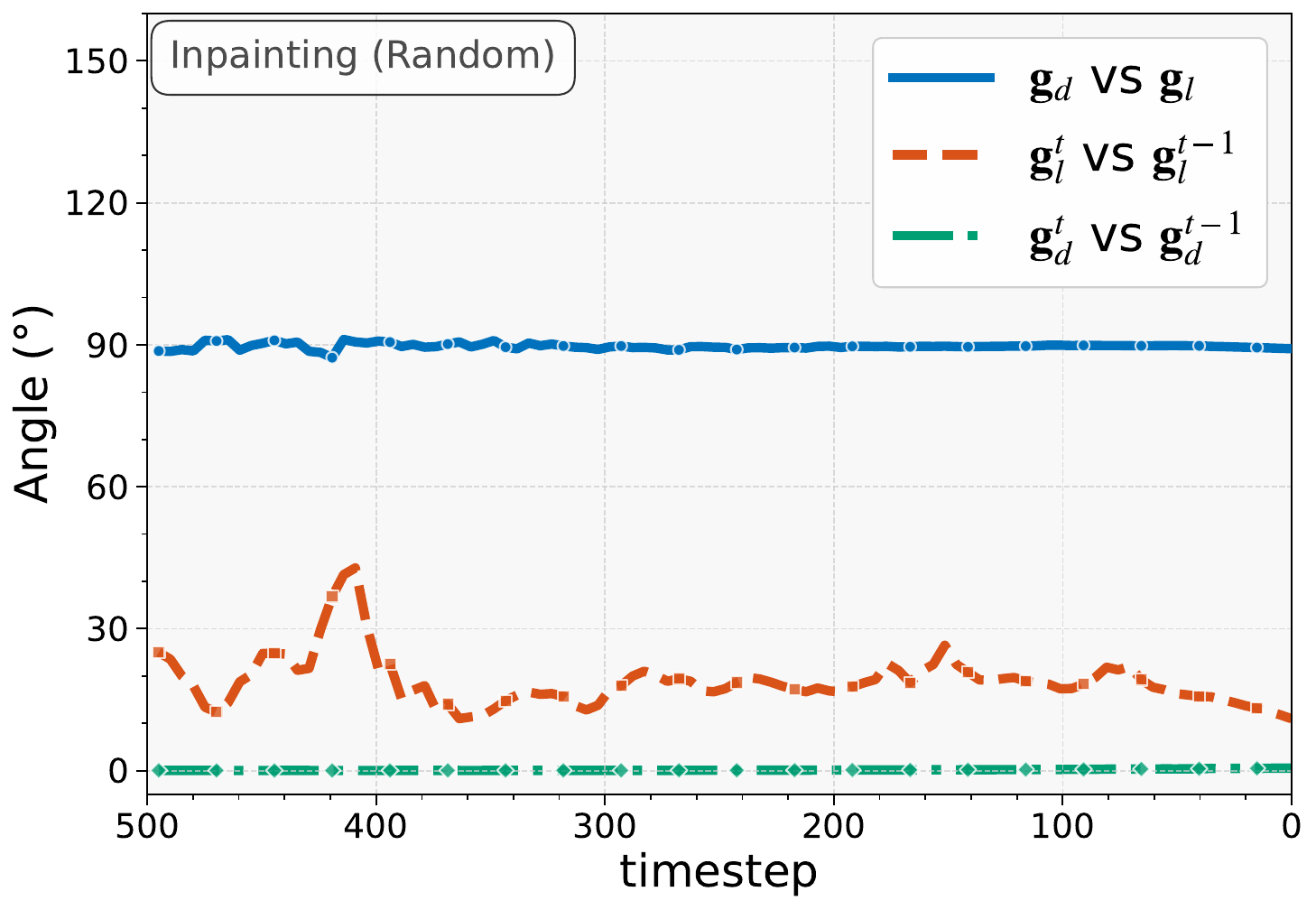}
        \caption{Sample 4}
    \end{subfigure}
        \begin{subfigure}{0.45\textwidth}
        \centering
        \includegraphics[width=0.49\linewidth]{before/Inpainting_Random_Column_5.pdf}
        \includegraphics[width=0.49\linewidth]{after/Inpainting_Random_Column_5.pdf}
        \caption{Sample 5}
    \end{subfigure}
        \begin{subfigure}{0.45\textwidth}
        \centering
        \includegraphics[width=0.49\linewidth]{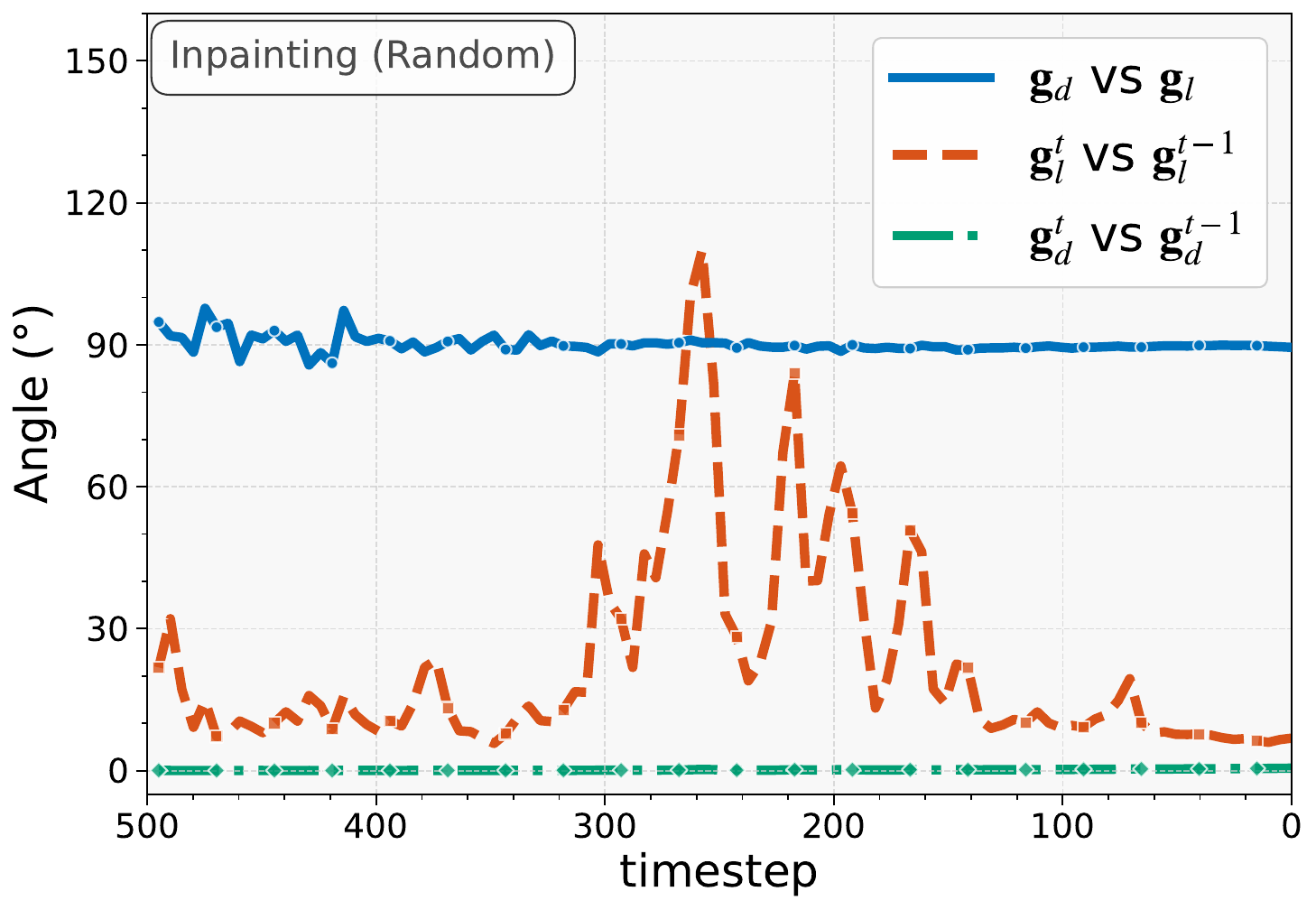}
        \includegraphics[width=0.49\linewidth]{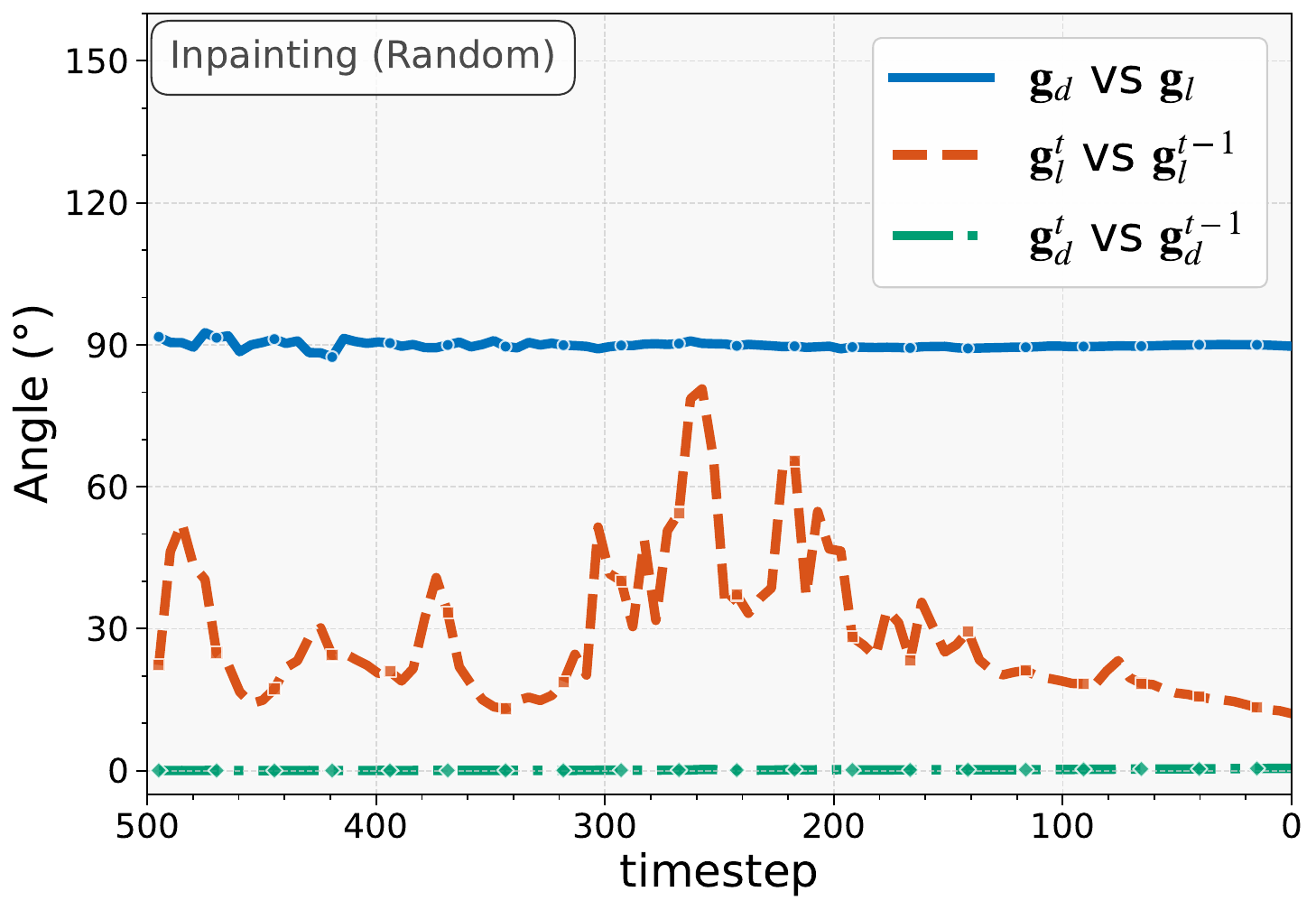}
        \caption{Sample 6}
    \end{subfigure}
    \caption{Gradient angles dynamics before (left) and after ADM smoothing (right) on random inpainting.}
    \label{fig:last1}
\end{figure*}

\begin{figure*}[htbp]
    \centering
    \begin{subfigure}{0.45\textwidth}
        \centering
        \includegraphics[width=0.49\linewidth]{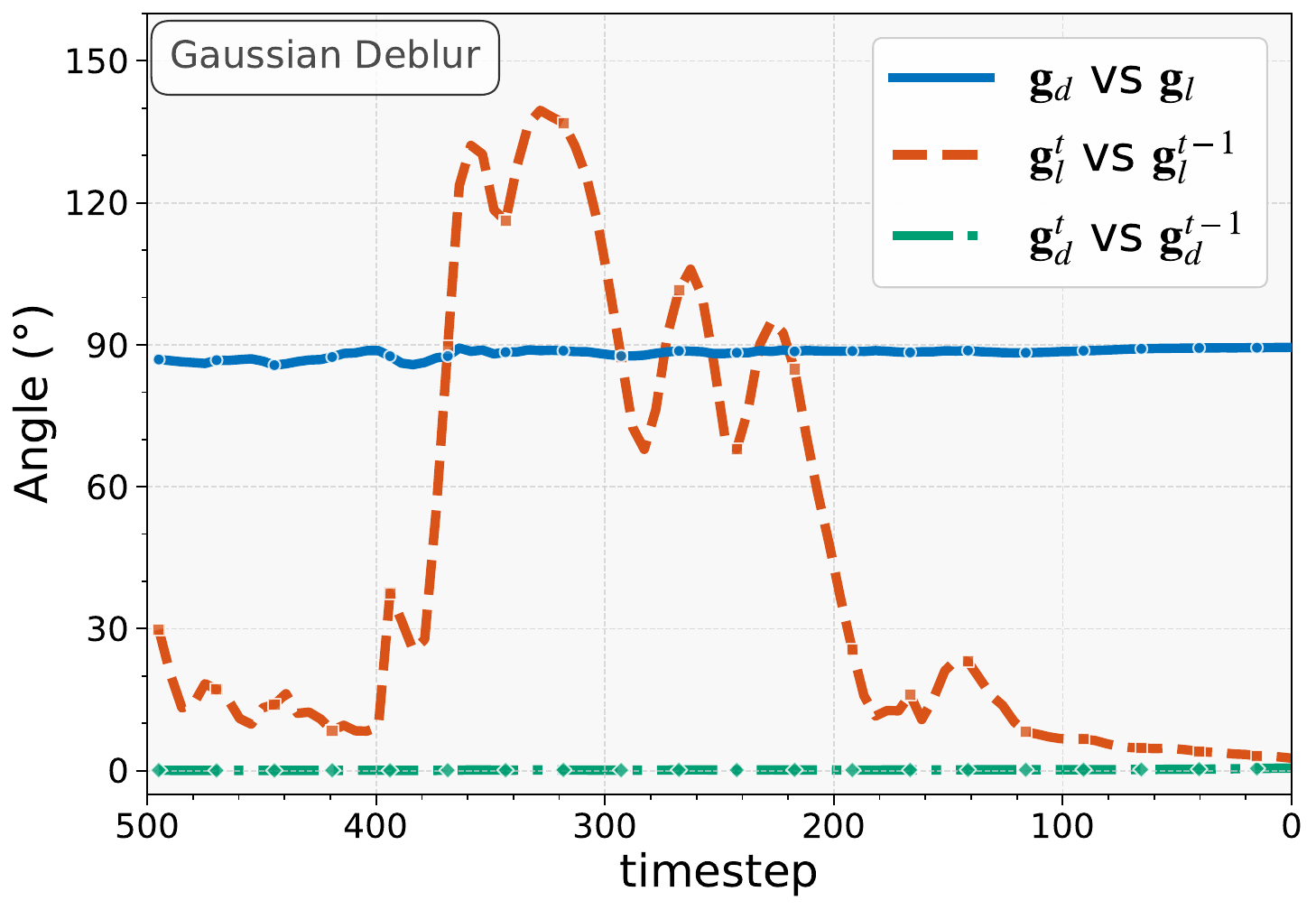}
        \includegraphics[width=0.49\linewidth]{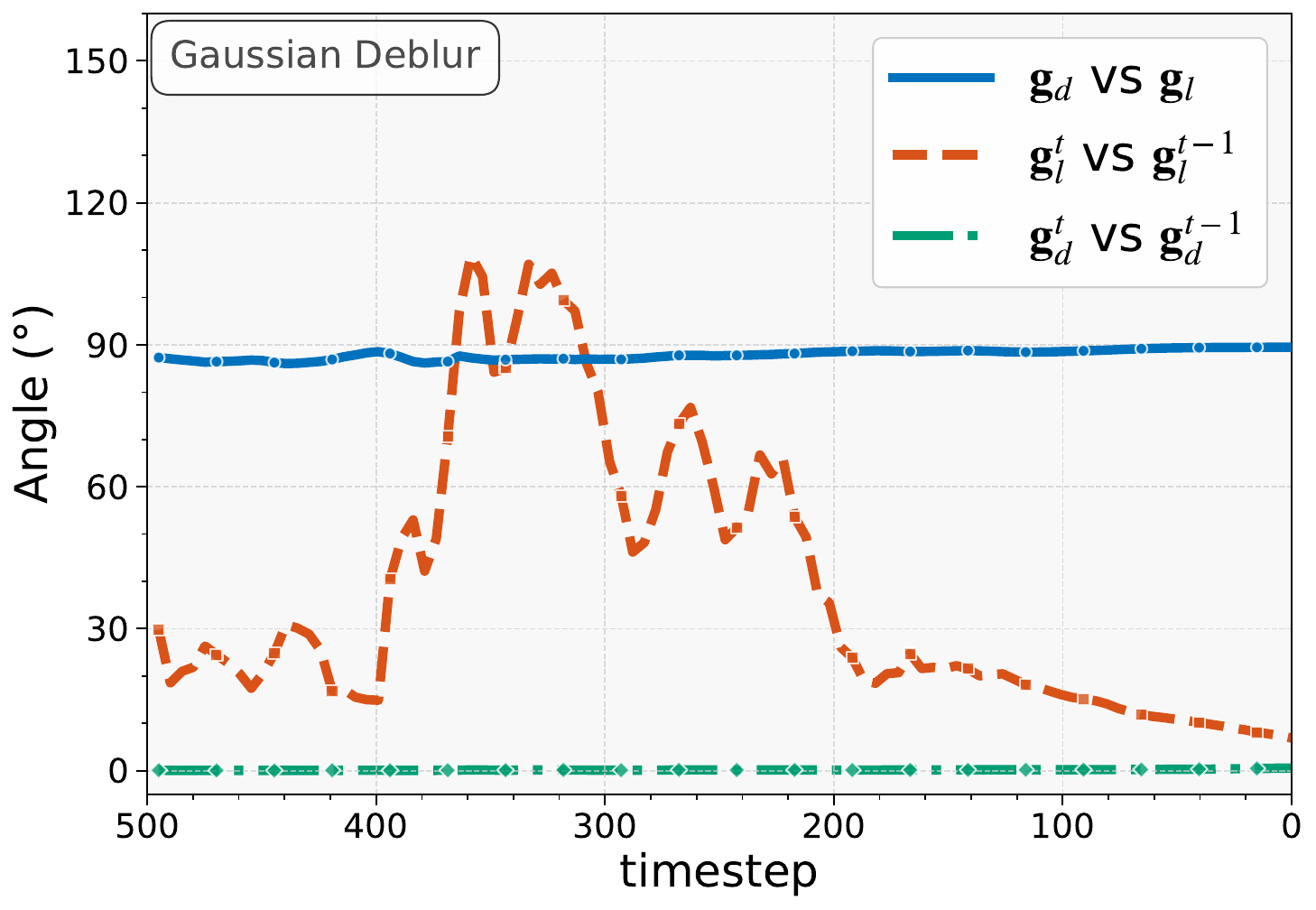}
        \caption{Sample 1}
    \end{subfigure}
    \begin{subfigure}{0.45\textwidth}
        \centering
        \includegraphics[width=0.49\linewidth]{before/Gaussian_Deblur_Column_2.pdf}
        \includegraphics[width=0.49\linewidth]{after/Gaussian_Deblur_Column_2.pdf}
        \caption{Sample 2}
    \end{subfigure}
        \begin{subfigure}{0.45\textwidth}
        \centering
        \includegraphics[width=0.49\linewidth]{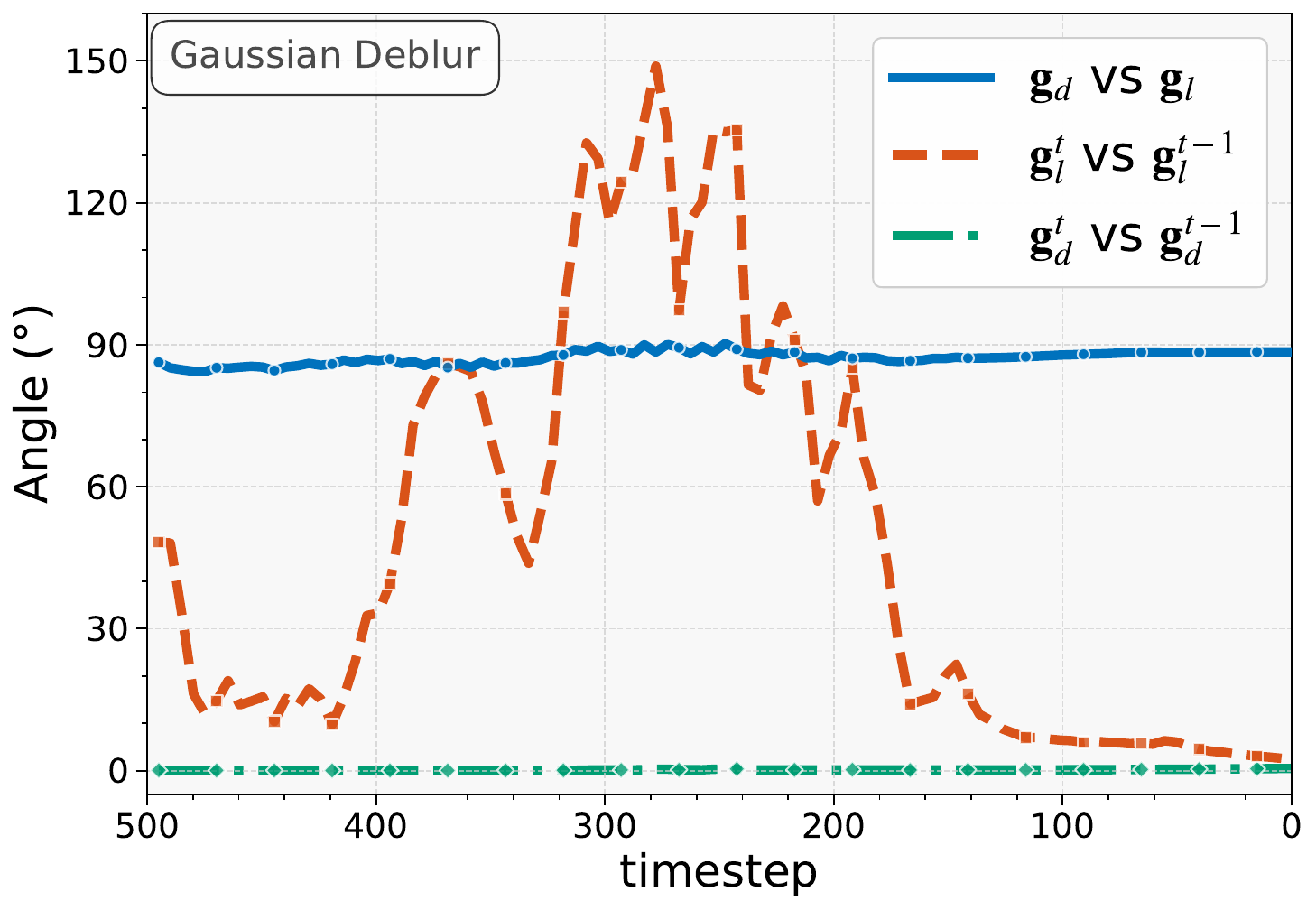}
        \includegraphics[width=0.49\linewidth]{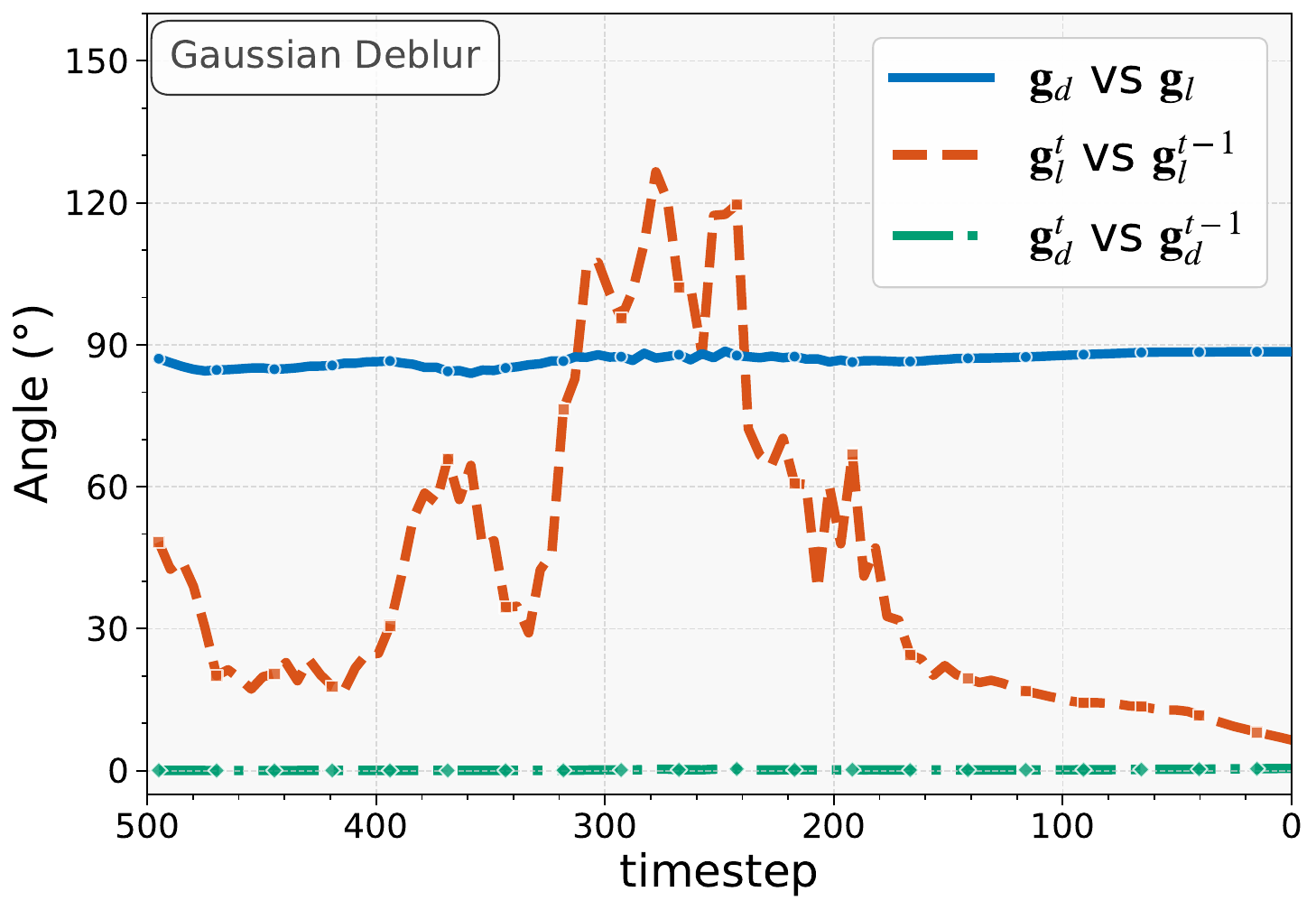}
        \caption{Sample 3}
    \end{subfigure}
    \begin{subfigure}{0.45\textwidth}
        \centering
        \includegraphics[width=0.49\linewidth]{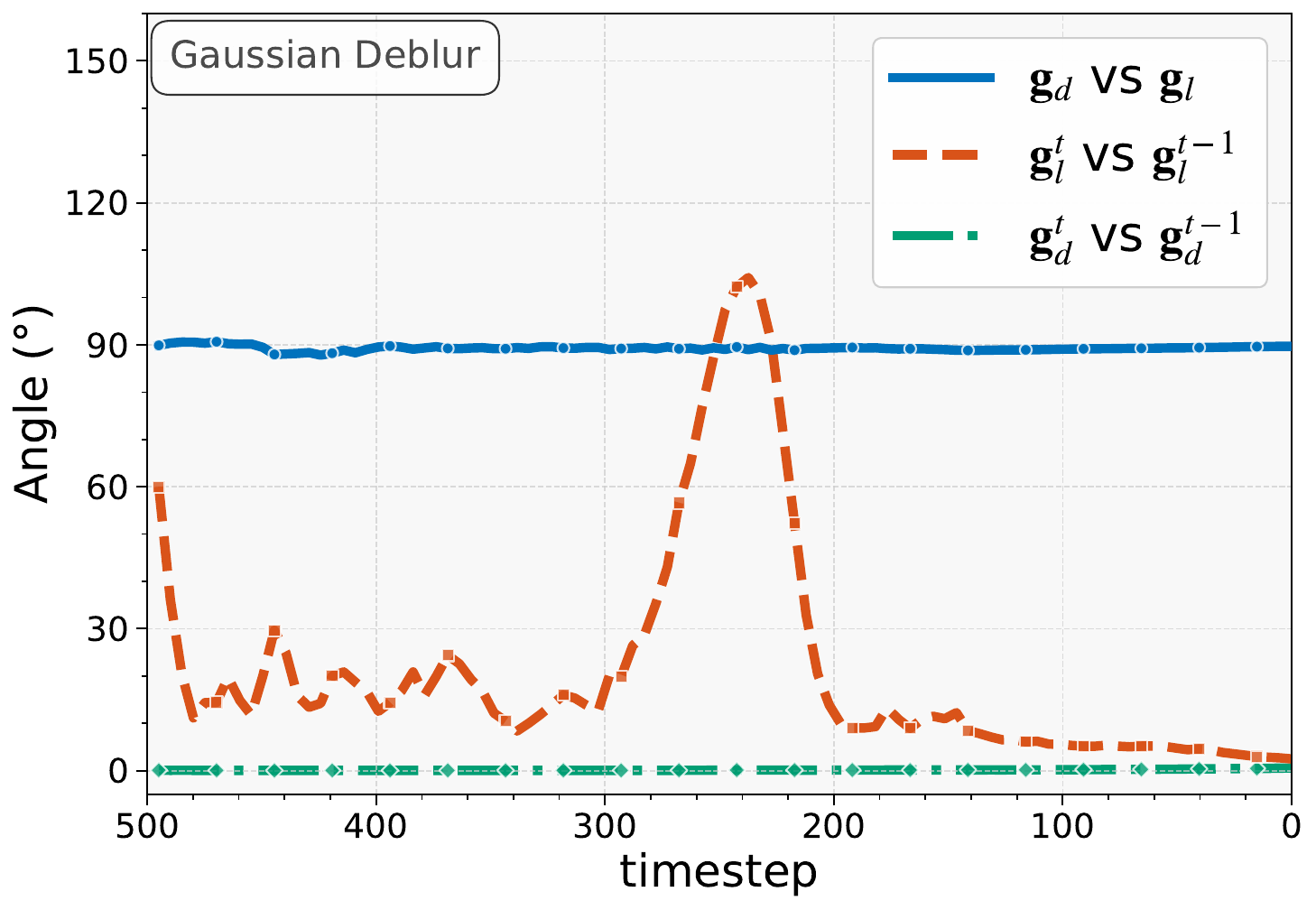}
        \includegraphics[width=0.49\linewidth]{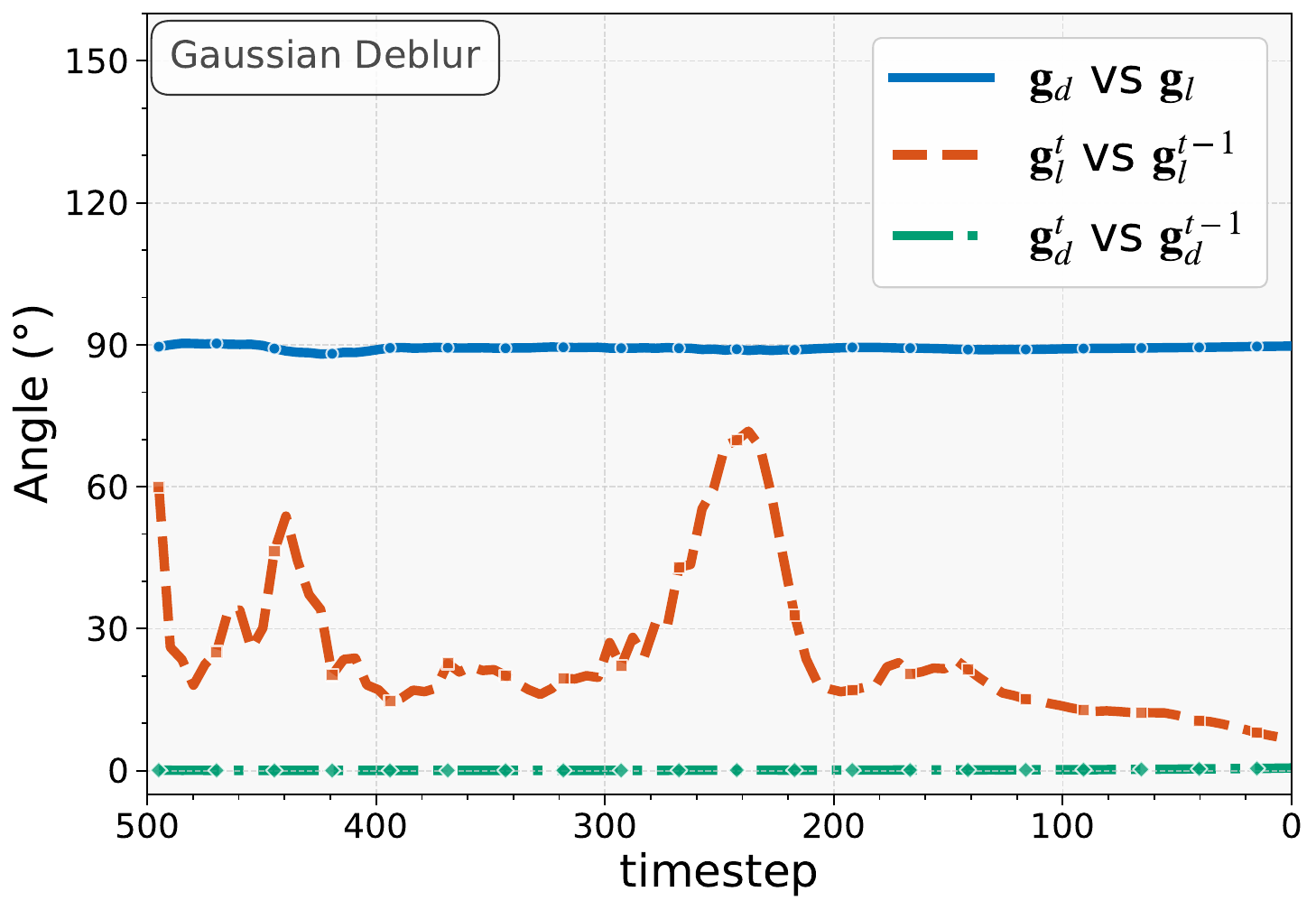}
        \caption{Sample 4}
    \end{subfigure}
        \begin{subfigure}{0.45\textwidth}
        \centering
        \includegraphics[width=0.49\linewidth]{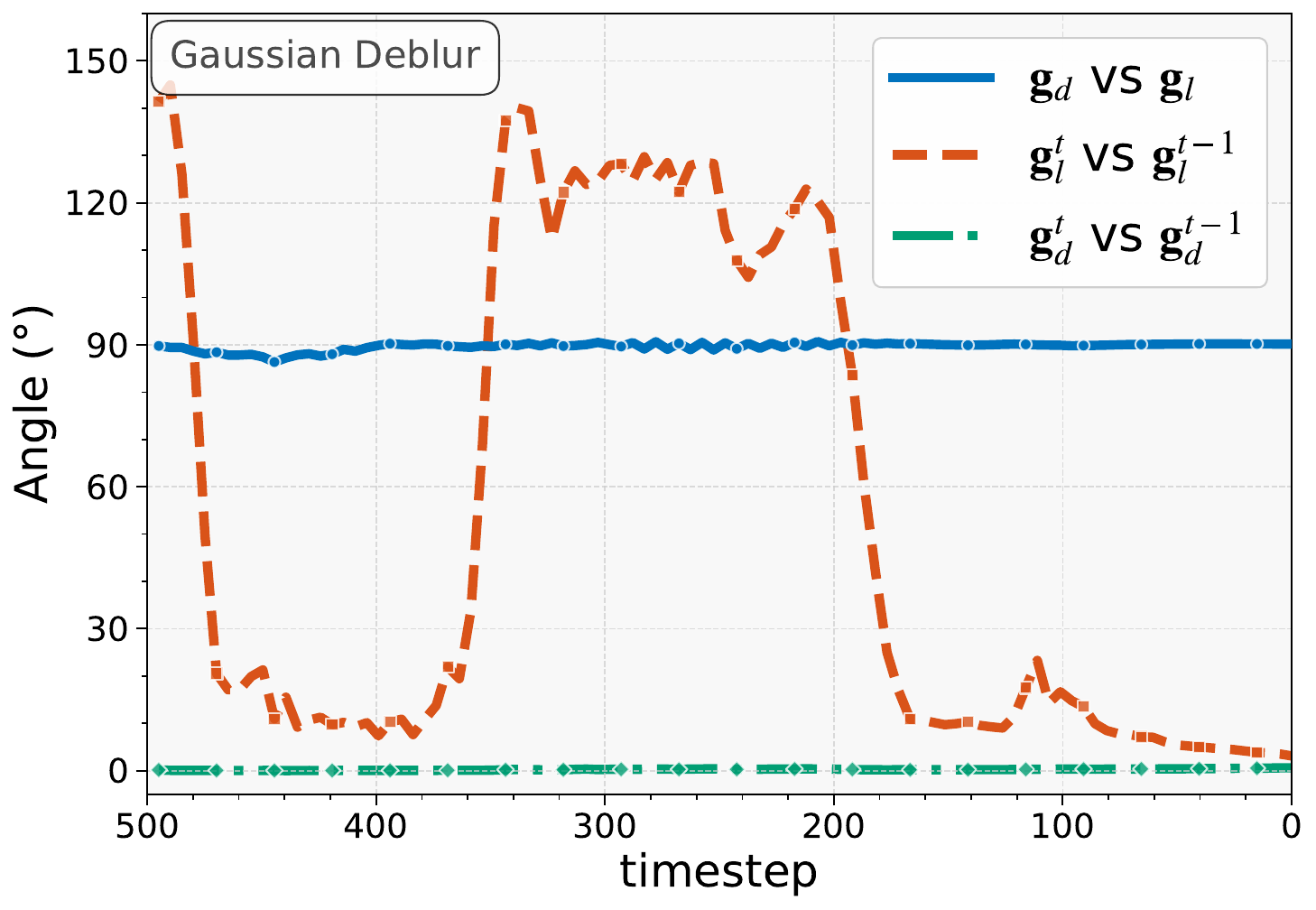}
        \includegraphics[width=0.49\linewidth]{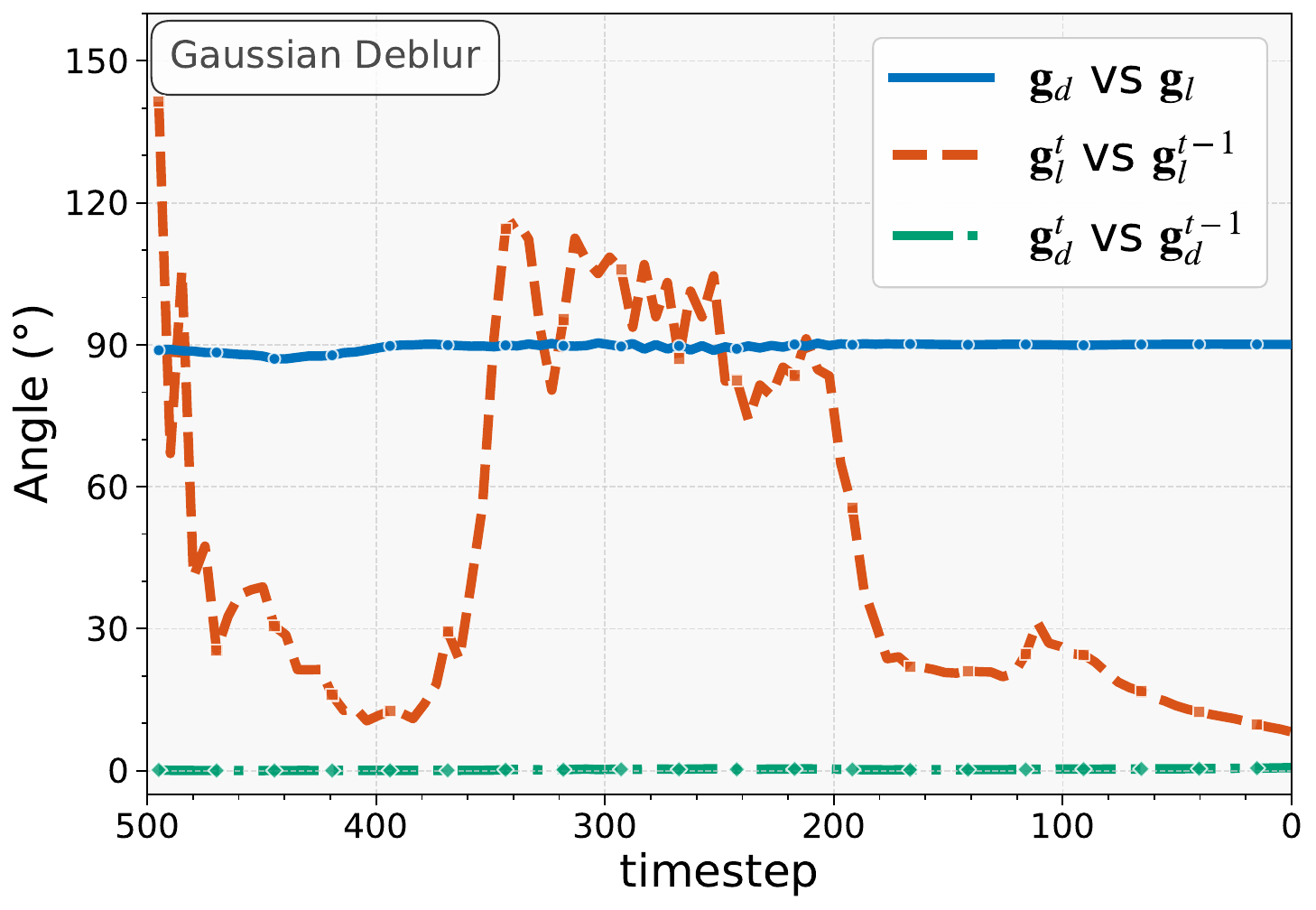}
        \caption{Sample 5}
    \end{subfigure}
        \begin{subfigure}{0.45\textwidth}
        \centering
        \includegraphics[width=0.49\linewidth]{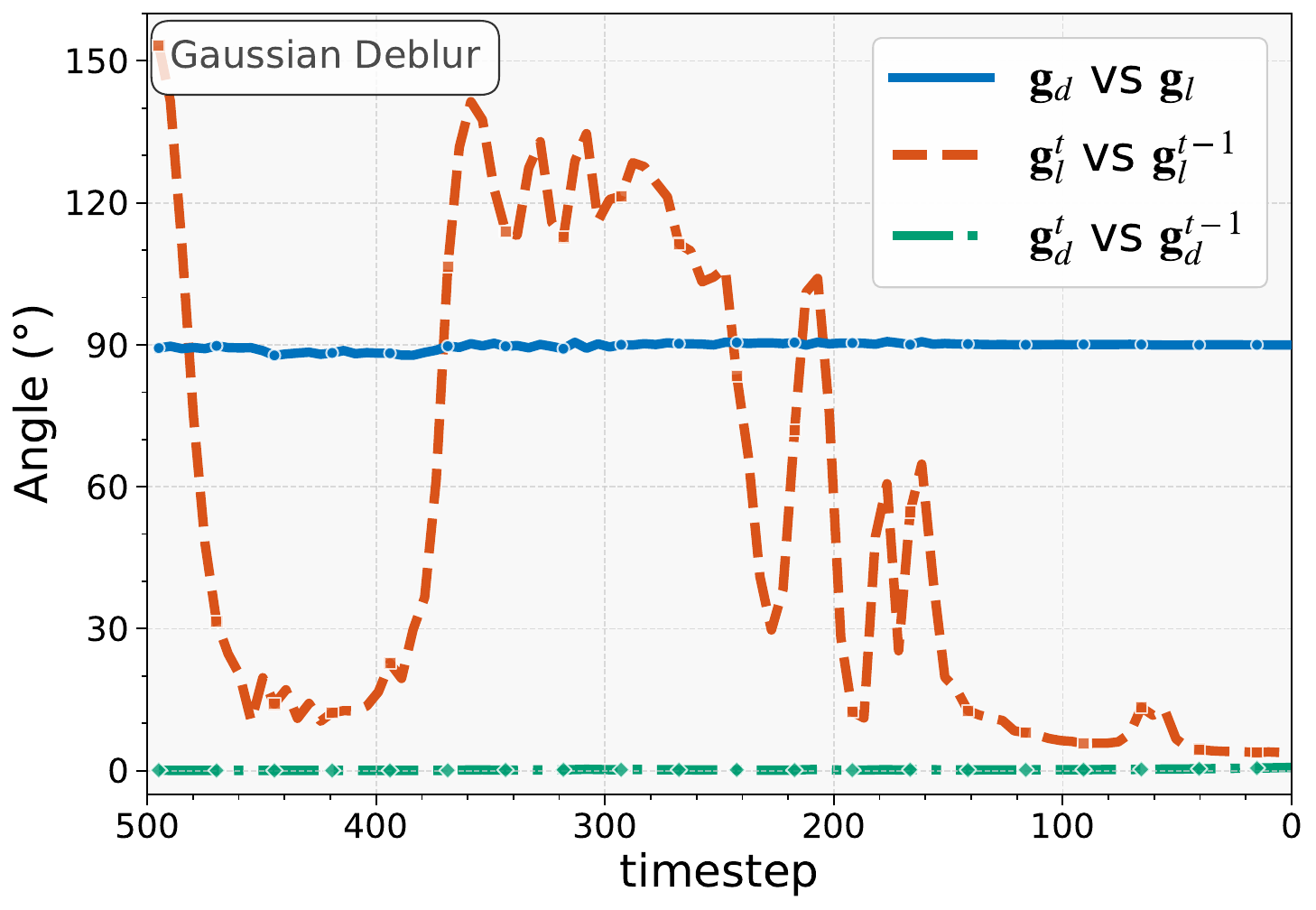}
        \includegraphics[width=0.49\linewidth]{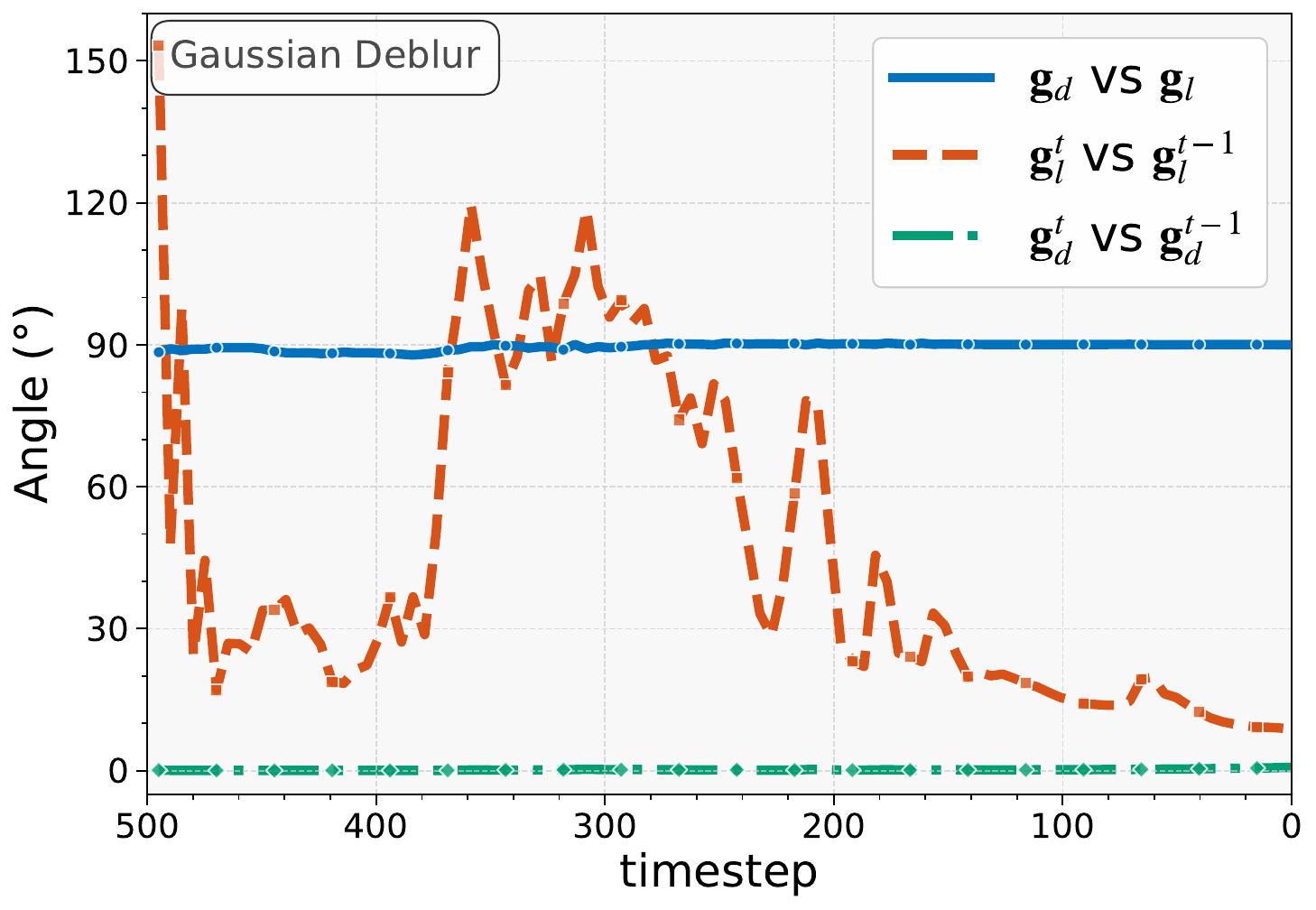}
        \caption{Sample 6}
    \end{subfigure}
    \caption{Gradient angles dynamics before (left) and after ADM smoothing (right) on Gaussian deblurring.}
\end{figure*}

\begin{figure*}[htbp]
    \centering
    \begin{subfigure}{0.45\textwidth}
        \centering
        \includegraphics[width=0.49\linewidth]{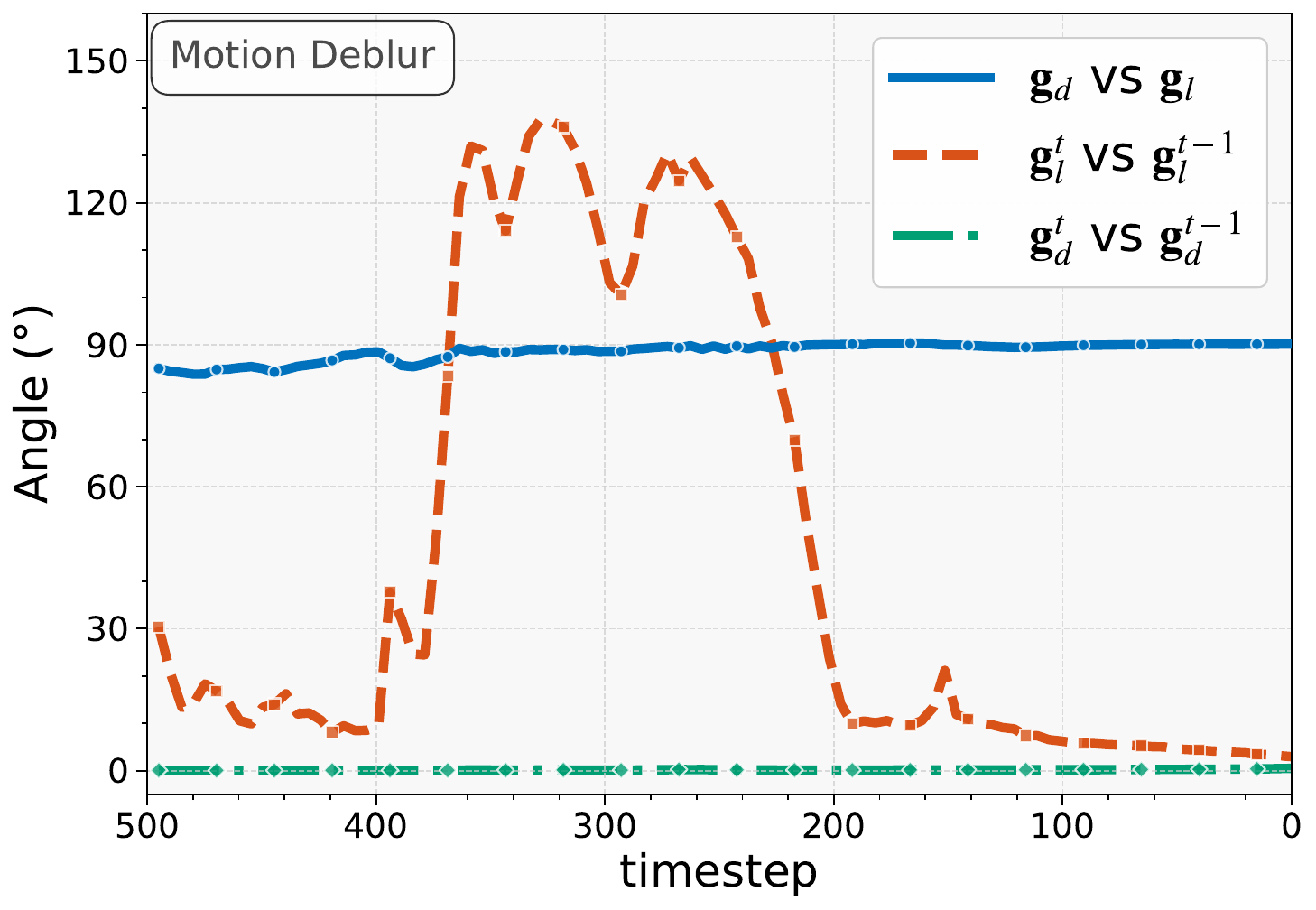}
        \includegraphics[width=0.49\linewidth]{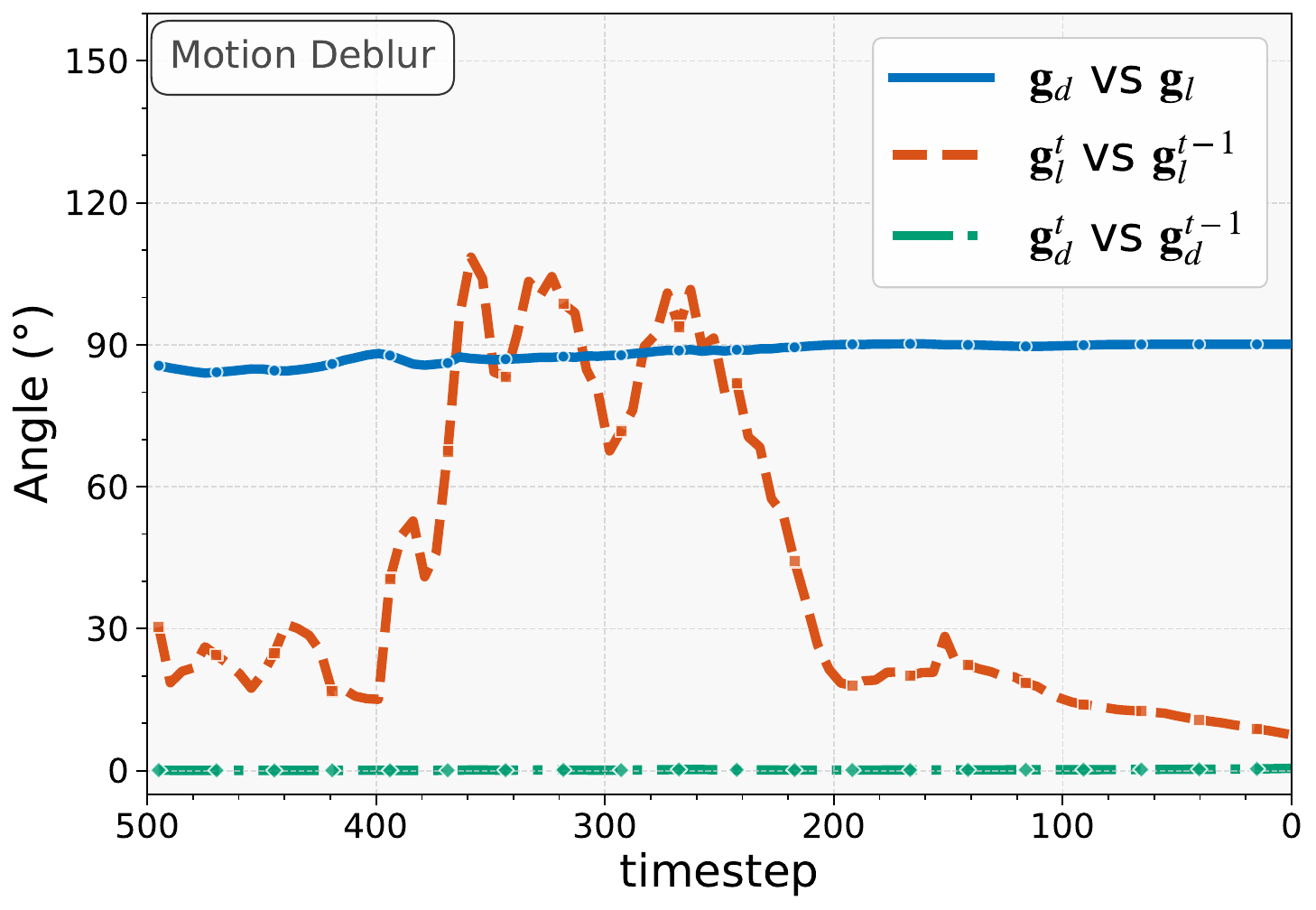}
        \caption{Sample 1}
    \end{subfigure}
    \begin{subfigure}{0.45\textwidth}
        \centering
        \includegraphics[width=0.49\linewidth]{before/Motion_Deblur_Column_2.pdf}
        \includegraphics[width=0.49\linewidth]{after/Motion_Deblur_Column_2.pdf}
        \caption{Sample 2}
    \end{subfigure}
        \begin{subfigure}{0.45\textwidth}
        \centering
        \includegraphics[width=0.49\linewidth]{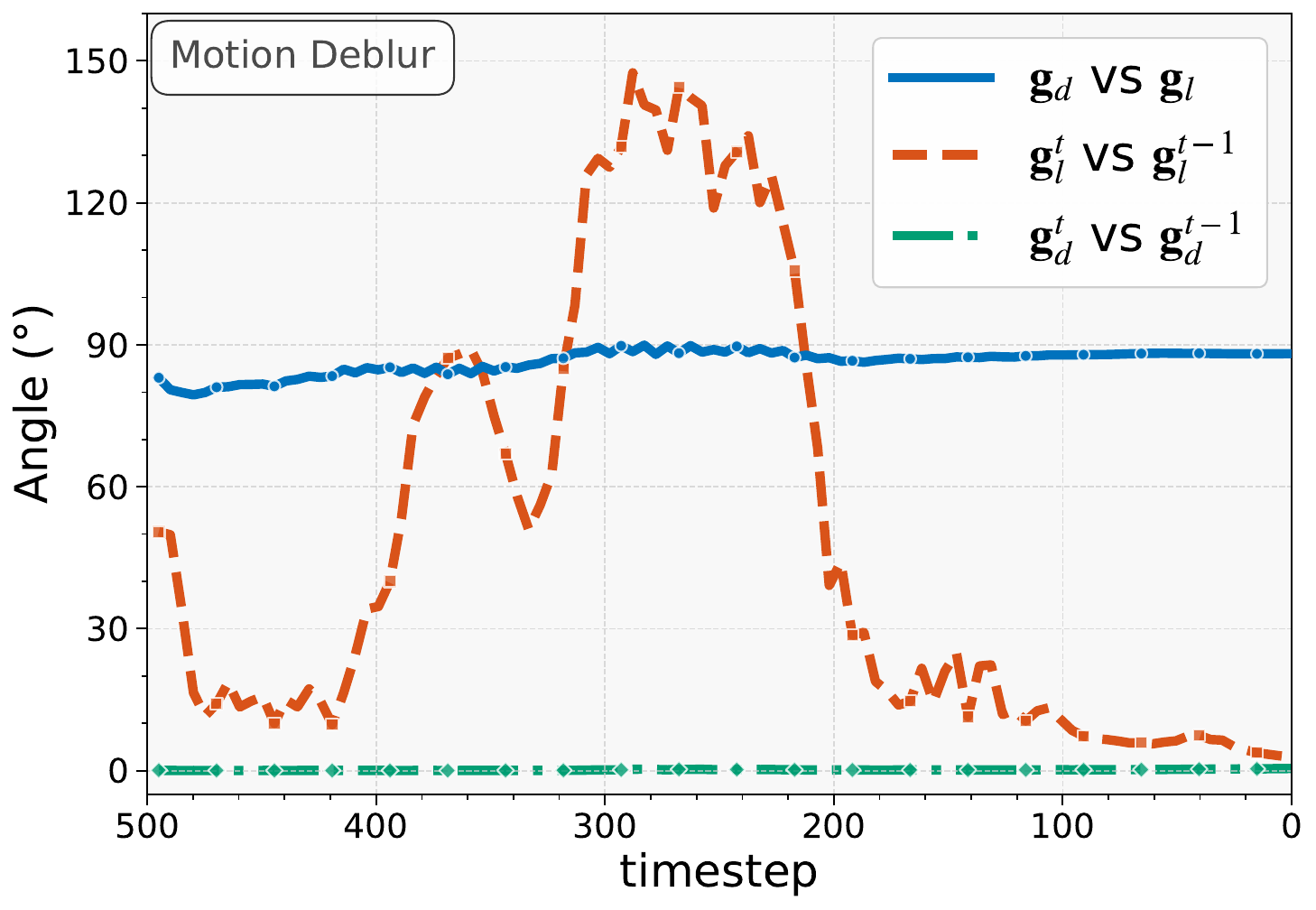}
        \includegraphics[width=0.49\linewidth]{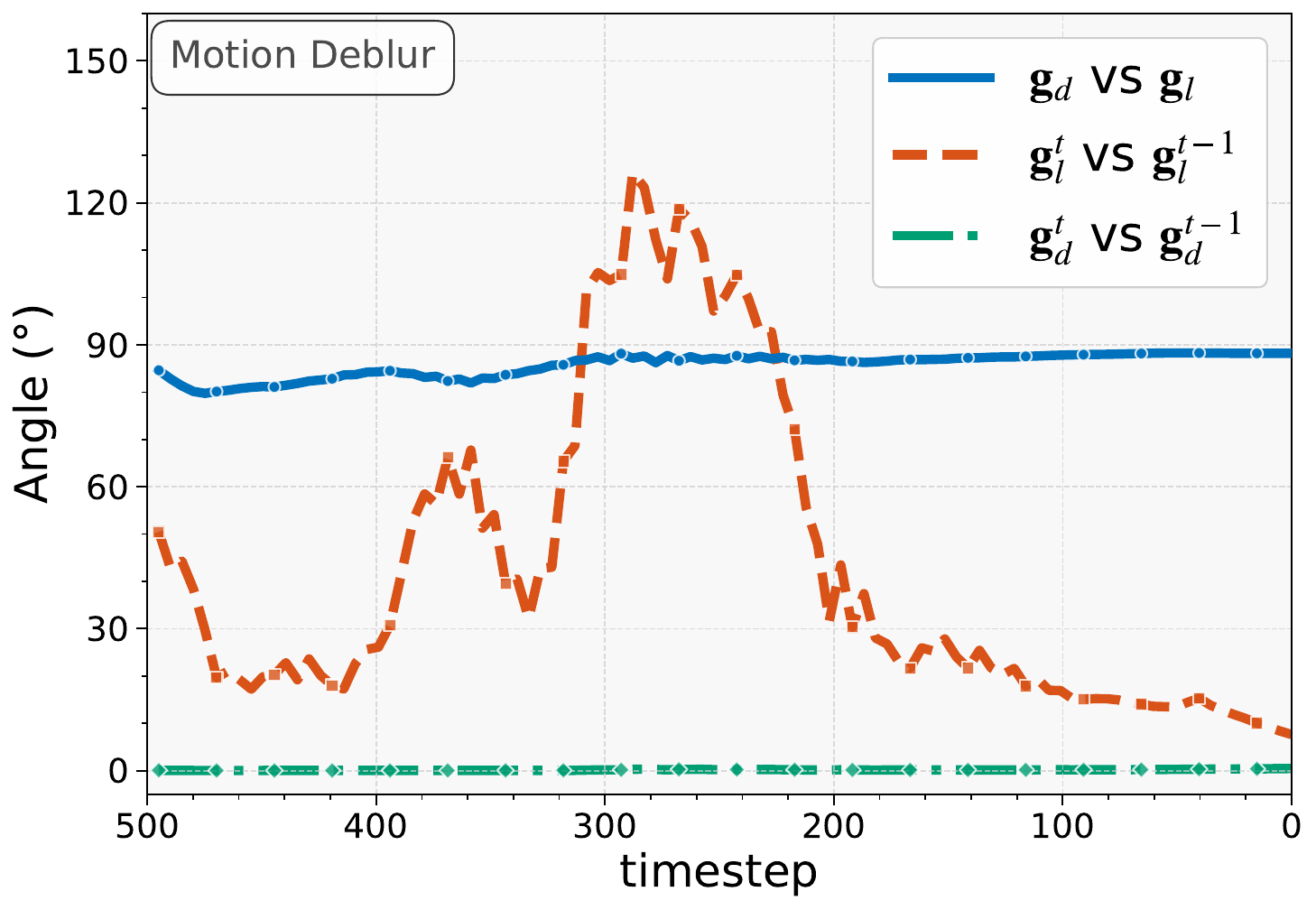}
        \caption{Sample 3}
    \end{subfigure}
    \begin{subfigure}{0.45\textwidth}
        \centering
        \includegraphics[width=0.49\linewidth]{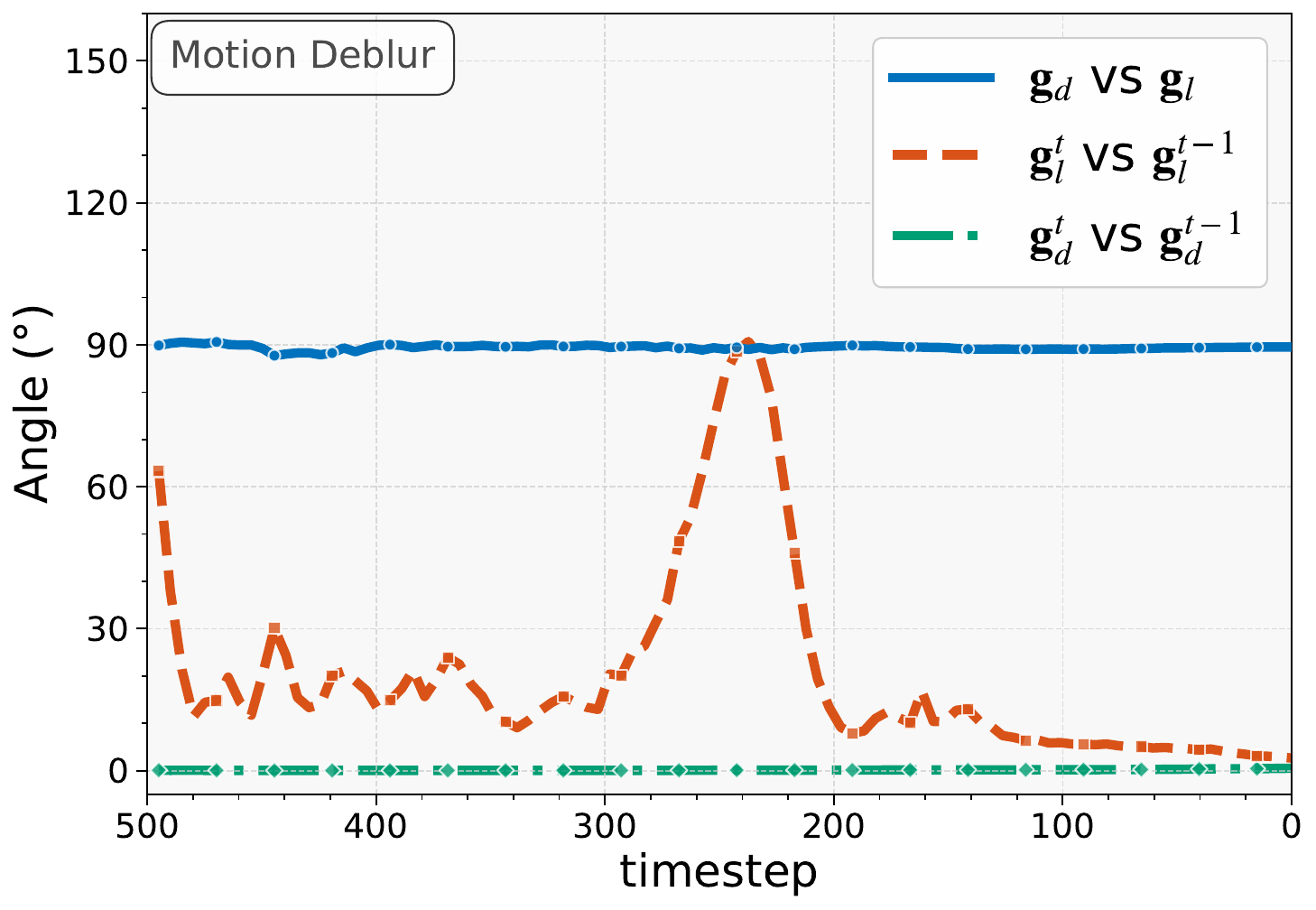}
        \includegraphics[width=0.49\linewidth]{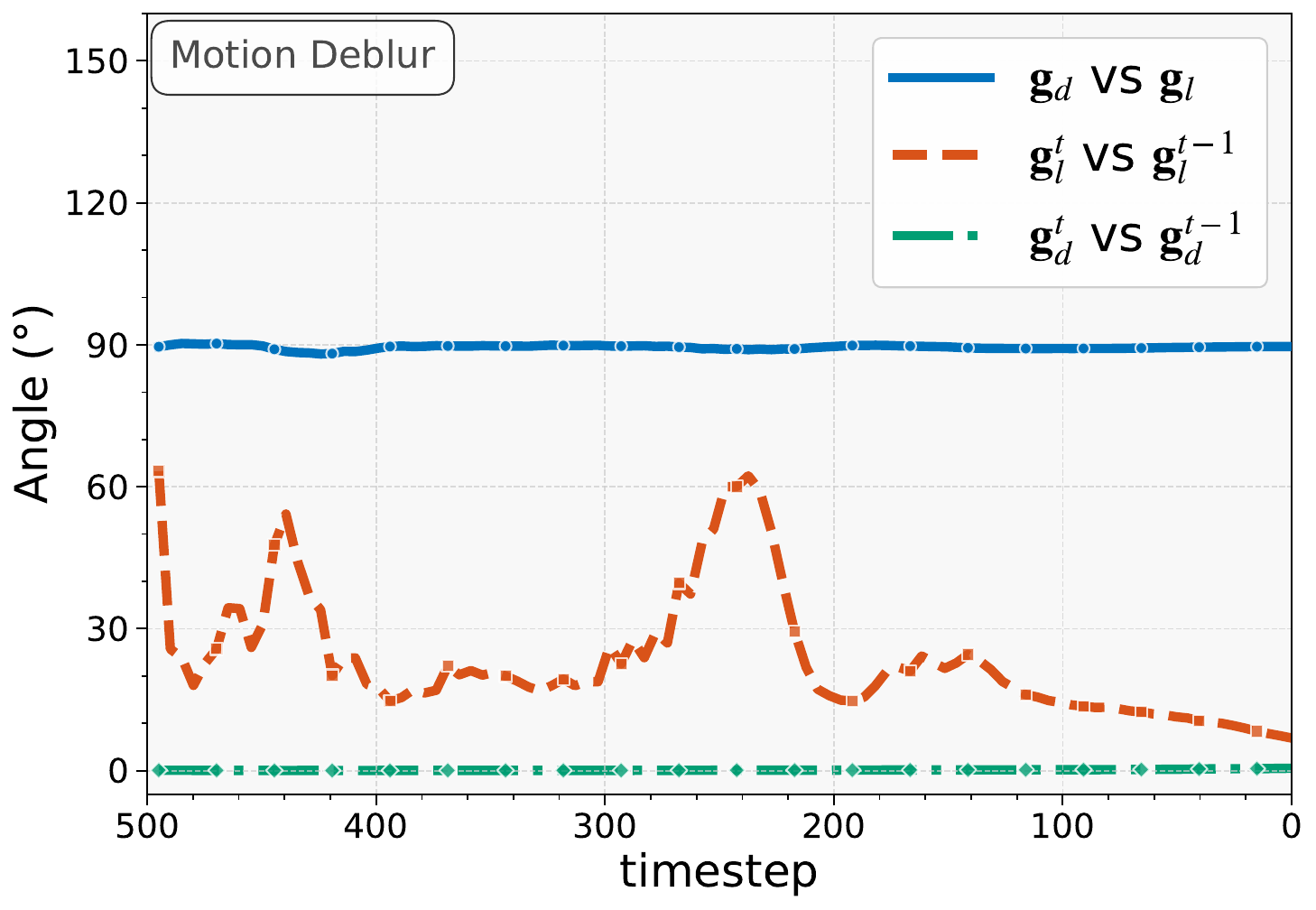}
        \caption{Sample 4}
    \end{subfigure}
        \begin{subfigure}{0.45\textwidth}
        \centering
        \includegraphics[width=0.49\linewidth]{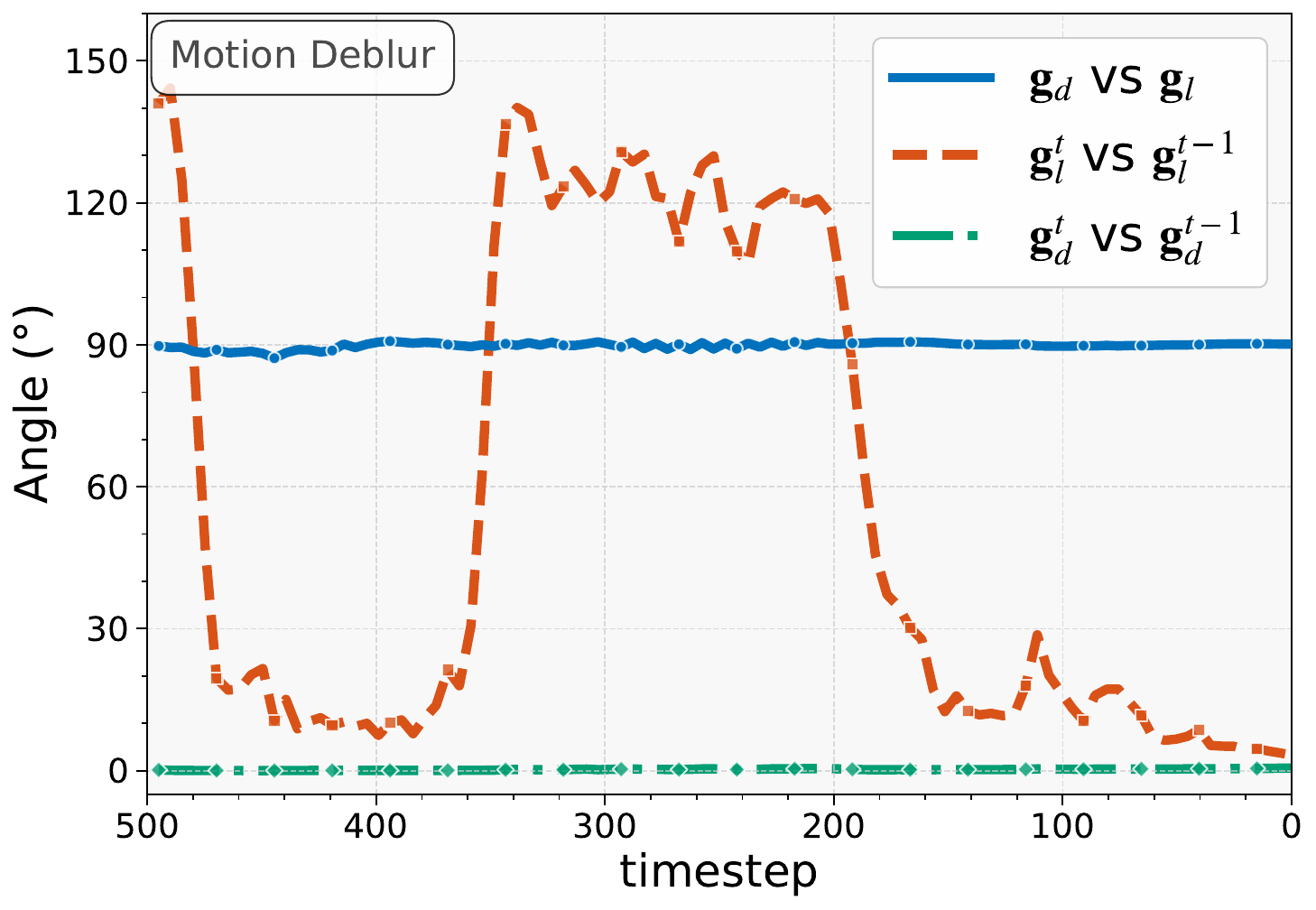}
        \includegraphics[width=0.49\linewidth]{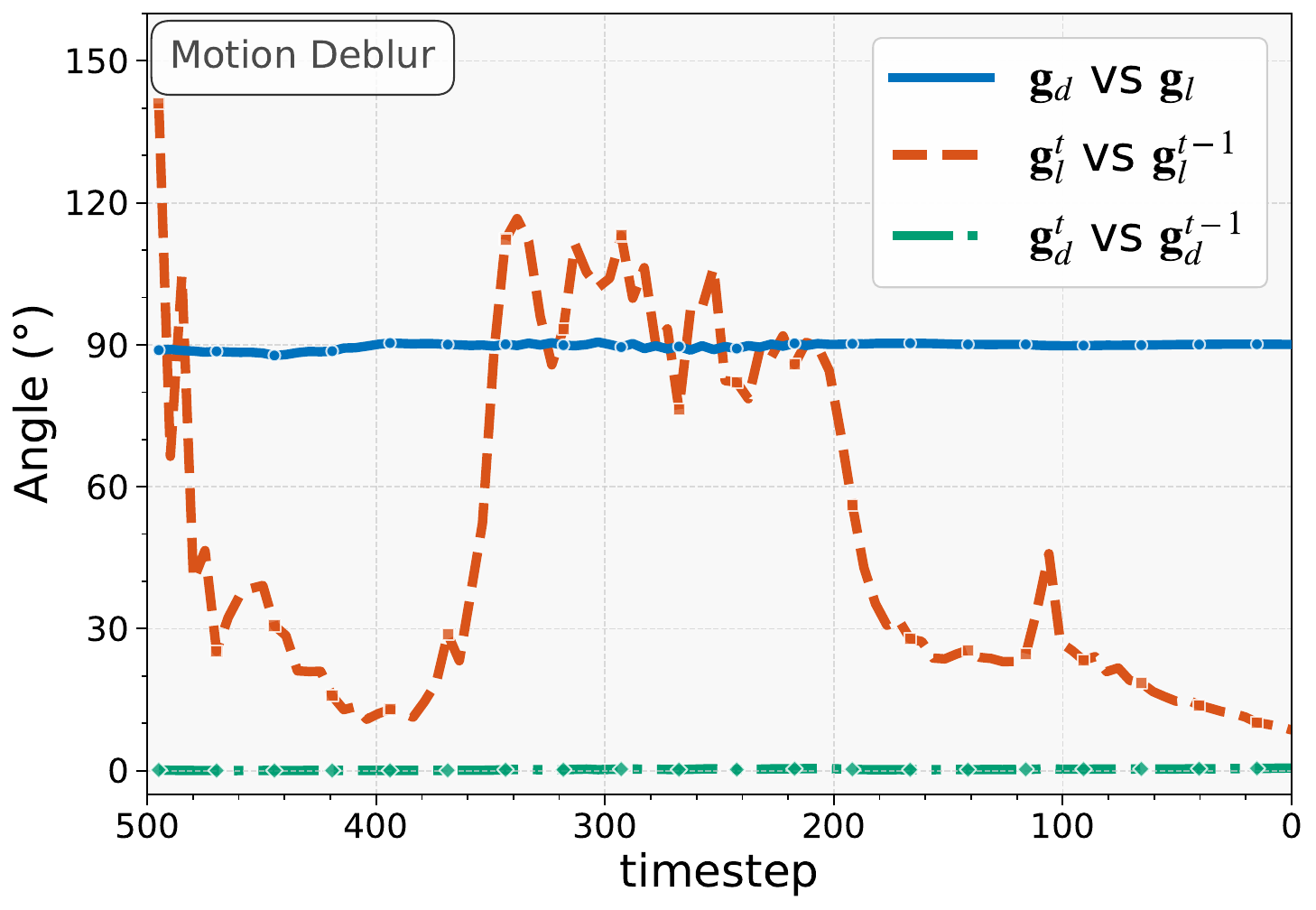}
        \caption{Sample 5}
    \end{subfigure}
        \begin{subfigure}{0.45\textwidth}
        \centering
        \includegraphics[width=0.49\linewidth]{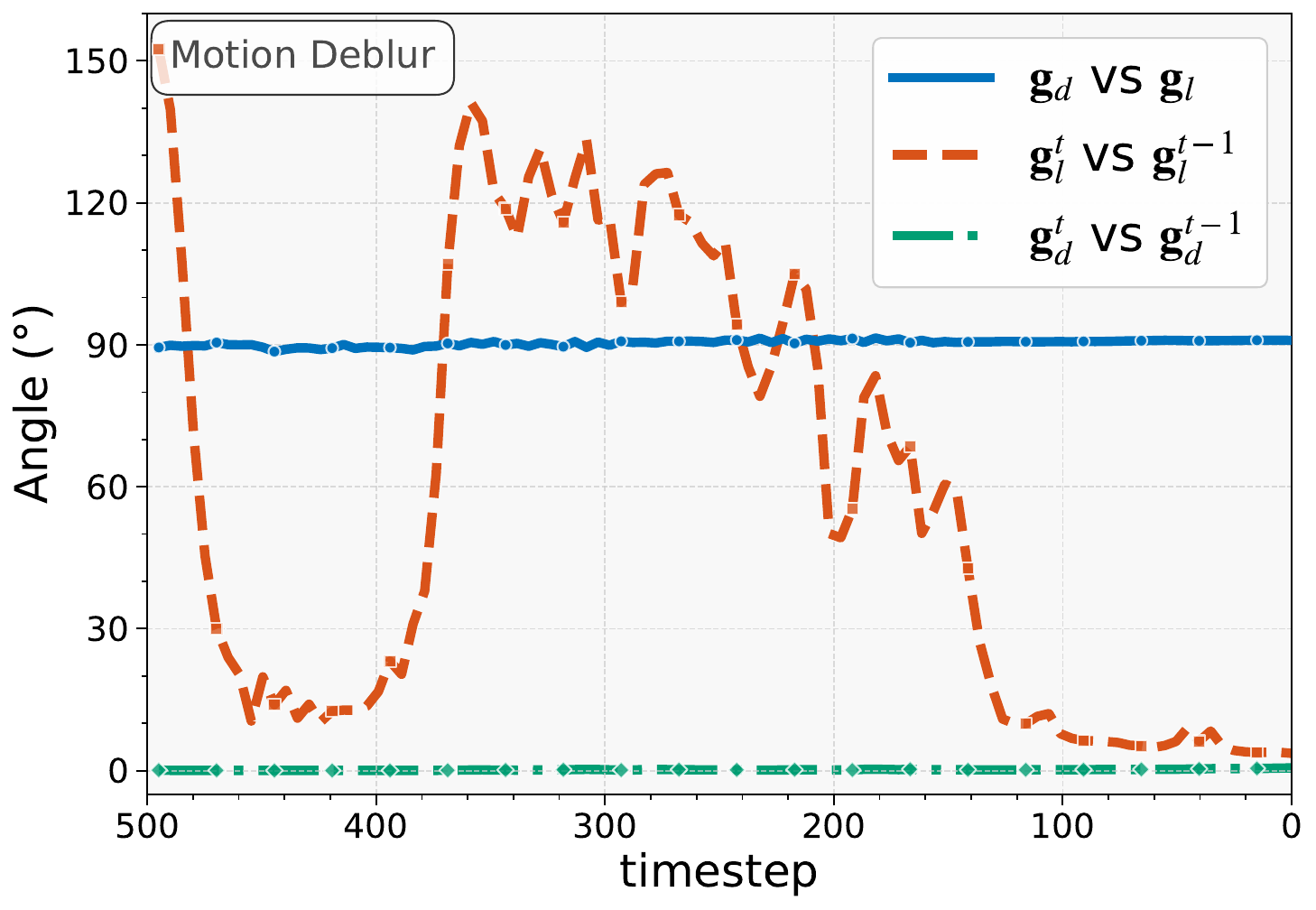}
        \includegraphics[width=0.49\linewidth]{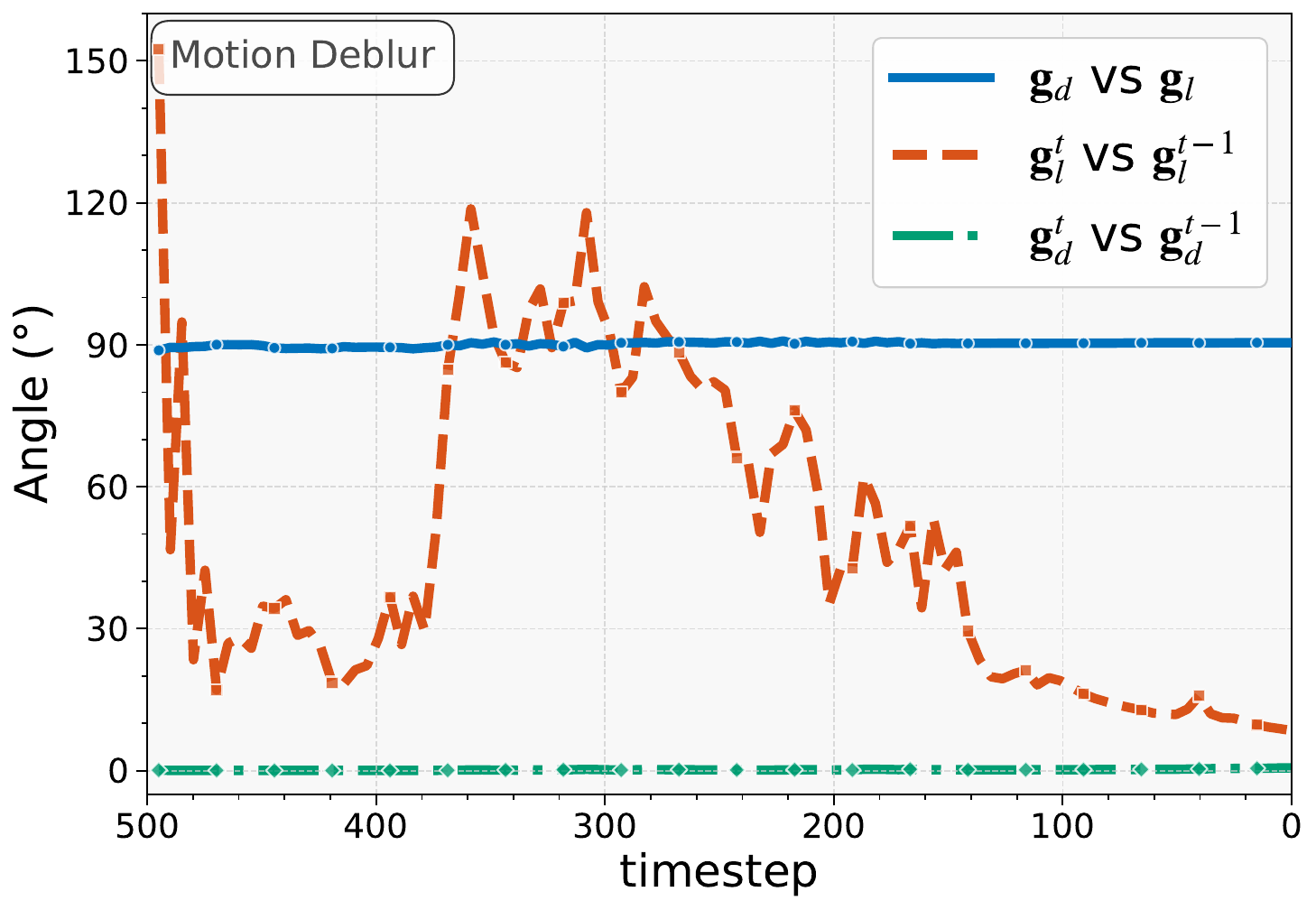}
        \caption{Sample 6}
    \end{subfigure}
    \caption{Gradient angles dynamics before (left) and after ADM smoothing (right) on motion deblurring.}
\end{figure*}

\begin{figure*}[htbp] 
    \centering
    \begin{subfigure}{0.45\textwidth}
        \centering
        \includegraphics[width=0.49\linewidth]{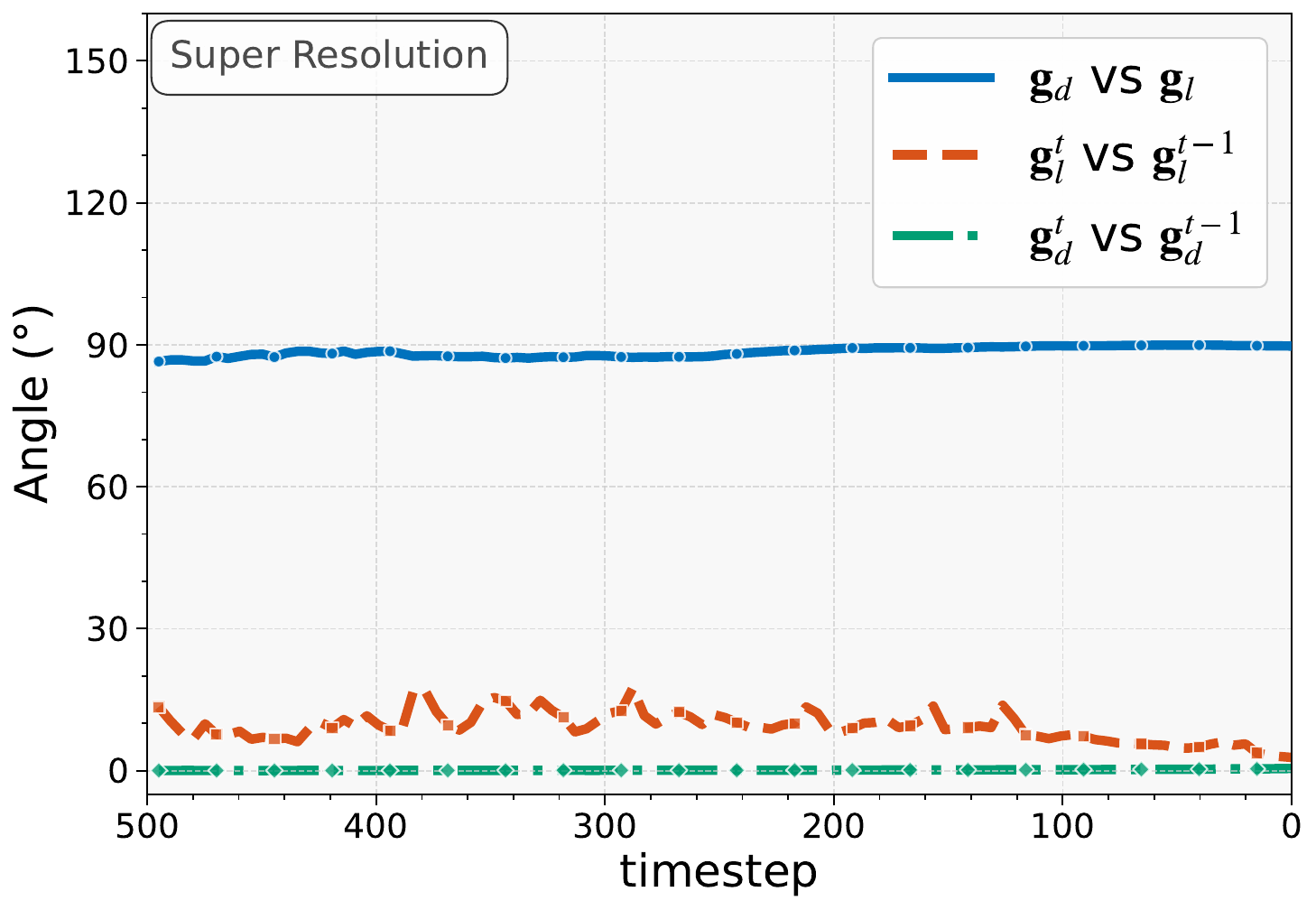}
        \includegraphics[width=0.49\linewidth]{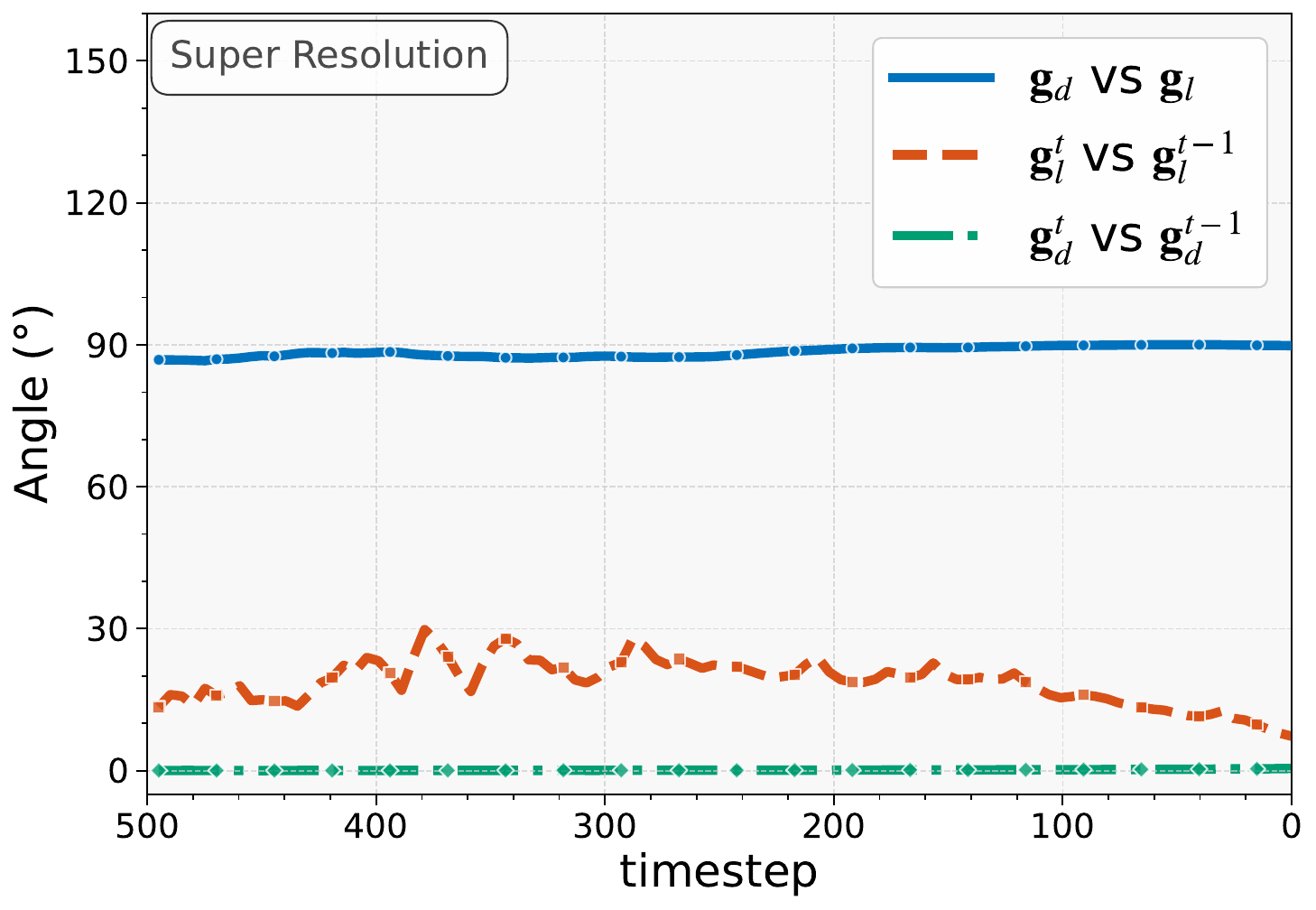}
        \caption{Sample 1}
    \end{subfigure}
    \begin{subfigure}{0.45\textwidth}
        \centering
        \includegraphics[width=0.49\linewidth]{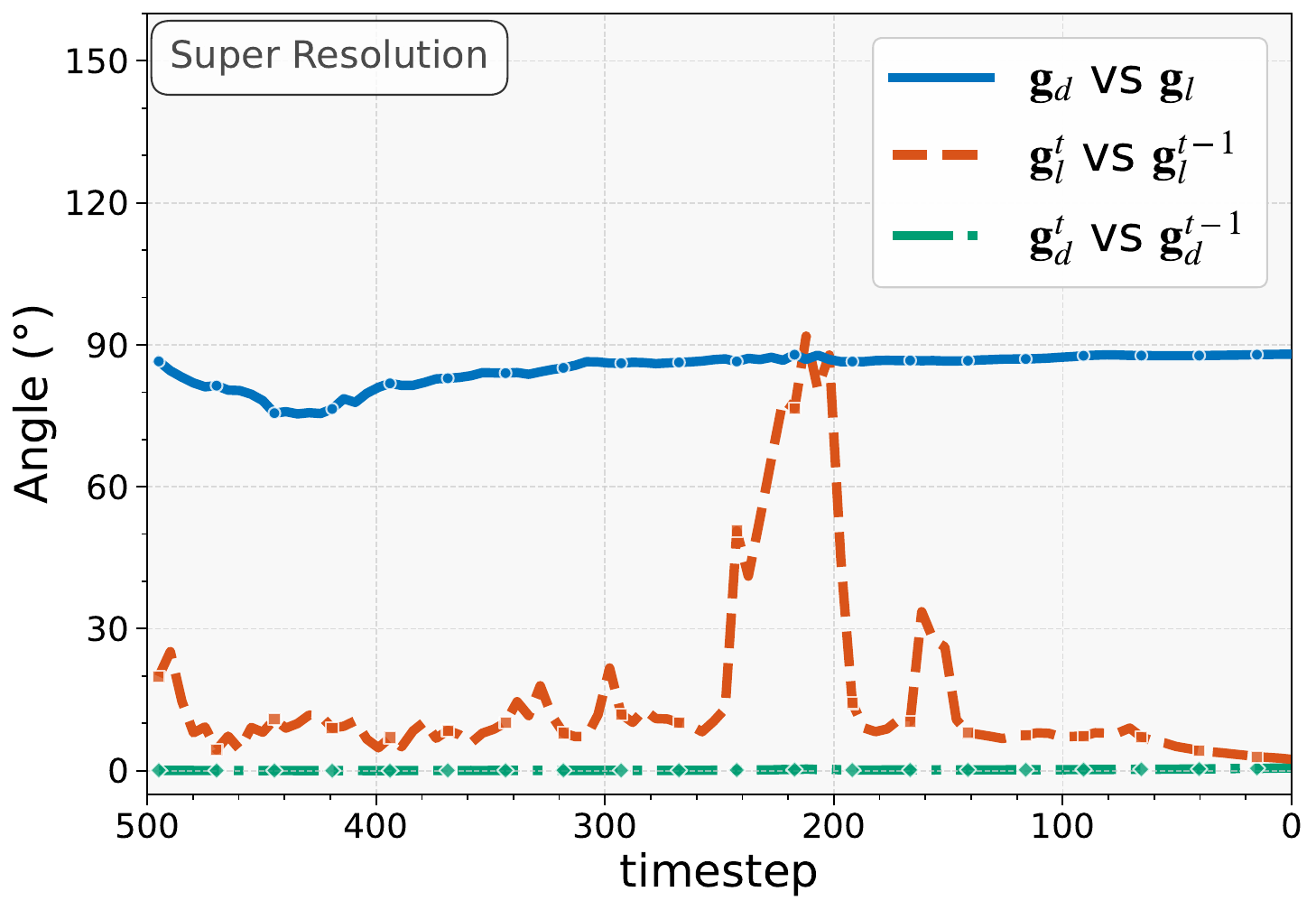}
        \includegraphics[width=0.49\linewidth]{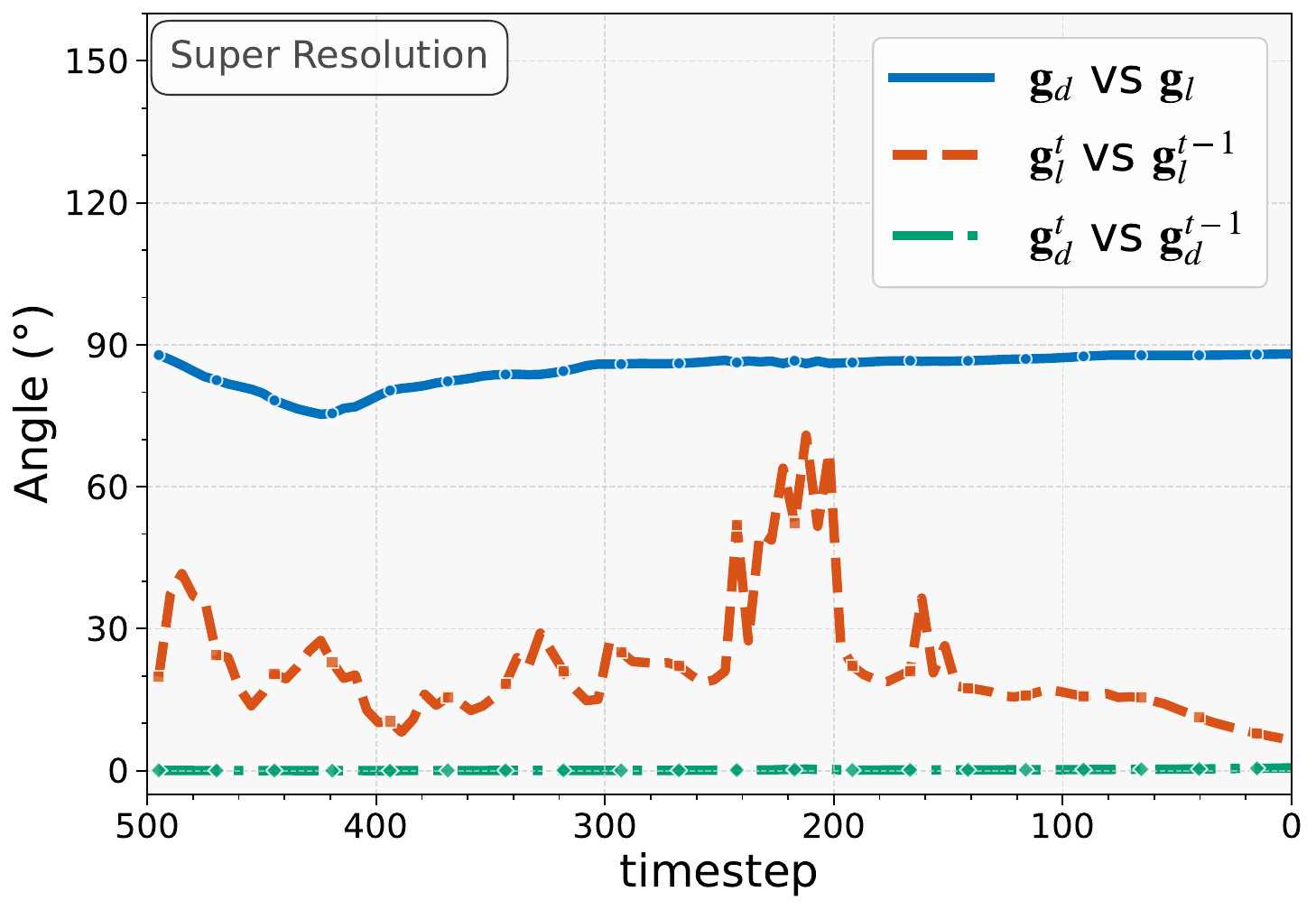}
        \caption{Sample 2}
    \end{subfigure}
        \begin{subfigure}{0.45\textwidth}
        \centering
        \includegraphics[width=0.49\linewidth]{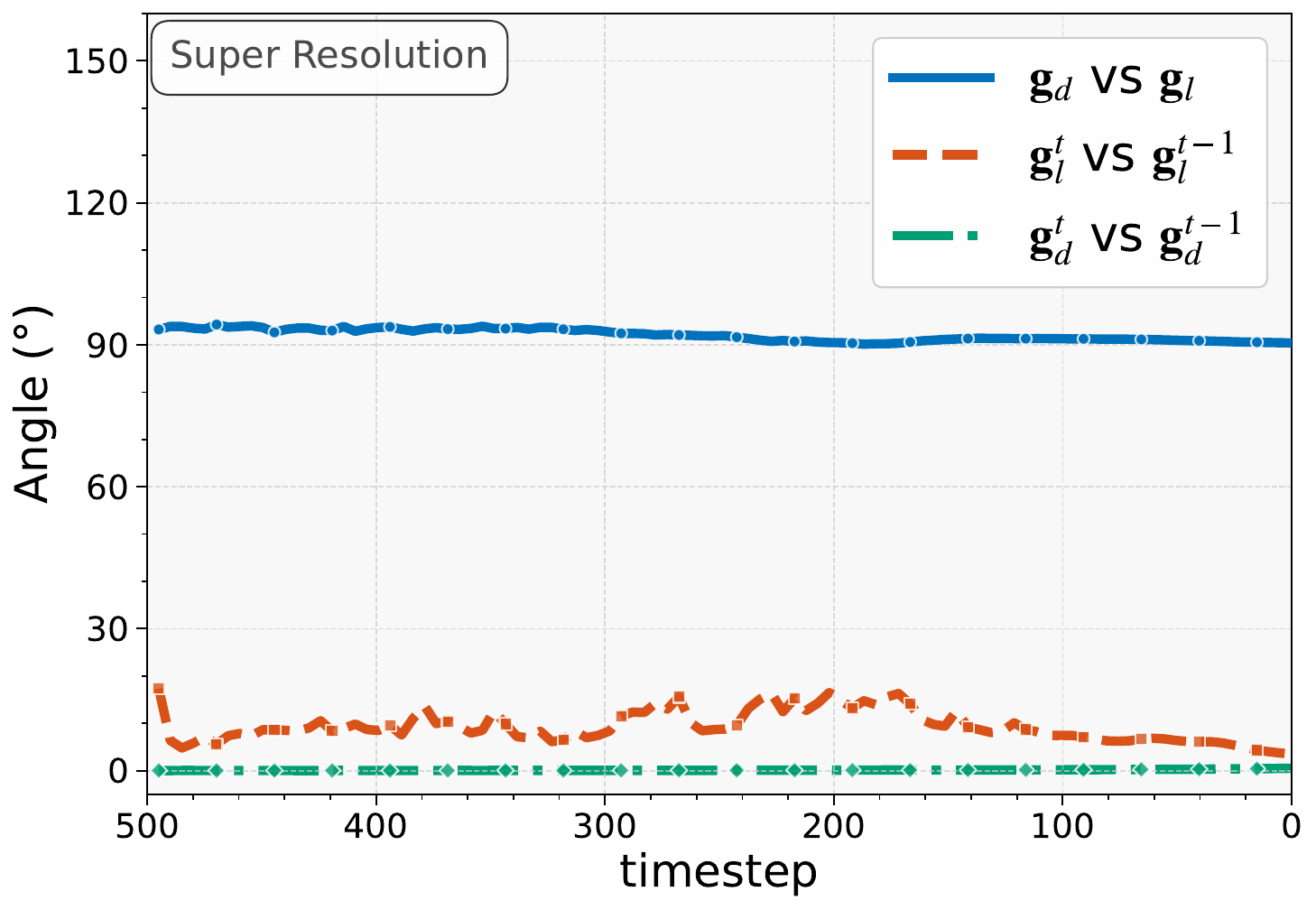}
        \includegraphics[width=0.49\linewidth]{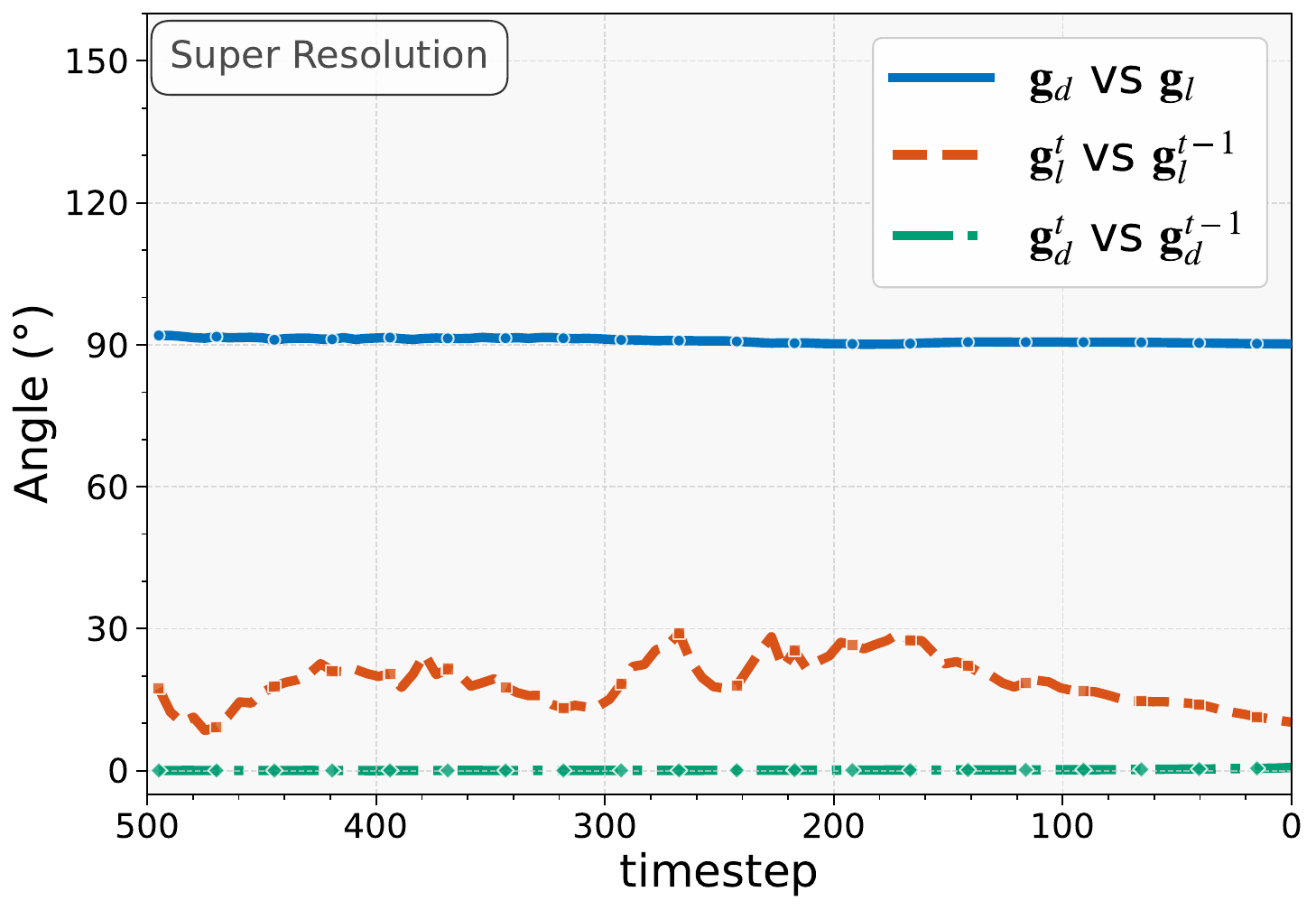}
        \caption{Sample 3}
    \end{subfigure}
    \begin{subfigure}{0.45\textwidth}
        \centering
        \includegraphics[width=0.49\linewidth]{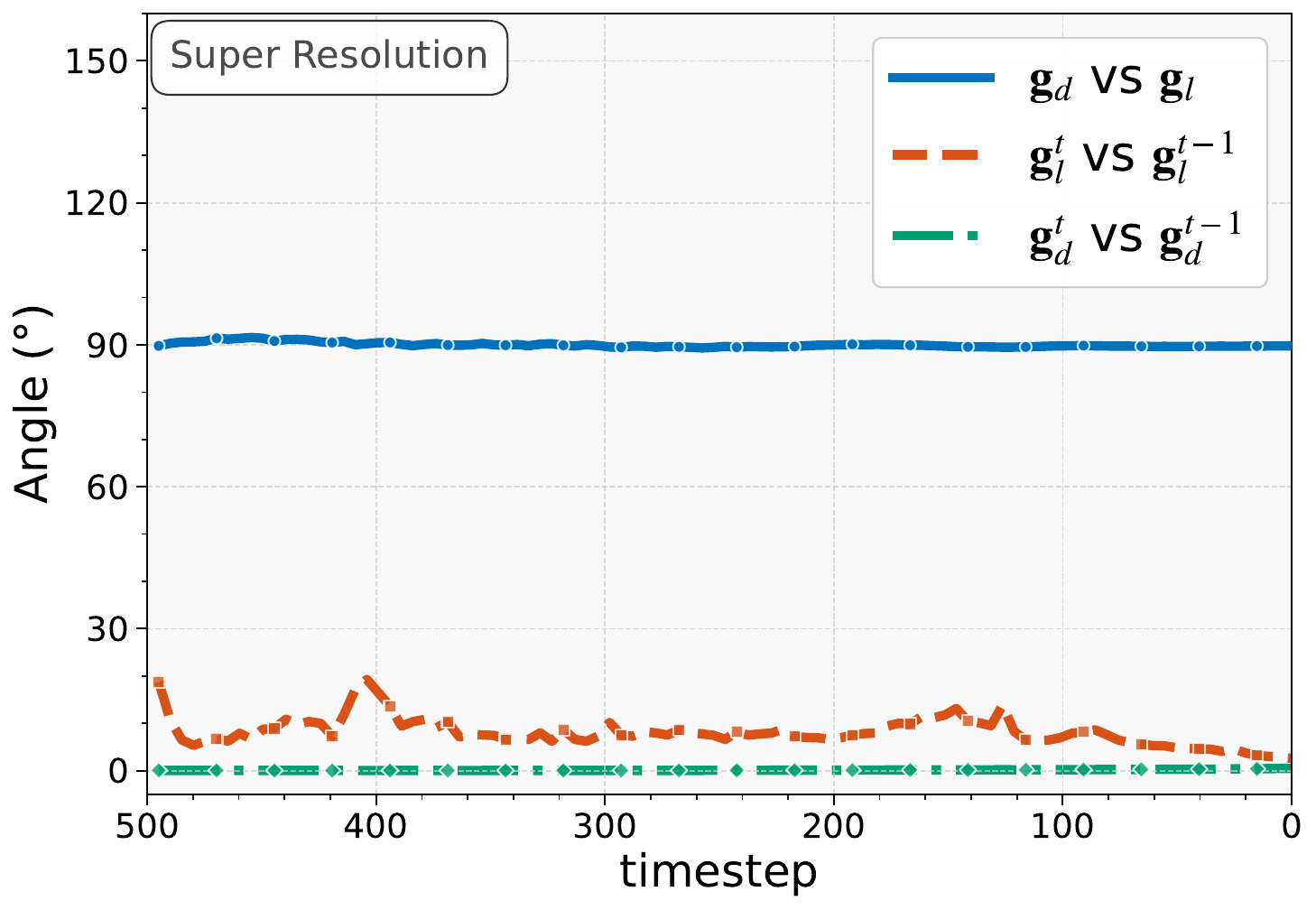}
        \includegraphics[width=0.49\linewidth]{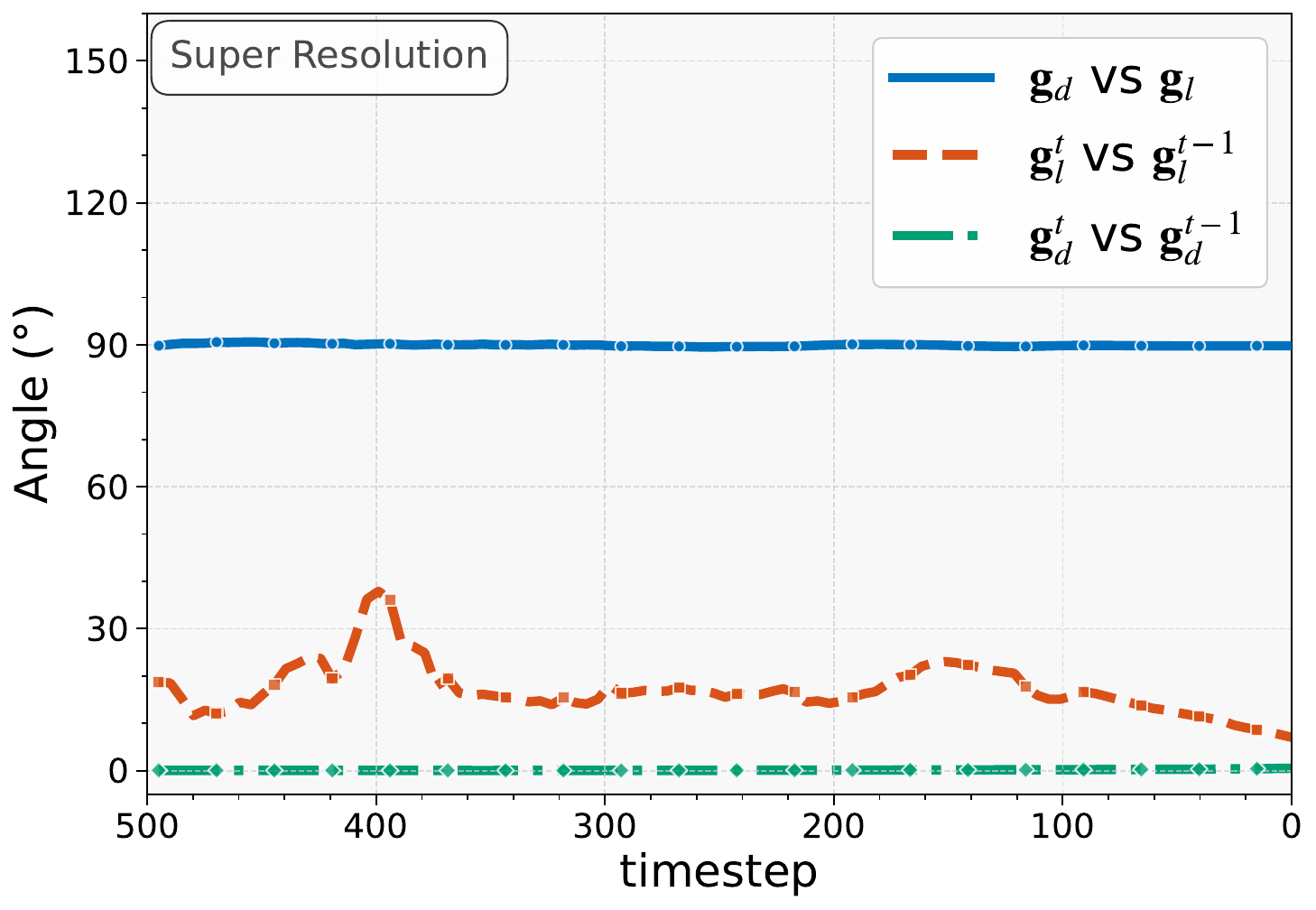}
        \caption{Sample 4}
    \end{subfigure}
        \begin{subfigure}{0.45\textwidth}
        \centering
        \includegraphics[width=0.49\linewidth]{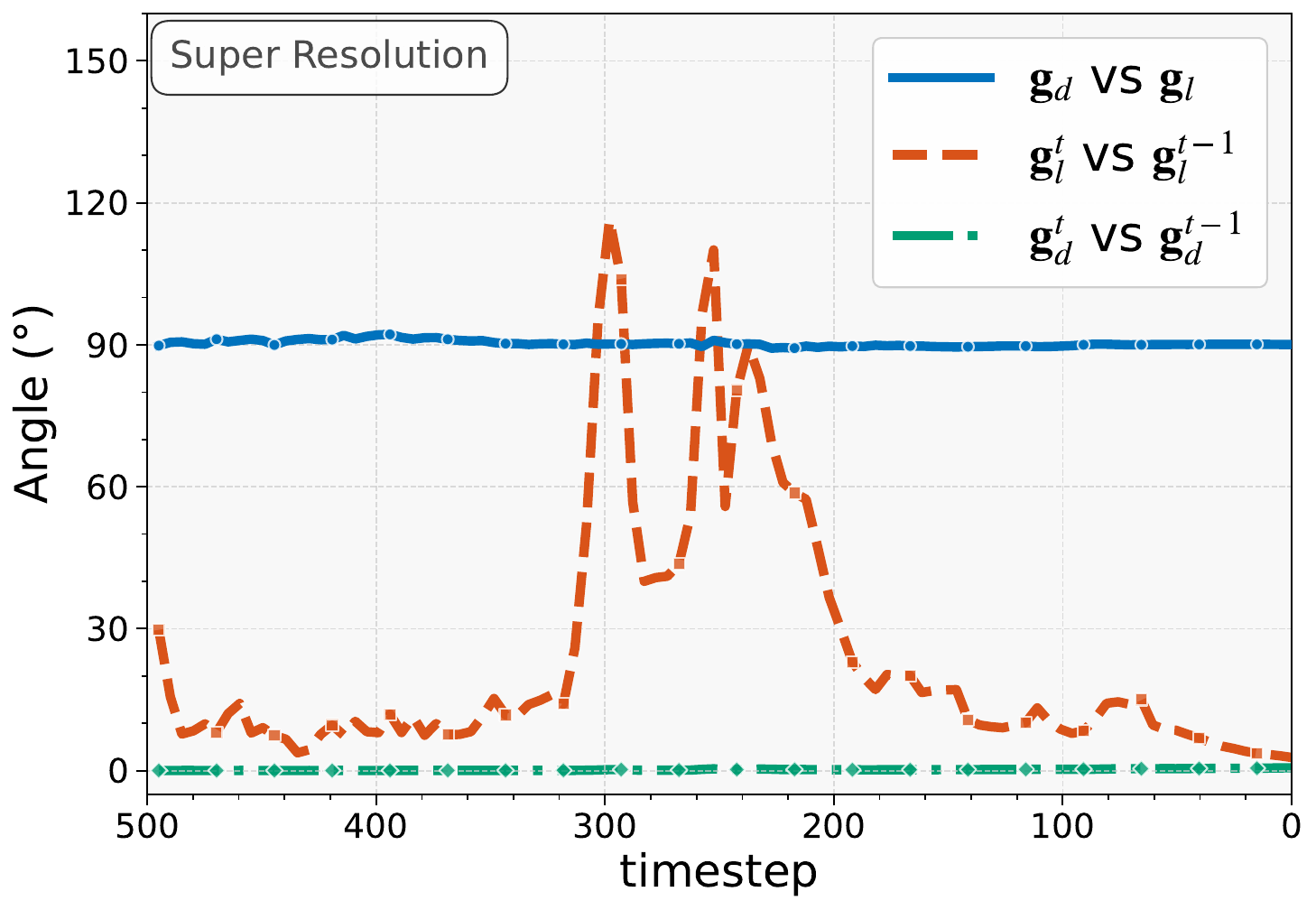}
        \includegraphics[width=0.49\linewidth]{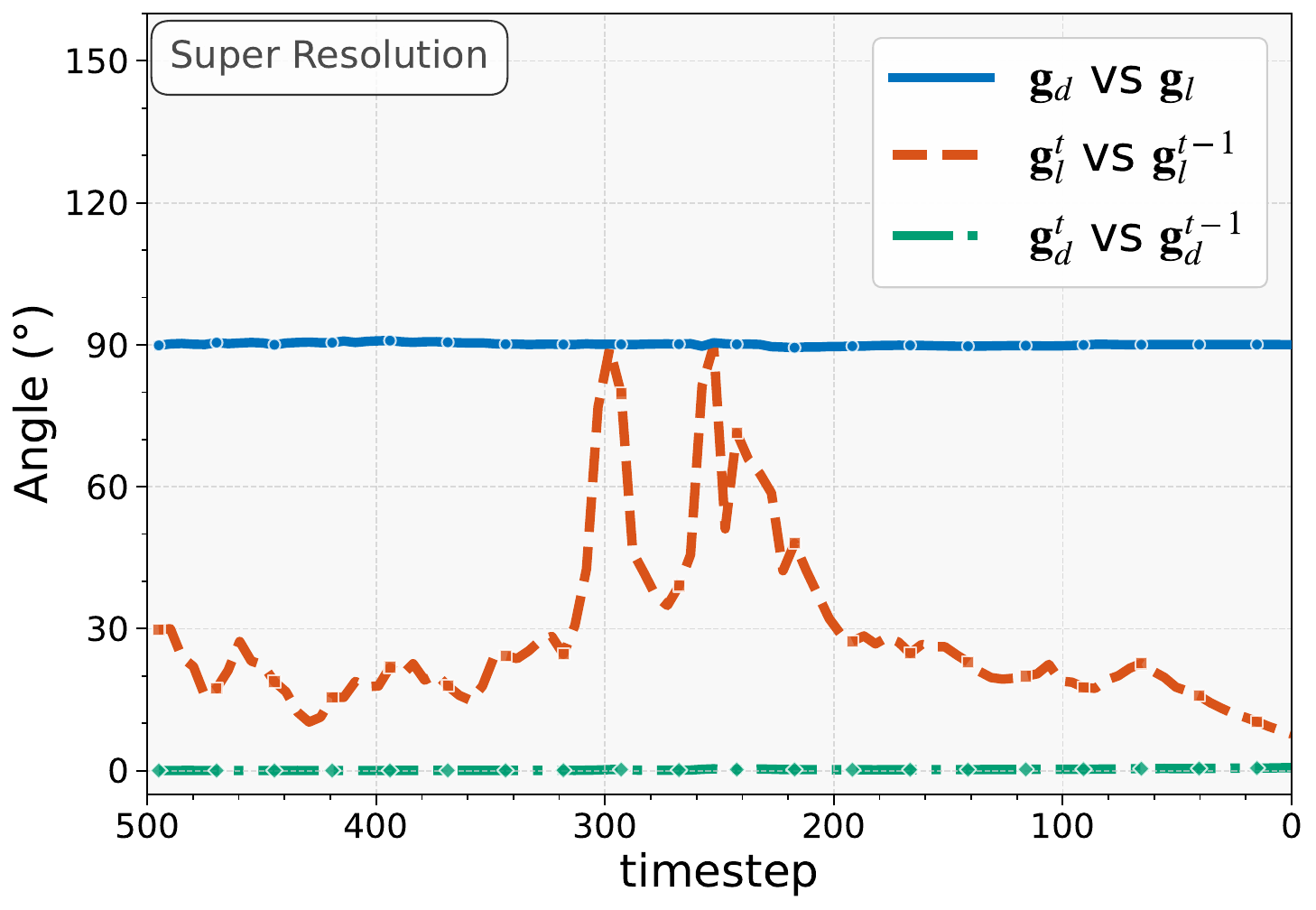}
        \caption{Sample 5}
    \end{subfigure}
        \begin{subfigure}{0.45\textwidth}
        \centering
        \includegraphics[width=0.49\linewidth]{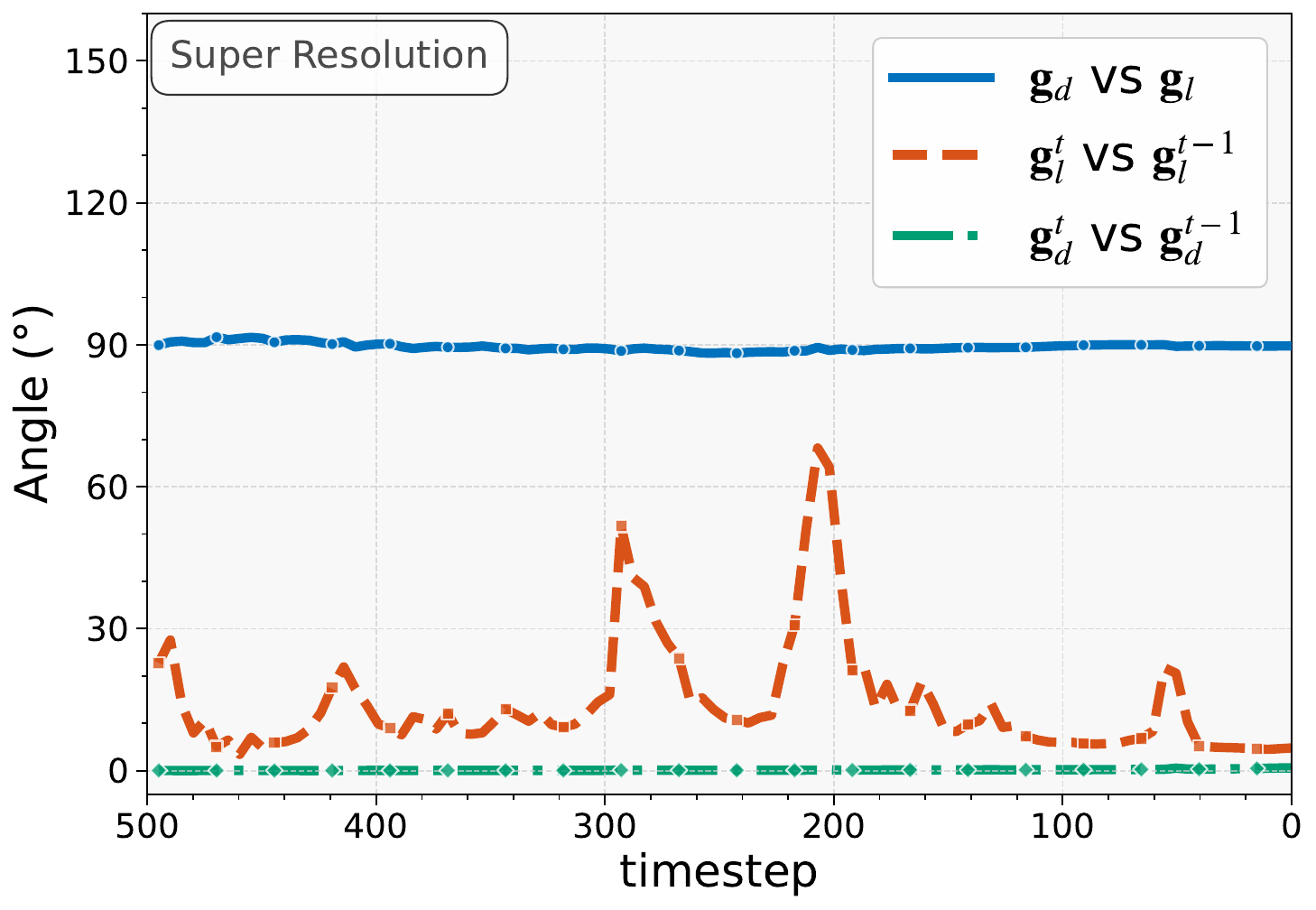}
        \includegraphics[width=0.49\linewidth]{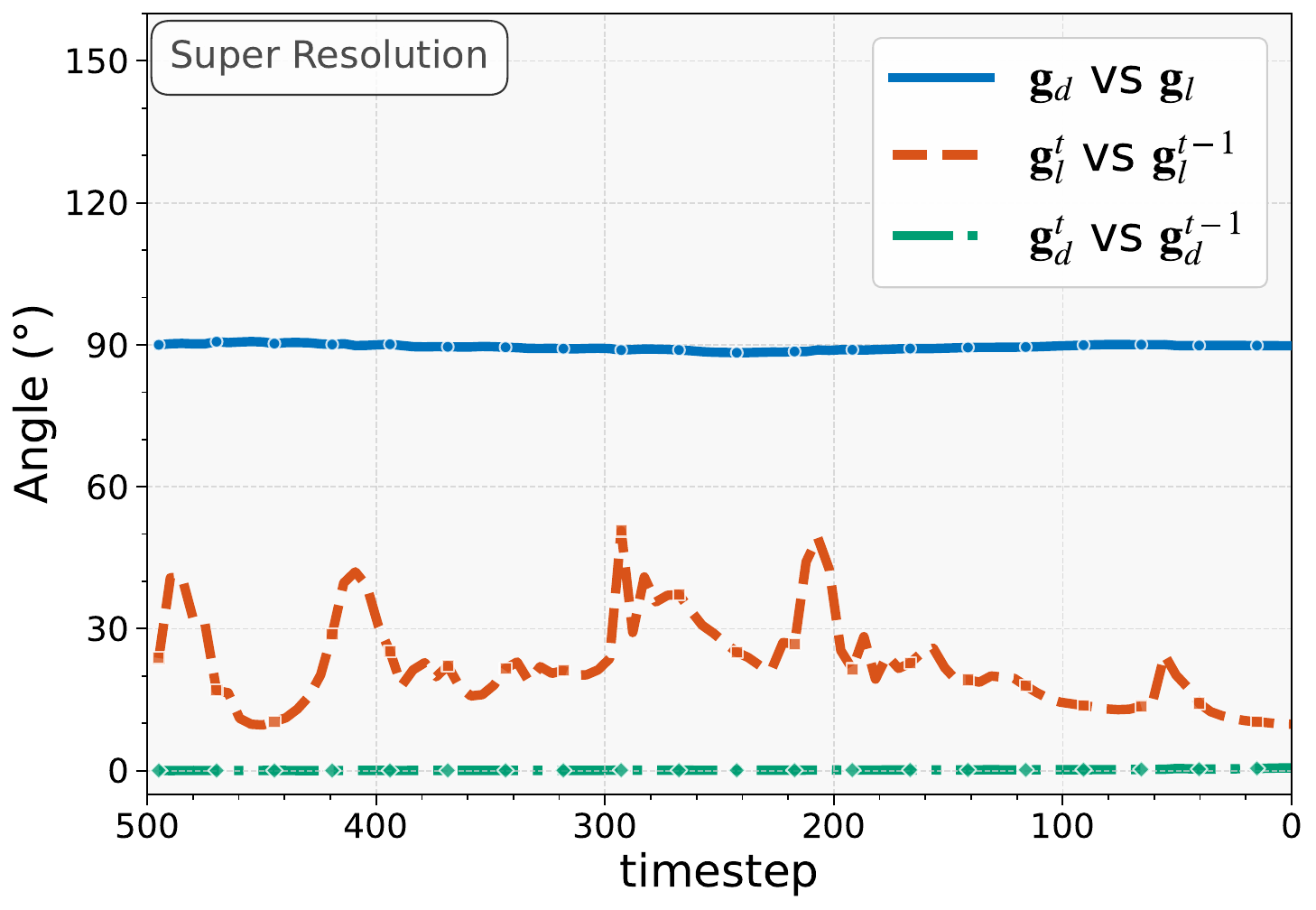}
        \caption{Sample 6}
    \end{subfigure}
    \caption{Gradient angles dynamics before (left) and after ADM smoothing (right) on SR $\times$4.}
    \label{fig:last4}
\end{figure*}


\end{document}